\def\multiset#1#2{\ensuremath{\left(\kern-.3em\left(\genfrac{}{}{0pt}{}{#1}{#2}\right)\kern-.3em\right)}}
\theoremstyle{plain}
\newtheorem{theorem}{Theorem}
\newtheorem{corollary}{Corollary}
\newtheorem{lemma}{Lemma}
\newtheorem*{remark}{Remark}
\theoremstyle{definition}
\newtheorem*{theorem*}{Theorem}
\newtheorem*{corollary*}{Corollary}
\icmltitlerunning{Noise and Fluctuation of Finite Learning Rate Stochastic Gradient Descent}
\begin{document}

\twocolumn[
%\icmltitle{Stochastic Gradient Descent with Large Learning Rate}
\icmltitle{Noise and Fluctuation of Finite Learning Rate Stochastic Gradient Descent}
% It is OKAY to include author information, even for blind
% submissions: the style file will automatically remove it for you
% unless you've provided the [accepted] option to the icml2021
% package.

% List of affiliations: The first argument should be a (short)
% identifier you will use later to specify author affiliations
% Academic affiliations should list Department, University, City, Region, Country
% Industry affiliations should list Company, City, Region, Country

% You can specify symbols, otherwise they are numbered in order.
% Ideally, you should not use this facility. Affiliations will be numbered
% in order of appearance and this is the preferred way.
\icmlsetsymbol{equal}{*}

\begin{icmlauthorlist}
\icmlauthor{Kangqiao Liu}{equal,to}
\icmlauthor{Liu Ziyin}{equal,to}
\icmlauthor{Masahito Ueda}{to,cems,ipi}
\end{icmlauthorlist}

\icmlaffiliation{to}{Department of Physics, the University of Tokyo, Japan}
\icmlaffiliation{cems}{RIKEN CEMS, Japan}
\icmlaffiliation{ipi}{Institute for Physics of Intelligence, the University of Tokyo, Japan}

\icmlcorrespondingauthor{Kangqiao Liu}{kqliu@cat.phys.s.u-tokyo.ac.jp}
\icmlcorrespondingauthor{Liu Ziyin}{zliu@cat.phys.s.u-tokyo.ac.jp}

% You may provide any keywords that you
% find helpful for describing your paper; these are used to populate
% the "keywords" metadata in the PDF but will not be shown in the document
\icmlkeywords{Machine Learning, ICML}

\vskip 0.3in
]

% this must go after the closing bracket ] following \twocolumn[ ...

% This command actually creates the footnote in the first column
% listing the affiliations and the copyright notice.
% The command takes one argument, which is text to display at the start of the footnote.
% The \icmlEqualContribution command is standard text for equal contribution.
% Remove it (just {}) if you do not need this facility.

%\printAffiliationsAndNotice{}  % leave blank if no need to mention equal contribution
\printAffiliationsAndNotice{\icmlEqualContribution} % otherwise use the standard text.

\begin{abstract}
%As a simple and efficient optimization method in deep learning, stochastic gradient descent (SGD) has attracted tremendous attention. 
In the vanishing learning rate regime, stochastic gradient descent (SGD) is now relatively well understood. %However, the continuous-time predictions are unlikely to reflect the experimental observations well because the practice often runs with non-vanishing learning rate. 
In this work, we propose to study the basic properties of SGD and its variants in the non-vanishing learning rate regime. The focus is on deriving exactly solvable results and discussing their implications. The main contributions of this work are to derive the stationary distribution for discrete-time SGD in a quadratic loss function with and without momentum; in particular, one implication of our result is that the fluctuation caused by discrete-time dynamics takes a distorted shape and is dramatically larger than a continuous-time theory could predict. Examples of applications of the proposed theory considered in this work include the approximation error of variants of SGD, the effect of minibatch noise, the optimal Bayesian inference, the escape rate from a sharp minimum, and the stationary covariance of a few second-order methods including damped Newton's method, natural gradient descent, and Adam.
\end{abstract}

\vspace{-2.5mm}
\section{Introduction}
\label{introduction}
%\vspace{-1mm}

%Deep learning has achieved unprecedented success in a wide range of empirical applications \citep{Hinton2012,Luong2015,Goodfellow2016,He2016,Krizhevsky2017}. %Deep learning tasks are usually complex non-convex optimization problems. 
Behind the success of deep learning lies the simple optimization methods such as stochastic gradient descent (SGD) \citep{Bottou1999,Sutskever2013,dieuleveut2018bridging,mori2020improved} and its variants \citep{Duchi2011,Flammarion2015,Kingma2017}, which are used for neural network optimization. %In an overparametrized regime, a network trained by SGD shows impressive generalization performance \citep{Zhang2017}. 
Despite the empirical efficiency of SGD, our theoretical understanding of SGD is still limited. Two types of noises for SGD are studied. When the noise is white, the dynamics is governed by the stochastic gradient Langevin dynamics (SGLD) \citep{Welling2011}. When the noise is due to minibatch sampling, the noise is called the SGD noise or minibatch noise. So far, nearly all the theoretical attempts at understanding the noise in SGD have adopted the continuous-time approximation by assuming a vanishingly small learning rate \citep{Mandt2017,Li2017,Jastrzebski2018,Chaudhari2018,Zhu2019,Xie2020}. This amounts to making an analogy to the diffusion theory in physics \citep{Einstein1905, Kampen2011}, and helps understand some properties of SGD and deep learning such as the flatness of the minima selected by training \citep{Jastrzebski2018,Chaudhari2018,smith2017bayesian, Xie2020}. However, in reality, a large learning rate often leads to qualitatively distinct behavior, including reduced training time and potentially better generalization performance \citep{Keskar2017,Li2019}. The present work is motivated by the fact that the existing continuous theory is insufficient to describe and predict the properties and phenomena of learning in this large learning rate regime. In fact, we will show that the prediction by continuous theory may deviate arbitrarily from the experimental result.

In this work, we study the stationary-state solutions of discrete-time update rules of SGD. The result can be utilized to analyze SGD without invoking the unrealistic assumption of a vanishingly small learning rate. Specifically, our contributions are:
\vspace{-2mm}
\begin{itemize}
    \item %When the noise is an arbitrary time-independent noise, 
    We exactly solve the discrete-time update rules for SGD and its variants with momentum in a local minimum to obtain the analytic form of the covariance matrix of the model parameters at asymptotic time.\vspace{-2mm}
    %to obtain matrix equations satisfied by the covariance matrix of the stationary distributions with an arbitrary constant Gaussian noise. We see the discreteness corrections on the fluctuation-dissipation relations.
    %\item For the noise whose covariance matrix $C$ commutes with the Hessian $K$ of loss function, we provide explicit expressions of the stationary distributions.
%    \item The stationary distributions of model parameters can be analytically obtained when the noise is Gaussian.\vspace{-1mm}
    \item We apply our results to various settings that have been studied in the continuous-time limit, such as finding the optimal learning rate in a Bayesian setting, understanding the escape from a sharp minimum, and the approximation error of various variants of SGD.\vspace{-2mm}
    \item Compared with the continuous-time theories, our work requires fewer assumptions and finds significantly improved agreement with experimental results.\vspace{-2mm}
    %\item 
\end{itemize}

In section~\ref{sec: background}, we present the background of this work. Related works are discussed in section~\ref{sec: related works}. In section \ref{sec: theory of SGD}, we derive our main theoretical results for SGD and its momentum variant. In section~\ref{sec: experiments}, we verify our theoretical results experimentally. In section~\ref{sec: applications}, we apply our solution to some well-known problems that have previously been investigated in the continuous-time limit. %We also calculate the stationary distributions of two more algorithms: the Natural Gradient Descent (NGD) and the Damped Newton's Method (DNM). In the last section, we summarize our work and discuss possible future works. 
A summary of our results is given in Table~\ref{tab:summary}.

\vspace{-2mm}
\section{Background}\label{sec: background}
\vspace{-1mm}
\begin{figure}[bt!]
    \centering
    %\begin{subfigure}{0.58\columnwidth}
    %\includegraphics[width=\linewidth]{}
    %\vspace{-1.5em}
    %\caption{generalization vs. learning rate}
    %\end{subfigure}
    %\hfill
    %\begin{subfigure}{0.4\columnwidth}
    %\vspace{0em}
    \includegraphics[width=0.4\linewidth]{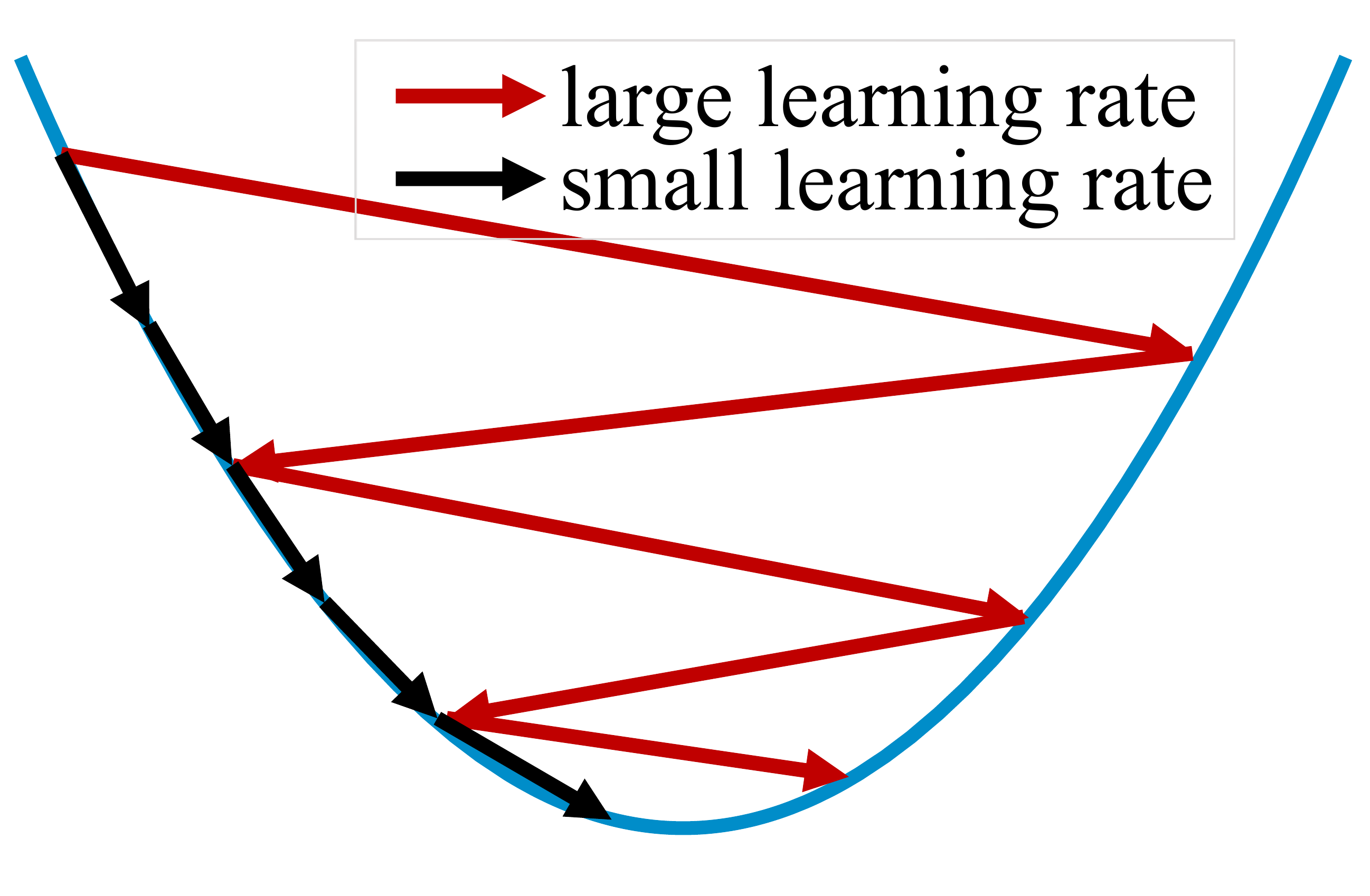}
    \includegraphics[width=0.5\linewidth]{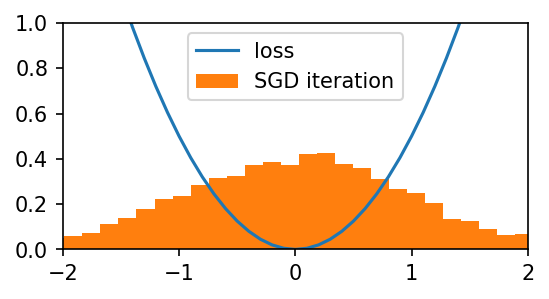}
    \vspace{-1em}
    %\caption{discrete-time SGD}
    %\end{subfigure}
    \caption{%(a) Learning rate dependence of the generalization performance. Nonlinear feedforward neural networks of different depths are trained on a simple task with varying learning rates. We see that, when the learning rate is vanishingly small so that the continuous-time approximation is good, the continuous neural tangent kernel (NTK) provides an accurate prediction. However, as the learning rate becomes large, the learning deviates from the NTK prediction, sometimes for the good, sometimes for the bad. Figure reproduced from \cite{mori2020deeper}. (b) 
    (Left) Schematic illustrations of the deterministic continuous-time evolution given by $w=w_0 e^{-\lambda kt}$ for small $\lambda$ and the deterministic discrete-time evolution given by Eq.~\eqref{discrete} for $1/k<\lambda<2/k$. The black arrows represent each update according to the continuous-time update rule while the red ones represent the discrete-time updates. (Right) When noise exists, SGD iteration converges not to a point but a distribution. }
    \vspace{-1em}
    \label{fig:large_rate_eg}
\end{figure}

%Theoretically, we know very few about the properties of SGD in the case of a finite learning rate. 
%\footnote{SGD in the vanishing learning rate limit is rather well understood now, often by taking the continuous time limit.}.
%This means that pretty much of the existing theoretical results have limited relevance to empirical practices, where the learning rate is often in the non-vanishing regime, introducing qualitative behaviors that are different from the continuous-time limit \citep{Lewkowycz2020}. We illustrate this in Appendix~\ref{app: example}.

In the presence of noise, it is difficult to give a simple solution to the discrete dynamics. Under the assumptions of a constant Gaussian noise, a quadratic loss function, and an infinitesimal learning rate, theorists approximate the multidimensional update rule by a continuous-time stochastic differential equation \citep{Mandt2017,Zhu2019}, which is a multivariate Ornstein-Uhlenbeck process, 
%\begin{align}
    $d\mathbf{w}_t = -\lambda K \mathbf{w}_t dt + \lambda C^{\frac{1}{2}}dW_t \label{OU}$, 
%\end{align}
where $C$ is the covariance matrix of the noise, $W_t$ represents a standard multidimensional Brownian motion, and $K$ is the Hessian of the local minimum\footnote{In this work, we assume $K$ to be full-rank; this is true in practice since, for applications, it is a standard practice to apply a very small $L_2$ regularization, which amounts to adding a small positive value to all the eigenvalues of $K$, making it full-rank. For more discussion, see Appendix~\ref{app: overparametrization}.}. The stationary covariance matrix, $\Sigma:= \lim_{t\to \infty}\text{cov}[w_t^{\rm T} w_t]$, is found to satisfy the following matrix equation \citep{Kampen2011}:\vspace{-2mm}
\begin{equation}
    \Sigma K + K \Sigma = \frac{\lambda}{1-\mu}C,\label{cont.solution}\vspace{-2mm}
\end{equation}
where $\mu$ is the momentum hyperparameter; when no momentum is used, $\mu=0$. %This matrix equation can be regarded as an example of the fluctuation-dissipation theorem in physics \citep{Einstein1905, kubo1966fluctuation, funo2018quantum}, which relates the covariance (fluctuation) of a variable to the strength of the random noises (dissipation) in the system. %Einstein1905}. %For SGD with momentum, the only change is by multiplying a damping coefficient $1-\mu$ representing the strength of momentum on the left-hand-side of Eq.~\eqref{cont.solution}.
Despite the fact that a number of theoretical works has been performed on the basis of the above constant Gaussian noise and the continuous-time approximation \citep{Chaudhari2018,Jacot2018,Zhu2019,Lee2019,Xie2020}, it is clear that the stationary distribution given by Eq.~\eqref{cont.solution} substantially deviates from the true one obtained in experiments\footnote{For example, Figure~1 in \citet{Mandt2017} plots stationary distributions obtained from solving stochastic differential equations. These distributions generally deviate from experimental results. In \citet{gitman2019understanding}, predictions also deviate far from experiments when the learning rate is large.}. The predictions based on these results are qualitatively acceptable only in a small learning rate regime. For a large learning rate, the assumptions simply break down so that the theory becomes invalid.% In practice, a larger learning rate usually generalizes better \citep{goyal2017accurate,Hoffer2017,Li2019}. One possible mechanism is that a larger learning rate leads to a flatter minimum with a smaller curvature, which exhibits better generalization performance due to the robustness against parameter perturbations \citep{Keskar2017,Jiang2020}. 
%It is therefore urgent to develop a theory that can handle SGD with a large learning rate. %The results of this work are summarized in Table~\ref{tab:summary}.

To intuitively understand how a large learning rate makes a difference, we consider a model with a single parameter $w\in\mathbb{R}$ in a quadratic potential $L=\frac{1}{2}kw^2$ with $k>0$. SGD obeys the dynamical equations as follows: %\citep{Mandt2017}:
\vspace{-1.5mm}
\begin{equation}
    \begin{cases}
        g_t = kw_{t-1} + \eta_{t};\\
        w_t = w_{t-1} - \lambda g_t,
    \end{cases}
\vspace{-1.5mm}
\end{equation}
where $\lambda$ is the learning rate and $\eta_{t}$ is a normal random variable with zero mean and variance $\sigma^2$. When $\sigma^2=0$, the dynamics is deterministic, and the common approach is to assume that $\lambda \ll 1/k$ such that one may take the continuous-time limit of this equation as $\dot{w}=-\lambda k w(t)$, which solves to give $w= w_0 e^{-\lambda k t}$. However, this continuous approximation fails when $\lambda$ is large. To see this, we note that the deterministic discrete-time dynamics solves to give\vspace{-1mm}
\begin{equation}
    w_t = (1-\lambda k )^t w_0, \text{ for } t\in \mathbb{N}^0, \label{discrete}
    \vspace{-1mm}
\end{equation}
%\begin{equation}
% w_t =
%%    \begin{cases}
 %       (1-\lambda k )^t w_0 & \text{for } \lambda < \frac{1}{k};\\
 %       |1-\lambda k|^t w_0 & \text{for } \lambda < \frac{1}{k};\\
 %   \end{cases}
%\end{equation}
which is an exponential decay when $\lambda < \frac{1}{k}$, and, in this region,  the standard continuous-time dynamics is valid with error $O((1-\lambda k)^2)$. On the other hand, when $\lambda > \frac{2}{k}$, the learning is so large that the parameter $w$ will diverge; therefore, the interesting region is when $ \frac{1}{k}<\lambda < \frac{2}{k}$, where the dynamics is convergent, yet a simple continuous approximation fails. See a schematic illustration in Figure~\ref{fig:large_rate_eg}. It is therefore urgent to develop a theory that can handle SGD with a large learning rate.

% \begin{table}[]
%     \centering
%     \begin{tabular}{c|c|c|c|c}
%     \hline\hline
    
%     & \multicolumn{2}{c}{Previous Result} & \multicolumn{2}{|c}{Our Result} \\
%     & $\Sigma_w$ &  Approximation Error  &  $\Sigma_w$ &  Approximation Error\\
%     \hline%\hline 
    
%      SGD &  $\frac{1}{2\lambda}K^{-1}C$  \citep{Mandt2017} &  $\frac{\lambda}{2}{\rm Tr}[C] + \frac{\lambda^2}{4}\frac{\lambda}{2}{\rm Tr}[KC] + O()$     \\
%      SGDM &  $\frac{1}{2\lambda(1-\mu)}K^{-1}C$ \citep{Mandt2017}  &     \\
%      HQM  &  $\frac{1}{2\lambda(1-\mu)}K^{-1}C + O(\lambda^2)$ \citep{gitman2019understanding}  &      \\
%     \end{tabular}
%     \caption{Summary of the results of this work and comparisons with previous results. For notational conciseness, we only show the result when the noise matrix $C$ commutes with the Hessian $K$.}
%     \label{tab:summary}
% \end{table}

\begin{table*}[bt!]
    \centering
    \vspace{-0.7em}
    \caption{\small Comparison of the results of this work with previous results. For notational conciseness, we only show  $\Sigma$ when the noise matrix $C$ commutes with the Hessian $K$. For the \textit{approximation error} panel, the results apply to any $K$ and $C$. $^*$SGD: stochastic gradient descent. $^*$SGDM: stochastic gradient descent with momentum. $^*$QHM: quasi-hypobolic momentum. $^*$DNM: damped newton's method. $^*$NGD: natural gradient descent.
   % The previous $\Sigma$'s for SGD and SGDM are taken from \cite{Mandt2017}, and the other four previous results are due to \cite{gitman2019understanding}. There is no existing continuous-time result about DNM and NGD.
    } %\\
    %$^*$SGD: stochastic gradient descent with learning rate $\lambda$. $^*$SGDM: stochastic gradient descent with momentum hyperparameter $\mu$. $^*$QHM: quasi-hypobolic momentum. $^*$DNM: damped newton's method. $^*$NGD: natural gradient descent.}
    \label{tab:summary}
    \vspace{-0.8EM}
    {\small
    \resizebox{\textwidth}{!}{
    \begin{tabular}{c|c|c}
    \hline\hline
    &Previous Work& This Work\\
    \hline
    & { $\Sigma$} & { $\Sigma$} \\
    \hline%\hline 
    
     SGD &  $\frac{\lambda}{2}K^{-1}C$  
     \citep{Mandt2017}
     &  $\lambda[ K(2I_D-\lambda K)]^{-1}C$     \\
     SGDM &  $\frac{\lambda}{2(1-\mu)}K^{-1}C$ \citep{Mandt2017}
     &  $\lambda\left[ \frac{K}{1+\mu}\left( 2I_D - \frac{\lambda K}{1+\mu}\right) \right]^{-1}\frac{C}{1-\mu^2}$   \\
    % NAG \\
     QHM  &  $\frac{\lambda}{2(1-\mu)}K^{-1}C + O(\lambda^2)$ \citep{gitman2019understanding} 
     &   $\lambda^2 h(K)^{-1}C$ [Eq.~\eqref{eq:hK}]   \\
     DNM & - & $\frac{1+\mu}{1-\mu}\frac{\lambda}{2(1+\mu)-\lambda}K^{-2}C$ \\
     NGD &- & $\frac{1}{2}K^{-1}\left[Q+\frac{\lambda}{2(1+\mu)}I_D\right]$ [Eq.~\eqref{eq: NGD Sigma}]\\
     Adam &- & $\frac{\lambda^2 (1+c)}{4}I_D$\\
     
     %\hline\hline
    %\end{tabular}
    
    %\bigskip
    % \begin{tabular}{c|c|c}
    \hline\hline
    
    & { Approximation Error} & {Approximation Error} \\
    \hline%\hline 
    
     SGD &  $\frac{\lambda}{4}{\rm Tr}[C] + \frac{\lambda^2}{8}{\rm Tr}[KC] + O(\lambda^3)$  \citep{gitman2019understanding}
     &  $\frac{\lambda}{2}{\rm Tr}[(2I_D-\lambda K)^{-1}C]$     \\
     SGDM &  $\frac{\lambda}{4(1-\mu)}{\rm Tr}[C] + \frac{\lambda^2}{8}\frac{1}{1-\mu^2}{\rm Tr}[KC] + O(\lambda^3)$ \citep{gitman2019understanding}
     &  $\frac{\lambda}{2(1-\mu)} {\rm Tr} \left[\left(2I_D-\frac{\lambda}{1+ \mu}K\right)^{-1} C \right]$   \\
    % NAG \\
     QHM  &   $\frac{\lambda}{4}{\rm Tr}[C] + \frac{\lambda^2}{8}\left[1+\frac{2\mu\nu}{1-\mu}\left(\frac{2\mu\nu}{1+\mu}-1\right)\right]{\rm Tr}[KC] + O(\lambda^3)$ \citep{gitman2019understanding}
     &   $\frac{\lambda^2}{2}{\rm Tr}[h(K)^{-1}KC]$ [Eq.~\eqref{generalTr}]   \\
     DNM & - & $\frac{1+\mu}{1-\mu}\frac{\lambda}{4(1+\mu)-2\lambda} {\rm Tr} [K^{-1}C]$\\
     NGD & - & $\frac{1}{4} {\rm Tr} \left[Q+\frac{\lambda}{2(1+\mu)}I_D\right]$\\
     Adam & - & $\frac{\lambda^2 (1+c)}{8}{\rm Tr}[K]$\\
     \hline\hline
    \end{tabular}}}
    %}
%\caption*{{\footnotesize } }
\vspace{-1.5em}

\end{table*}
\vspace{-2mm}
\section{Related Works}\label{sec: related works}
\vspace{-1mm}

%While there is few work about theoretically treating the large learning rate SGD, a lot of experimental works have been done to show the intimate relationship between large learning rate, flatter minima and better generalization. 

%\subsection{Learning with Large Learning Rate}
%\vspace{-1mm}

%In the vanishing learning rate regime, SGD is now relatively well understood, and the majority of theoretical approaches to SGD assume the continuous-time limit. For example, when the learning rate is vanishingly small, the dynamics of the learning of a neural network is known to behave like a special kernel called neural tangent kernel  when the network is infinitely wide \citep{Jacot2018} or finitely wide \citep{hanin2019finite}. This regime has been called the ``lazy" learning regime, since global minima are known to exist very close to initialization and the model parameters do not have to move much to converge; however, it is now known that lazy learning is in general detrimental to generalization \citep{chizat2018note}. 

\textbf{Large Learning Rate}. Although continuous-time theory has been the dominant theoretical approach, \citet{Lewkowycz2020} took a step forward in understanding why a larger learning rate may generalize better. They characterized SGD into three regimes according to the learning rate. 
%through comparison with the initial curvature
They conjectured that a rather large initial learning rate leads to a ``catapult phase", which helps exploration and often leads to better generalization by converging to a flatter minimum. However, their work is mostly empirical and in the noise-free regime. % than theoretical in the sense that they did not solve the dynamics and calculate relevant quantities. In addition, the theoretical dynamics adopted by them is noise-free.
There are more empirical works on large learning rate and generalization. \citet{LeCun2012} found that a large batch size (or a small learning rate) usually leads to reduced generalization performance. \citet{Keskar2017} proposed an empirical measure based on the sharpness of a minimum. They presented numerical evidence that a small learning rate prefers sharp minima that generalize poorly. \citet{goyal2017accurate} showed that setting the learning rate proportional to the minibatch size ensures good generalization, which is crucial for large scale training. There are also works about how noise, the batch size and the learning rate influence the generalization \citep{Hoffer2017,mori2020improved,mori2020deeper}. %These works all hint at that a large learning rate leads to flatter minima that generalize better. 
The explanations of why a flatter minimum generalizes better are given by some theories such as the minimum description length theory \citep{rissanen1983}, a Bayesian view of learning \citep{MacKay1992}, and the Gibbs free energy \citep{chaudhari2019entropy}.

%\vspace{-2mm}
%\subsection{Escape Rate from a Sharp Minimum}
%\vspace{-1.5mm}
\textbf{Escape from Sharp Minimum}. Theoretically understanding why and how SGD converges to flat minima is crucially important \cite{Hochreiter1997}. In fact, among many complexity measures characterizing generalization \citep{dziugaite2017computing,neyshabur2018a, smith2017bayesian,Chaudhari2018,Keskar2017,liang2019fisherrao,nagarajan2019generalization}, the sharpness-based measures have been shown to be the best up to now \citep{Jiang2020}. The continuous approximation is usually adopted to understand how SGD chooses flat minima. \citet{hu2018diffusion} used diffusion theory to show that escape is easier from a sharp minimum than from a flat one. \citet{Wu2018} studied the relationship among learning rate, batch size and generalization from the perspective of dynamical stability. \citet{Jastrzebski2018} used stochastic differential equations to prove that the higher the ratio of the learning rate to the batch size, the flatter minimum will be selected. \citet{Zhu2019} defined the escape efficiency for a minimum and obtained its explicit expression using diffusion theory. They show that anisotropic noise helps escape from sharp minima effectively. A recent work by \citet{Xie2020} calculated the escape rate from a minimum by adopting the formalism of the Kramers escape rate in physics \citep{Kramers1940}. It is shown that SGD with minibatch noise favors flatter minima. Our exact results for a large learning rate make it possible to study the selected flatness and the complexity measures more accurately and may enhance our understanding of deep learning.

%\vspace{-2mm}
%\subsection{Bayesian Inference}
%\vspace{-1.5mm}
\textbf{Bayesian Inference}. SGD has been used for Bayesian inference as well. In Bayesian inference, one assumes a probabilistic model $p(w,x)$ with data $x$ and hidden parameter $w$. The goal is to approximate the posterior $p(w|x)$. Traditionally, stochastic gradient Markov Chain Monte Carlo (MCMC) methods have been used \citep{Welling2011,Ma2015}. A similarity between SGD and MCMC suggests the possibility of SGD being used as approximate Bayesian inference. \citet{Mandt2017} applied SGD to minimize the loss function defined as $-\ln p(w,x)$. They show that one can tune the learning rate such that the Kullback-Leibler (KL) divergence between the learned distribution by SGD and the posterior is minimized. %, they found the stationary distribution which is closest to the posterior. %Equipped with optimal hyperparameters, 
This means that SGD can be regarded as an approximate Bayesian inference. However, for a large learning rate, their assumption is no more valid. %and their theory cannot give any meaningful prediction about how to select the optimal learning rate. 

\vspace{-2mm}
\section{Theory of Discrete-Time SGD}\label{sec: theory of SGD}
\vspace{-1mm}

We propose to deal with the discrete-time SGD directly. %\footnote{In spirit, the work that is the closest to the present work is due to \cite{yaida2018fluctuation}. However, the formalism adopted by \cite{yaida2018fluctuation} follows the statistical physics convention, and the focus of the paper is on the derivation of the abstract relations rather than the exact and solvable results. Another work worth mentioning is \cite{gitman2019understanding}. However, \cite{gitman2019understanding} focus on the effect of momentum and only solves the same problem as ours in a small learning rate regime by means of the Taylor expansion of the learning rate. Therefore, their result is only correct at small learning rate and for the momentum hyperparameter $\mu$ that is away from $1$. In contrast, we derive the exact solutions that are applicable to all regimes of the learning rate and the momentum hyperparameter.}. We first solve the simplest case without momentum. The general result with momentum is given in Section~\ref{sec: sgd with momentum}.
We use $\mathbf{w}\in \mathbb{R}^{D}$ to denote the weights of the model viewed as a vector, and the boldness is dropped when $D=1$. We use capital $K\in \mathbb{R}^{D\times D}$ to denote the Hessian matrix of the quadratic loss function; when $D=1$, we use the lower-case $k$.  We use $\Sigma \in \mathbb{R}^{D\times D}$ to denote the asymptotic covariance matrix of $\mathbf{w}$, and $C \in \mathbb{R}^{D\times D}$ to denote the covariance matrix of a general type of noise $\eta\in \mathbb{R}^{D}$. When the learning rate $\lambda$ is not a scalar but a matrix (sometimes called preconditioning matrix), we use the upper-case letter $\Lambda \in \mathbb{R}^{D\times D}$ instead of $\lambda$. $L$ denotes the training loss function, and $S$ the minibatch size. We use $\mathbb{E}[\cdot]$ to denote the expectation over the stationary distribution of the model parameters. The capital $\rm T$ is used as a superscript to denote matrix transpose and lower case $t$ is used to denote the time step of optimization. Due to space constraint, we leave derivations to Appendix~\ref{app: theory of SGD}. 
\vspace{-2mm}
\subsection{SGD}
\vspace{-1mm}
Consider a general loss function $\mathcal{L}(\mathbf{w'})$ for a general differentiable model with parameters $\mathbf{w}\in \mathbb{R}^{D}$; close to a local minimum, we may expand $\mathcal{L}(\mathbf{w'})$ up to second order in $\mathbf{w}'$. Therefore, close to a local minimum, the dynamics of SGD is governed by a general form of a quadratic potential:\vspace{-2mm}
\begin{equation}
    \mathcal{L}(\mathbf{w'}) \approx \frac{1}{2}(\mathbf{w}'  -\mathbf{w}^*)^{\rm T}K(\mathbf{w}'  -\mathbf{w}^*) := L(\mathbf{w}'),\vspace{-2mm}
\end{equation}
where the Hessian $K$ is a positive definite matrix, and $\mathbf{w}^*$ is a constant vector. One can redefine $\mathbf{w}' - \mathbf{w}^* \to \mathbf{w}$ to obtain a simplified form: $L(\mathbf{w})= \mathbf{w}^{\rm T}K\mathbf{w}/2$. The SGD algorithm with momentum $\mu$ is defined by the update rule\vspace{-1mm}
\begin{equation}
    \begin{cases}
        \mathbf{g}_t = \nabla L(\mathbf{w}_{t-1}) + \eta_{t-1} = K\mathbf{w}_{t-1} + \eta_{t-1};\\
        \mathbf{m}_t = \mu \mathbf{m}_{t-1} + \mathbf{g}_t;\\
        \mathbf{w}_t = \mathbf{w}_{t-1} - \lambda \mathbf{m}_t,
    \end{cases} \label{momentumSGD}
     \vspace{-1mm}
\end{equation}
% \begin{align}
%     \mathbf{w}_t &= \mathbf{w}_{t-1} -\lambda K \mathbf{w}_{t-1}+\lambda\mathbf{\eta}_{t-1}
%     % \nonumber\\
%     % &= (I_D -\lambda K)[(I_D -\lambda K) \mathbf{w}_{t-2}+\lambda\mathbf{\eta}_{t-2}]+\lambda\mathbf{\eta}_{t-1}\nonumber\\
%     % &=(I_D -\lambda K)^t \mathbf{w}_0 + \lambda\sum_{i=0}^{t-1}(I_D -\lambda K)^i \mathbf{\eta}_{t-1-i}
%     ,\label{eq: eom}
% \end{align}
where the noise $\eta_t$ has a finite covariance $C$, %:= BB^{\rm T}$ for some general matrix $B$,
and $\mu\in [0,1)$ is the momentum hyperparameter. We first consider the case without momentum, i.e., $\mu=0$. Let $k^{*}$ be the largest eigenvalue of $K$. For any minimum with $\lambda k^* > 2$, the dynamics will diverge, and $\mathbf{w}$ will escape from this minimum. Therefore, we focus on the range of $0<\lambda<\frac{2}{k^{*}}$. This means that the absolute values of all the eigenvalues of $(I_D -\lambda K)$ are in the range of $(0,1)$.

% Since we are interested in the stationary distribution of $\mathbf{w}$, we take the long-time limit to obtain
% \begin{align}
%     \lim_{t\to \infty}\mathbf{w}_t &= \lim_{t\to \infty}\lambda\sum_{i=0}^{t-1}(I_D -\lambda K)^i \mathbf{\eta}_{t-1-i}\nonumber\\
%     &\sim \lim_{t\to \infty}\mathcal{N}\Big(0,\lambda^2\sum_{i=0}^{t-1}(I_D -\lambda K)^i C (I_D -\lambda K)^i\Big) := \mathcal{N}(0, \Sigma),\label{distribution}
% \end{align}
% where we define $\Sigma$ to be the covariance matrix of $\mathbf{w}$ at a stationary state. After some mathematical manipulation, we find that the covariance matrix satisfies the matrix equation\footnote{We omit the detail here since it can also be reduced from the more general result that we will derive in Section~\ref{sec: sgd with momentum}.}
% \begin{align}
%     \Sigma K + K \Sigma - \lambda K \Sigma K = \lambda C, \label{matrixeq}
% \end{align}
% %or equivalent
% This equation is solvable \citep{Xu1998}, %, by expanding the L.H.S of the following equation:
% although a simple analytical solution is not known to exist. It is best to keep the above simple matrix equation form. Thus, we have derived the following result.

\begin{theorem}\label{thm: SGD} $($Model fluctuations of discrete SGD in a quadratic potential$)$
    Let $\mathbf{w}_t$ be updated according to \eqref{momentumSGD} with $\mu=0$.
    %, where $\eta_t$ is sampled from $\mathcal{N}(0,C)$ with 
    %a zero-mean multivariate gaussian distribution with $\eta_t^{\rm T} \eta_{t'} = \delta_{t, t'} C$, and 
    %$C=a(\lambda)BB^{\rm T}$. 
    The stationary covariance of $\mathbf{w}$ satisfies
    %\begin{equation}
     %$\mathbf{w} \sim \mathcal{N}(0, \Sigma)$,%\label{distribution}
    %\end{equation}
   % where $\Sigma$ satisfies
   \vspace{-1mm}
    \begin{equation}
        \Sigma K +  K\Sigma -\lambda K\Sigma K = \lambda C.\label{matrixeq}\vspace{-4mm}
    \end{equation}  
\end{theorem}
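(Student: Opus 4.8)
The plan is to reduce the update rule to a linear stochastic recursion and then propagate second moments. Setting $\mu=0$ in \eqref{momentumSGD} and substituting $\mathbf{g}_t = K\mathbf{w}_{t-1}+\eta_{t-1}$ into the weight update gives the one-step recursion
\begin{equation}
\mathbf{w}_t = (I_D - \lambda K)\,\mathbf{w}_{t-1} - \lambda\,\eta_{t-1},
\end{equation}
with constant coefficient matrix $A := I_D - \lambda K$. Since $K$ is positive definite and $0<\lambda<2/k^*$, every eigenvalue of $A$ lies in $(-1,1)$, so the spectral radius of $A$ is strictly less than one. This is exactly the condition guaranteeing that the iteration forgets its initialization and admits a unique stationary distribution, toward which both $\mathbb{E}[\mathbf{w}_t]$ and $\mathbb{E}[\mathbf{w}_t\mathbf{w}_t^{\rm T}]$ converge; a quick induction on $\mathbb{E}[\mathbf{w}_t]=A\,\mathbb{E}[\mathbf{w}_{t-1}]$ shows the mean tends to $0$, so the stationary second moment is genuinely the stationary covariance $\Sigma$.

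The key step is the second-moment recursion. Because $\eta_{t-1}$ is zero-mean and independent of $\mathbf{w}_{t-1}$ (the latter is a function of $\eta_0,\dots,\eta_{t-2}$ only), the two cross terms in the expansion of $\mathbb{E}[(A\mathbf{w}_{t-1}-\lambda\eta_{t-1})(A\mathbf{w}_{t-1}-\lambda\eta_{t-1})^{\rm T}]$ vanish, leaving
\begin{equation}
\mathbb{E}[\mathbf{w}_t\mathbf{w}_t^{\rm T}] = A\,\mathbb{E}[\mathbf{w}_{t-1}\mathbf{w}_{t-1}^{\rm T}]\,A^{\rm T} + \lambda^2 C.
\end{equation}
Passing to the limit $t\to\infty$ yields the discrete Lyapunov equation $\Sigma = A\Sigma A^{\rm T} + \lambda^2 C$.

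It then remains only to expand and simplify. Using $A=I_D-\lambda K$ together with the symmetry $K^{\rm T}=K$,
\begin{equation}
\Sigma = \Sigma - \lambda(\Sigma K + K\Sigma) + \lambda^2 K\Sigma K + \lambda^2 C,
\end{equation}
and cancelling $\Sigma$ from both sides and dividing by $\lambda>0$ gives $\Sigma K + K\Sigma - \lambda K\Sigma K = \lambda C$, which is \eqref{matrixeq}. There is no serious obstacle here: the computation is elementary linear algebra. The only points deserving care are (i) justifying that the stationary limit exists, which follows from the spectral-radius bound $\rho(I_D-\lambda K)<1$ on the interval $0<\lambda<2/k^*$, and (ii) justifying that the iterate is uncorrelated with the contemporaneous noise, which follows from the zero-mean, independent-across-steps assumption on $\eta_t$.
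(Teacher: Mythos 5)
Your proof is correct and follows essentially the same route as the paper: the paper establishes the general preconditioned-momentum identity by expanding $\mathbb{E}[\mathbf{w}_t\mathbf{w}_t^{\rm T}]$ at stationarity and then specializes to $\Lambda=\lambda I_D$, $\mu=0$, which for $\mu=0$ reduces exactly to your discrete Lyapunov equation $\Sigma = (I_D-\lambda K)\Sigma(I_D-\lambda K)+\lambda^2 C$. If anything, your argument is slightly more complete, since the paper simply assumes the stationary distribution exists while you justify it via the spectral-radius bound $\rho(I_D-\lambda K)<1$.
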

Recall that we have shifted the underlying parameter $\mathbf{w}'$ by $\mathbf{w}^*$, and this result translates to that $\mathbb{E}[\mathbf{w}'] =\mathbf{w}^*$, close to the local minimum $\mathbf{w}^*$.

\begin{remark}
The exact condition $\lambda \Sigma K + \lambda K\Sigma -\lambda^2 K\Sigma K = \lambda^2 C$ is different from the classical result obtained from a continuous Ornstein-Uhlenbeck process, which has $\lambda \Sigma K + \lambda K\Sigma = \lambda^2 C$. This suggests that the approximation of a discrete-time dynamics with a continuous-time one in \citet{Mandt2017} can be thought of as a $\lambda$-first-order approximation to the true dynamics. This approximation incurs an error of order $\mathcal{O}(\lambda^2)$. The error becomes significant or even dominant as $\lambda $ gets large.
%All the results in \citep{Mandt2017} relies on the famous result on continuous
\end{remark}
\begin{remark}
 This theorem only requires the existence of a finite stationary noise covariance, in contrast to the unrealistic assumption of constant Gaussian noise by \citet{Mandt2017}. In Appendix~\ref{app sec: non gaussian noise}, we show that the agreement of the theory is as good when the noises are non-gaussian.%Remarkably, this framework is applicable even to state-dependent noise as discussed in section [TODO, to cite?]. 
\end{remark}

%We can derive the stationary distribution explicitly when $C$ commutes with $K$. In this case, the following corollary applies.
%\begin{corollary}\label{corollary: no momentum commuting}
%    Let $[C, K] := C K - K C=0$, then $\Sigma = \lambda [ K (2I_D -\lambda K)]^{-1}C$.
%\end{corollary}

\vspace{-3.5mm}
\subsection{SGD with Momentum}\label{sec: sgd with momentum}
\vspace{-1mm}
We now consider the case with arbitrary $\mu\in [0,1)$ in Eq.~\eqref{momentumSGD}. 
% When momentum is present, the update rule takes the form
% \begin{equation}
%     \begin{cases}
%         \mathbf{g}_t = \nabla L(\mathbf{w}_{t-1}) + \eta_{t-1} = K\mathbf{w}_{t-1} + \eta_{t-1};\\
%         \mathbf{m}_t = \mu \mathbf{m}_{t-1} + \mathbf{g}_t;\\
%         \mathbf{w}_t = \mathbf{w}_{t-1} - \lambda \mathbf{m}_t.
%     \end{cases} \label{momentumSGD}
% \end{equation}

\begin{theorem}\label{theo: main theorem with momentum}
    $($Stationary distribution of discrete SGD with momentum$)$
    Let $\mathbf{w}_t$ be updated according to \eqref{momentumSGD} with arbitrary $\mu\in [0,1)$.
    %, where $\eta_t$ is sampled from  $\mathcal{N}(0,C)$ with $C=a(\lambda)BB^{\rm T}$. 
    Then $\Sigma$ is the solution to\vspace{-2mm}
    \begin{align}
    &\underbrace{- \frac{1+\mu^2}{1-\mu^2}\lambda^2 K\Sigma K + \frac{\mu}{1-\mu^2}\lambda^2( K^2\Sigma +\Sigma K^2)}_{\text{discrete-time}} \nonumber\\
    &+ \underbrace{(1-\mu)\lambda (K\Sigma + \Sigma K )}_{\text{continuous-time}}   = \lambda^2 C. \label{matrixeqmomentum}
    \vspace{-2mm}
    \end{align}
If the noise is Gaussian, then the stationary state distribution of  $\mathbf{w}$ is $
    %\begin{equation}
     \mathbf{w} \sim \mathcal{N}(0, \Sigma).$
    %\end{equation}
\end{theorem}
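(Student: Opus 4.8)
The plan is to eliminate the momentum variable $\mathbf{m}_t$, reduce \eqref{momentumSGD} to a closed second-order linear recursion for $\mathbf{w}_t$, and then repeat the moment computation behind Theorem~\ref{thm: SGD}, the new feature being an auxiliary, generally non-symmetric, lag-one covariance. Solving $\mathbf{w}_t = \mathbf{w}_{t-1}-\lambda\mathbf{m}_t$ for $\mathbf{m}_t$, inserting it into $\mathbf{m}_t = \mu\mathbf{m}_{t-1}+\mathbf{g}_t$ and using $\mathbf{g}_t = K\mathbf{w}_{t-1}+\eta_{t-1}$ gives
\[
\mathbf{w}_t = A\,\mathbf{w}_{t-1} - \mu\,\mathbf{w}_{t-2} - \lambda\,\eta_{t-1},\qquad A := (1+\mu)I_D - \lambda K ,
\]
with $A$ symmetric because $K$ is. I would first note that the $2D\times 2D$ companion matrix of this recursion has spectral radius strictly less than $1$ throughout the admissible range of $(\lambda,\mu)$, so a unique stationary distribution exists and limits may be exchanged with expectations; everything below is carried out in this stationary regime.

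In the stationary state set $\Sigma = \mathbb{E}[\mathbf{w}_t\mathbf{w}_t^{\rm T}]$ and the lag-one cross-covariance $\Sigma_1 = \mathbb{E}[\mathbf{w}_t\mathbf{w}_{t-1}^{\rm T}]$, the latter not symmetric in general. Since $\mathbf{w}_{t-1}$ and $\mathbf{w}_{t-2}$ are functions of $\eta_{t-2},\eta_{t-3},\dots$ alone, the noise $\eta_{t-1}$ is independent of both, with zero mean and covariance $C$. Multiplying the reduced recursion on the right by $\mathbf{w}_{t-1}^{\rm T}$ and taking expectations yields $\Sigma_1 = A\Sigma - \mu\Sigma_1^{\rm T}$; transposing this and using it to eliminate $\Sigma_1^{\rm T}$ gives the closed form $\Sigma_1 = (1-\mu^2)^{-1}(A\Sigma - \mu\Sigma A)$. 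Multiplying the recursion by its own transpose and taking expectations (the three terms linear in $\eta_{t-1}$ drop out by independence and zero mean) gives $\Sigma = A\Sigma A + \mu^2\Sigma - \mu A\Sigma_1 - \mu\Sigma_1^{\rm T}A + \lambda^2 C$.

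Substituting the closed form of $\Sigma_1$ converts this into $\Sigma = \tfrac{1+\mu^2}{1-\mu^2}A\Sigma A + \mu^2\Sigma - \tfrac{\mu}{1-\mu^2}(A^2\Sigma+\Sigma A^2) + \lambda^2 C$. Expanding $A = (1+\mu)I_D - \lambda K$ and collecting by powers of $K$: the coefficient of the bare $\Sigma$ equals $\tfrac{(1+\mu)^2(1-\mu)^2}{1-\mu^2}+\mu^2 = 1$ and cancels the $\Sigma$ on the left; the $(K\Sigma+\Sigma K)$ terms combine with coefficient $-(1-\mu)\lambda$; and the surviving $K\Sigma K$ and $(K^2\Sigma+\Sigma K^2)$ terms carry precisely the coefficients in the ``discrete-time'' block of \eqref{matrixeqmomentum}, so rearranging reproduces \eqref{matrixeqmomentum} (setting $\mu=0$ collapses it to \eqref{matrixeq}, a useful sanity check). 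For the Gaussian case, iterating the linear recursion expresses $\mathbf{w}_t$ as a fixed affine function of a geometrically decaying contribution from the initial state plus a convergent linear combination of the independent Gaussians $\eta_0,\dots,\eta_{t-1}$; hence $\mathbf{w}_t$ converges in distribution to a Gaussian, whose mean is $\mathbf{0}$ by linearity and $\mathbb{E}[\eta]=\mathbf{0}$ and whose covariance is the $\Sigma$ just characterized, i.e., $\mathbf{w}\sim\mathcal{N}(\mathbf{0},\Sigma)$.

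The content is conceptually light but bookkeeping-heavy, and the step most likely to be mishandled is the lag-one covariance: $\Sigma_1$ must not be assumed symmetric, so one genuinely has to solve the coupled pair $\Sigma_1 = A\Sigma-\mu\Sigma_1^{\rm T}$, $\Sigma_1^{\rm T} = \Sigma A-\mu\Sigma_1$ before substituting. The subsequent expansion in $K$ generates many terms that only cancel through the identity $\tfrac{(1+\mu)^2(1-\mu)^2}{1-\mu^2}+\mu^2 = 1$ and its analogue for the linear-in-$K$ coefficient, so the algebra must be done carefully; a minor but non-vacuous point is to justify the stationary regime and the exchange of limit and expectation via the spectral-radius bound.
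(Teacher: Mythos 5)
Your proposal is correct and follows essentially the same route as the paper's proof: the paper also assumes stationarity, reduces everything to the non-symmetric lag-one autocovariance $\mathbb{E}[\mathbf{w}_{t-1}\mathbf{w}_{t-2}^{\rm T}]$, solves the identical coupled pair $\Sigma_1=A\Sigma-\mu\Sigma_1^{\rm T}$, $\Sigma_1^{\rm T}=\Sigma A-\mu\Sigma_1$ (there written with $I_D-\Lambda K$ and the extra $\mu\Sigma$ term before combining), and substitutes back. The only cosmetic difference is that you eliminate $\mathbf{m}_t$ at the outset to get the second-order recursion, whereas the paper keeps $(\mathbf{w},\mathbf{m})$ and converts $\mathbb{E}[\mathbf{m}\mathbf{m}^{\rm T}]$ and $\mathbb{E}[\mathbf{w}\mathbf{m}^{\rm T}]$ into lag-one covariances via $\lambda\mathbf{m}_t=\mathbf{w}_{t-1}-\mathbf{w}_t$; it also proves the result for a general preconditioning matrix $\Lambda$ and specializes.
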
\vspace{-2mm}
 We examine the above result \eqref{matrixeqmomentum} with two limiting examples. First, if there is no momentum, namely $\mu=0$, we recover the previous result \eqref{matrixeq} without momentum. Next, if $\lambda K \ll 1$, neglecting the $\mathcal{O}((\lambda K)^2)$ terms recovers the result of SGD with momentum under the continuous approximation \eqref{cont.solution} derived by \citet{Mandt2017}. When $C$ commutes with $K$, the above matrix equation can be solved explicitly.
 
%Similarly, we can solve for $\Sigma$ explicitly when $[C,K]=0$. 
% \iffalse
% Before giving the explicit expression, we present a lemma about the commutation relations.
% \begin{lemma} \label{lemma_commute}
%     $[C,K]=0$ is equivalent to $[\Sigma, K]=0$.
% \end{lemma}
% The proof is given in Supplementary Materials. With this lemma, the following corollaries result.
% \fi
\begin{corollary}\label{corollary: momentum commuting}
    Let $[C,K] = 0$. Then\vspace{-2mm}
    \begin{equation}
        \Sigma = \left[ \frac{\lambda K}{1+\mu}\left( 2I_D - \frac{\lambda K}{1+\mu}\right) \right]^{-1}\frac{\lambda^2 C}{1-\mu^2}.\vspace{-2mm}
    \end{equation}
\end{corollary}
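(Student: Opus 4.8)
The plan is to reduce Eq.~\eqref{matrixeqmomentum} to a family of scalar equations by simultaneous diagonalization. Since $K$ is symmetric positive definite and $C$ is symmetric with $[C,K]=0$, there is an orthogonal $U$ with $U^{\rm T}KU=\mathrm{diag}(k_1,\dots,k_D)$ and $U^{\rm T}CU=\mathrm{diag}(c_1,\dots,c_D)$ simultaneously. Writing $\tilde\Sigma:=U^{\rm T}\Sigma U$ and conjugating Eq.~\eqref{matrixeqmomentum} by $U$, the $(i,j)$ entry reads
\[
\Big[-\tfrac{1+\mu^2}{1-\mu^2}\lambda^2 k_ik_j+\tfrac{\mu}{1-\mu^2}\lambda^2(k_i^2+k_j^2)+(1-\mu)\lambda(k_i+k_j)\Big]\tilde\Sigma_{ij}=\lambda^2 c_i\,\delta_{ij}.
\]
First I would check that the bracketed scalar never vanishes on the admissible range $0<\lambda<2/k^{*}$: for $i=j$ it equals $\tfrac{\lambda k_i(1-\mu)}{1+\mu}\bigl(2-\tfrac{\lambda k_i}{1+\mu}\bigr)>0$, and for $i\neq j$ a short estimate (bounding $(1+\mu^2)k_ik_j-\mu(k_i^2+k_j^2)\le(1-\mu)^2k_ik_j$ and using $\lambda k_ik_j<2\min(k_i,k_j)\le k_i+k_j$) shows it is again strictly positive. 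Hence $\tilde\Sigma_{ij}=0$ for $i\neq j$, i.e. $[\Sigma,K]=0$.

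With $[\Sigma,K]=0$ in hand, the remainder is algebra on the commuting algebra generated by $K$ and $C$. In Eq.~\eqref{matrixeqmomentum} I would replace $K\Sigma K$ by $K^2\Sigma$, $K^2\Sigma+\Sigma K^2$ by $2K^2\Sigma$, and $K\Sigma+\Sigma K$ by $2K\Sigma$, then collect the coefficient of $\lambda^2K^2\Sigma$, which is $\tfrac{-(1+\mu^2)+2\mu}{1-\mu^2}=\tfrac{-(1-\mu)^2}{1-\mu^2}=-\tfrac{1-\mu}{1+\mu}$. Factoring out $(1-\mu)\lambda K$ gives
\[
(1-\mu)\lambda K\Big(2I_D-\tfrac{\lambda K}{1+\mu}\Big)\Sigma=\lambda^2 C .
\]
Dividing by $(1-\mu)(1+\mu)$ and inverting the prefactor $\tfrac{\lambda K}{1+\mu}\bigl(2I_D-\tfrac{\lambda K}{1+\mu}\bigr)$ — invertible on $0<\lambda<2/k^{*}$ since each eigenvalue $\tfrac{\lambda k_i}{1+\mu}\bigl(2-\tfrac{\lambda k_i}{1+\mu}\bigr)$ is positive — yields $\Sigma=\bigl[\tfrac{\lambda K}{1+\mu}(2I_D-\tfrac{\lambda K}{1+\mu})\bigr]^{-1}\tfrac{\lambda^2 C}{1-\mu^2}$, which is the assertion of Corollary~\ref{corollary: momentum commuting}.

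The only delicate point is the nonvanishing/invertibility checks above; everything else is routine. If one prefers to avoid the off-diagonal estimate altogether, an alternative is to invoke uniqueness: Theorem~\ref{theo: main theorem with momentum} already guarantees that the stationary covariance is the unique solution of Eq.~\eqref{matrixeqmomentum}, so it suffices to verify that the matrix displayed in Corollary~\ref{corollary: momentum commuting} — which manifestly commutes with $K$ — solves Eq.~\eqref{matrixeqmomentum}, and that verification is precisely the algebra of the previous paragraph read in reverse. I expect the diagonalization route to give the cleaner write-up, with the off-diagonal positivity estimate being the main (and only minor) obstacle.
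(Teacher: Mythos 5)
Your proof is correct, but the way you establish the key intermediate fact $[\Sigma,K]=0$ differs from the paper's. The paper first proves a standalone lemma (Lemma~\ref{lemma_commute}) by unrolling the two-step recursion for $\mathbf{w}_t$ into a series $\mathbf{w}_t=\lambda\sum_i g_i\,\eta_{t-1-i}$ with matrix coefficients $g_i$ that are polynomials in $K$, so that $\Sigma=\lambda^2\sum_i g_i C g_i$ manifestly commutes with $K$ whenever $C$ does; it then rewrites Eq.~\eqref{matrixeqmomentum} as a ``commuting contribution'' $(1-\mu)\lambda K\bigl(2I_D-\tfrac{\lambda}{1+\mu}K\bigr)\Sigma$ plus commutator terms that vanish. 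You instead derive commutativity directly from the stationarity equation: simultaneous diagonalization turns Eq.~\eqref{matrixeqmomentum} into decoupled scalar equations, and your positivity estimate on the off-diagonal coefficients (the bound $(1+\mu^2)k_ik_j-\mu(k_i^2+k_j^2)\le(1-\mu)^2k_ik_j$ together with $\lambda k_ik_j\le 2\min(k_i,k_j)\le k_i+k_j$ checks out for $\mu\in[0,1)$ and $0<\lambda<2/k^*$) forces $\tilde\Sigma_{ij}=0$ off the diagonal. This buys you something the paper's argument does not state explicitly: the linear operator on the left of Eq.~\eqref{matrixeqmomentum} is invertible on the admissible parameter range, so the solution is unique and no appeal to the dynamics or to convergence of the iterated series is needed. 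Conversely, the paper's series argument is what also delivers the claimed ``only if'' direction of the lemma and the Gaussianity of the stationary law, which your route does not address (nor need to for this corollary). The closing algebra — collecting the coefficient $-\tfrac{(1-\mu)^2}{1-\mu^2}$ of $\lambda^2K^2\Sigma$ and factoring out $(1-\mu)\lambda K$ — is identical in both treatments. One trivial slip: your displayed diagonal coefficient should be $(1-\mu)\lambda k_i\bigl(2-\tfrac{\lambda k_i}{1+\mu}\bigr)$, without the extra $\tfrac{1}{1+\mu}$ prefactor; this does not affect the sign argument or the final formula.
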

We can obtain a more general result when the learning rate is a matrix.
\begin{theorem}\label{thm: preconditioning matrix eq}
If the learning rate is a positive definite preconditioning matrix $\Lambda$, then $\Sigma$ satisfies\vspace{-2mm}
\begin{align}
    &- \frac{1+\mu^2}{1-\mu^2}\Lambda K\Sigma K \Lambda+ \frac{\mu}{1-\mu^2}(\Lambda K\Lambda K\Sigma +\Sigma K\Lambda K\Lambda) \nonumber\\ 
    &+ (1-\mu) (\Lambda K\Sigma + \Sigma K \Lambda) = \Lambda C\Lambda. \label{eq: preconditioning matrix eq}
    %\vspace{-2mm}
\end{align}
%\vspace{-2mm}
\end{theorem}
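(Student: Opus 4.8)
The plan is to mirror the derivation of Theorem~\ref{theo: main theorem with momentum}, the only genuinely new difficulty being that $\Lambda$ no longer commutes with $K$ or $\Sigma$, so every matrix product must be kept on its correct side. First I would eliminate the momentum buffer. Since $\Lambda$ is positive definite, hence invertible, the third line of \eqref{momentumSGD} gives $\mathbf{m}_t = \Lambda^{-1}(\mathbf{w}_{t-1}-\mathbf{w}_t)$ and likewise $\mathbf{m}_{t-1}=\Lambda^{-1}(\mathbf{w}_{t-2}-\mathbf{w}_{t-1})$; inserting these into $\mathbf{m}_t=\mu\mathbf{m}_{t-1}+K\mathbf{w}_{t-1}+\eta_{t-1}$ and left-multiplying by $\Lambda$ yields the closed second-order recursion
\[
\mathbf{w}_t = M\mathbf{w}_{t-1} - \mu\mathbf{w}_{t-2} - \Lambda\eta_{t-1},\qquad M := (1+\mu)I_D - \Lambda K,
\]
in which the fresh noise $\eta_{t-1}$ is independent of $\mathbf{w}_{t-1}$ and $\mathbf{w}_{t-2}$, with $\mathbb{E}[\eta_{t-1}\eta_{t-1}^{\rm T}]=C$.

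Next, working at stationarity, I would introduce the lag-one correlation $\Gamma:=\mathbb{E}[\mathbf{w}_t\mathbf{w}_{t-1}^{\rm T}]$ alongside $\Sigma=\mathbb{E}[\mathbf{w}_t\mathbf{w}_t^{\rm T}]=\Sigma^{\rm T}$, using stationarity to identify $\mathbb{E}[\mathbf{w}_{t-1}\mathbf{w}_{t-2}^{\rm T}]=\Gamma$ and $\mathbb{E}[\mathbf{w}_{t-1}\mathbf{w}_t^{\rm T}]=\Gamma^{\rm T}$. Multiplying the recursion on the right by $\mathbf{w}_{t-1}^{\rm T}$, by $\mathbf{w}_{t-2}^{\rm T}$, and by $\mathbf{w}_t^{\rm T}$ in turn and taking expectations — using $\mathbb{E}[\eta_{t-1}\mathbf{w}_{t-1}^{\rm T}]=\mathbb{E}[\eta_{t-1}\mathbf{w}_{t-2}^{\rm T}]=0$ and $\mathbb{E}[\eta_{t-1}\mathbf{w}_t^{\rm T}]=-C\Lambda$ (obtained by substituting the recursion for $\mathbf{w}_t$) — produces $\Gamma+\mu\Gamma^{\rm T}=M\Sigma$, then $\mathbb{E}[\mathbf{w}_t\mathbf{w}_{t-2}^{\rm T}]=M\Gamma-\mu\Sigma$, and finally, after using the latter to eliminate the lag-two correlation, $(1-\mu^2)\Sigma = M\Gamma^{\rm T}-\mu\Gamma^{\rm T}M^{\rm T}+\Lambda C\Lambda$. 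The first identity together with its transpose is a linear system in $(\Gamma,\Gamma^{\rm T})$ whose solution is $\Gamma^{\rm T}=(1-\mu^2)^{-1}(\Sigma M^{\rm T}-\mu M\Sigma)$.

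Substituting this $\Gamma^{\rm T}$ into the third relation gives
\[
(1-\mu^2)^2\Sigma = (1+\mu^2)M\Sigma M^{\rm T} - \mu M^2\Sigma - \mu\Sigma(M^{\rm T})^2 + (1-\mu^2)\Lambda C\Lambda.
\]
Expanding with $M=(1+\mu)I_D-\Lambda K$ and $M^{\rm T}=(1+\mu)I_D-K\Lambda$ and collecting by type, the coefficient of the bare $\Sigma$ is $(1-\mu^2)^2-(1+\mu)^2(1-\mu)^2=0$ and drops out, while the $\Lambda K\Sigma+\Sigma K\Lambda$, $\Lambda K\Sigma K\Lambda$, and $\Lambda K\Lambda K\Sigma+\Sigma K\Lambda K\Lambda$ terms reorganize, after dividing by $1-\mu^2$, into exactly \eqref{eq: preconditioning matrix eq}. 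As sanity checks, $\Lambda=\lambda I_D$ collapses this to \eqref{matrixeqmomentum}, and $\mu=0$ reduces $M$ to $I_D-\Lambda K$ and the identity to $\Lambda K\Sigma+\Sigma K\Lambda-\Lambda K\Sigma K\Lambda=\Lambda C\Lambda$.

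The main obstacle is entirely one of bookkeeping rather than of ideas: because $\Lambda K\neq K\Lambda$ one has $M^{\rm T}\neq M$, so the lag-one system and the final expansion must preserve the precise left/right placement of every factor of $\Lambda$; contracting prematurely would silently collapse the answer to the scalar form. A minor separate point is that all of the above presupposes the stationary second moments exist, i.e.\ that the companion matrix of the recursion built from $M$ and $-\mu I_D$ has spectral radius strictly below one — the matrix analogue of the $\lambda k^*<2$ condition — which I would take as the standing assumption of the theorem.
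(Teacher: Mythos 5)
Your proposal is correct and follows essentially the same route as the paper's proof: both compute the stationary second moment and the lag-one cross-correlations, solve the resulting linear system for $\mathbb{E}[\mathbf{w}_{t-1}\mathbf{w}_{t-2}^{\rm T}]$, and back-substitute (the paper keeps $\mathbf{m}_{t-1}$ explicit and eliminates it via $\Lambda\mathbf{m}_t=\mathbf{w}_{t-1}-\mathbf{w}_t$, which is exactly your reduction to the second-order recursion $\mathbf{w}_t=M\mathbf{w}_{t-1}-\mu\mathbf{w}_{t-2}-\Lambda\eta_{t-1}$ in disguise). Your intermediate identities, including $\mathbb{E}[\eta_{t-1}\mathbf{w}_t^{\rm T}]=-C\Lambda$ and the cancellation of the bare $\Sigma$ terms, all check out.
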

\vspace{-2mm}
Note that the matrix $\Lambda$ does not necessarily commute with $K$. We consider an application of this general result to understanding second-order methods in section~\ref{sec: second order}.

\vspace{-2mm}
\subsection{Two Typical Kinds of Noise}
\vspace{-1mm}

As mentioned, two specific types of noise are of particular interest for machine learning practices. The first is a multidimensional white noise: $\eta\sim \mathcal{N}(0,\sigma^2 I_D)$, where $\sigma$ is a positive scalar. The the second type of noise is of Gaussian type with a covariance proportional to the Hessian, which is approximately equal to the noise caused by a minibatch gradient descent algorithm \citep{Zhu2019, Xie2020}: $C=aK$, where $a$ is a constant scalar, because\vspace{-7mm}

{\small
%\vspace{-2.5em}
\begin{equation}
    C(\mathbf{w})%\nonumber\\
   % &= \frac{1}{S}\left[\frac{1}{N}\sum_{n=1}^{N}\nabla L(\mathbf{w},x_n)\nabla L(\mathbf{w},x_n)^{\rm T}-\nabla L(\mathbf{w})\nabla L(\mathbf{w})^{\rm T}\right] \nonumber\\
    \approx \frac{1}{S}\frac{1}{N}\sum_{n=1}^{N}\nabla L(\mathbf{w},x_n)\nabla L(\mathbf{w},x_n)^{\rm T} := \frac{1}{S}J(\mathbf{w})\approx \frac{1}{S}K, \nonumber\vspace{-2mm}
\end{equation}}%
where the first approximation is due to the fact that noise dominates the dynamics in the vicinity of a minimum, and the second approximation is according to the similarity between the Fisher information $J(\mathbf{w})$ and the Hessian $K$ near a minimum. This approximation is somewhat crude although it is often employed.

To be more general, one might wish to mix an isotropic Gaussian noise with the minibatch noise, namely $C= \sigma^2 I_D + aK $. The following theorem gives the distribution.
\begin{theorem} \label{thm: typical noise}
    Let $C =\sigma^2 I_D + a(\lambda)K  $ and $\mu=0$. Then the stationary distribution of $\mathbf{w}$ is\vspace{-1mm}
    \begin{equation}
        \mathcal{N}\left(0, \lambda (\sigma^2I_D + a K)[K(2I_D -\lambda K)]^{-1}\right).\vspace{-2mm}
    \end{equation}
\end{theorem}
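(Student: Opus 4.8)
The plan is to derive this as a direct consequence of Theorem~\ref{thm: SGD} together with the linear--Gaussian structure of the $\mu=0$ recursion. First I would observe that the update rule \eqref{momentumSGD} with $\mu=0$ reads $\mathbf{w}_t = (I_D - \lambda K)\mathbf{w}_{t-1} - \lambda\eta_{t-1}$, a vector AR($1$) process driven by the Gaussian innovation $-\lambda\eta_{t-1}$ of covariance $\lambda^2 C$ (recall that in the ``typical noise'' setting $\eta$ is taken to be Gaussian). Since every eigenvalue of $I_D-\lambda K$ has modulus in $(0,1)$ under the standing assumption $0<\lambda<2/k^*$, the recursion is a contraction and admits a unique stationary law; being a linear homogeneous recursion driven by zero-mean Gaussian noise, that law is $\mathcal{N}(0,\Sigma)$ for some $\Sigma$. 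By Theorem~\ref{thm: SGD}, this $\Sigma$ is a solution of $\Sigma K + K\Sigma - \lambda K\Sigma K = \lambda C$, so it remains only to solve this matrix equation for the specific choice $C=\sigma^2 I_D + aK$.

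The second step is to solve that matrix equation explicitly. Because $C$ is a polynomial in $K$, I would try the ansatz that $\Sigma$ too is a function of $K$, hence $[\Sigma,K]=0$; then the equation collapses to $2K\Sigma - \lambda K^2\Sigma = \lambda C$, that is $K(2I_D-\lambda K)\,\Sigma = \lambda C$. The matrix $K(2I_D-\lambda K)$ is invertible: $K$ is positive definite, and $2I_D-\lambda K$ has eigenvalues $2-\lambda k_i>0$ for every eigenvalue $k_i$ of $K$, again by $\lambda<2/k^*$. Hence $\Sigma = \lambda\,[K(2I_D-\lambda K)]^{-1}C = \lambda(\sigma^2 I_D + aK)[K(2I_D-\lambda K)]^{-1}$, where the factors may be reordered freely since all of them are functions of the single matrix $K$; this is exactly the claimed covariance.

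The third step is to confirm that the commuting ansatz costs nothing, i.e.\ that the $\Sigma$ just found is \emph{the} solution. Vectorizing the matrix equation gives $\bigl(K\otimes I_D + I_D\otimes K - \lambda K\otimes K\bigr)\operatorname{vec}(\Sigma) = \lambda\operatorname{vec}(C)$; diagonalizing $K$, the eigenvalues of the operator on the left are $k_i+k_j-\lambda k_i k_j = k_i\bigl(1-\tfrac{\lambda}{2}k_j\bigr) + k_j\bigl(1-\tfrac{\lambda}{2}k_i\bigr) > 0$, so the operator is nonsingular and the solution is unique. Therefore the commuting candidate is that unique $\Sigma$, and the stationary distribution is $\mathcal{N}\bigl(0,\,\lambda(\sigma^2 I_D + aK)[K(2I_D-\lambda K)]^{-1}\bigr)$. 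I do not anticipate a genuine obstacle; the only points that need care are the invertibility of $K(2I_D-\lambda K)$ and the uniqueness argument licensing the ansatz, both of which follow cleanly from the spectral bound $0<\lambda<2/k^*$.
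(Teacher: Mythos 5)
Your proposal is correct, and its overall skeleton matches the paper's: invoke Theorem~\ref{thm: SGD} to reduce the problem to the matrix equation $\Sigma K + K\Sigma - \lambda K\Sigma K = \lambda C$, use the fact that $C=\sigma^2 I_D + aK$ commutes with $K$ to collapse it to $K(2I_D-\lambda K)\Sigma=\lambda C$, and invert. The one place where you genuinely diverge is in how you license the commuting ansatz. The paper appeals to its Lemma~\ref{lemma_commute}, proved by unrolling the recursion to write $\Sigma=\lambda^2\lim_{t\to\infty}\sum_i g_i C g_i$ with each $g_i$ a polynomial in $K$, so that $[C,K]=0$ forces $[\Sigma,K]=0$ directly. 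You instead vectorize the Lyapunov-type equation, check that the operator $K\otimes I_D + I_D\otimes K - \lambda K\otimes K$ has eigenvalues $k_i+k_j-\lambda k_ik_j = k_i(1-\tfrac{\lambda}{2}k_j)+k_j(1-\tfrac{\lambda}{2}k_i)>0$ under $0<\lambda<2/k^*$, and conclude uniqueness, so the commuting candidate must be the solution. Both arguments are sound; yours is more self-contained and standard (it is the usual uniqueness argument for discrete Lyapunov/Sylvester equations and does not require convergence of the series representation), while the paper's lemma gives extra structural information — an explicit series form of $\Sigma$ and a two-way equivalence $[\Sigma,K]=0 \Leftrightarrow [C,K]=0$ — that it reuses elsewhere (e.g.\ in the proof of Theorem~\ref{thm: SGDM train error}). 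Your explicit verification that $K(2I_D-\lambda K)$ is invertible on the admissible range of $\lambda$ is a point the paper leaves implicit.
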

Notice that, in this case, $C$ commutes with $K$. From this result, we can derive the following two special cases by setting $\sigma^2=0$ or $a=0$.
\begin{corollary}\label{cor: typical noise 1}
    Let $\sigma^2=0$. Then 
        $\Sigma = a\lambda  (2I_D -\lambda K)^{-1}$.
\end{corollary}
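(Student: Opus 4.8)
The plan is to obtain the statement as an immediate specialization of Theorem~\ref{thm: typical noise}. Setting $\sigma^2 = 0$ there, the covariance matrix of the Gaussian stationary distribution becomes $\Sigma = \lambda (aK)[K(2I_D - \lambda K)]^{-1}$, so the only task left is to simplify this product. Since $2I_D - \lambda K$ is a polynomial in $K$, it commutes with $K$, and $[K(2I_D - \lambda K)]^{-1} = (2I_D - \lambda K)^{-1} K^{-1}$; hence $K[K(2I_D - \lambda K)]^{-1} = (2I_D - \lambda K)^{-1}$, which yields $\Sigma = a\lambda (2I_D - \lambda K)^{-1}$. First I would carry out exactly this substitution and cancellation.

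The one point that needs a word of justification is that the inverses appearing are well-defined. $K$ is assumed positive definite (full-rank), so $K^{-1}$ exists. For $2I_D - \lambda K$: under the standing assumption $0 < \lambda < 2/k^{*}$ with $k^{*}$ the largest eigenvalue of $K$, every eigenvalue of $\lambda K$ lies in $(0,2)$, so every eigenvalue of $2I_D - \lambda K$ lies in $(0,2)$ and in particular is nonzero; thus $2I_D - \lambda K$ is invertible (indeed positive definite). The same computation shows $\Sigma$ is positive definite, i.e.\ a genuine covariance matrix, as it must be.

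Alternatively, one can give a self-contained derivation that does not route through Theorem~\ref{thm: typical noise}: start from the matrix equation of Theorem~\ref{thm: SGD}, namely $\Sigma K + K\Sigma - \lambda K\Sigma K = \lambda C$, with $C = aK$. Substituting the ansatz $\Sigma = a\lambda(2I_D - \lambda K)^{-1}$, which commutes with $K$, collapses the left-hand side to $K\Sigma(2I_D - \lambda K) = a\lambda K = \lambda C$; uniqueness of the stationary covariance then closes the argument. I would mention this route as a cross-check.

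There is essentially no real obstacle: the result is a corollary precisely because the analytic content lives in Theorems~\ref{thm: SGD} and~\ref{thm: typical noise}. The only thing to be careful about is the commutativity-and-invertibility bookkeeping above; once that is in place, the remainder is a two-line simplification.
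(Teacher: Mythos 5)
Your proposal is correct and follows essentially the same route as the paper: the paper also obtains the corollary by setting $\sigma^2=0$ in Theorem~\ref{thm: typical noise} and cancelling the factor of $K$ against $[K(2I_D-\lambda K)]^{-1}$. Your added remarks on invertibility of $2I_D-\lambda K$ and the cross-check against the matrix equation of Theorem~\ref{thm: SGD} are sound but not needed beyond what the paper does.
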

\begin{corollary}\label{cor: typical noise 2}
    Let $a=0$. Then $ \Sigma = \sigma^2\lambda [K (2I_D -\lambda K)]^{-1}$.
\end{corollary}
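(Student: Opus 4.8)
The plan is to read off Corollary~\ref{cor: typical noise 2} as the $a=0$ specialization of Theorem~\ref{thm: typical noise}. Setting $a(\lambda)=0$ there makes $C=\sigma^2 I_D$, and the stationary distribution stated in Theorem~\ref{thm: typical noise} collapses to $\mathcal{N}\!\left(0,\ \lambda\sigma^2\,[K(2I_D-\lambda K)]^{-1}\right)$, which is precisely the claimed $\Sigma$. Equivalently, one can invoke Corollary~\ref{corollary: momentum commuting} with $\mu=0$ and $C=\sigma^2 I_D$ (which indeed commutes with $K$), giving $\Sigma=[\lambda K(2I_D-\lambda K)]^{-1}\lambda^2\sigma^2 I_D=\lambda\sigma^2[K(2I_D-\lambda K)]^{-1}$. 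So, granting the earlier results, the corollary is essentially a one-line substitution.

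For a self-contained check I would verify the ansatz directly against the matrix equation~\eqref{matrixeq} of Theorem~\ref{thm: SGD}. With $C=\sigma^2 I_D$ that equation reads $\Sigma K + K\Sigma - \lambda K\Sigma K = \lambda\sigma^2 I_D$. Take $\Sigma=\lambda\sigma^2\,[K(2I_D-\lambda K)]^{-1}$; since $K(2I_D-\lambda K)=2K-\lambda K^2$ is a polynomial in $K$, its inverse is also a function of $K$ and hence commutes with $K$, so $\Sigma K=K\Sigma$. The left-hand side then equals $(2K-\lambda K^2)\Sigma = K(2I_D-\lambda K)\Sigma = \lambda\sigma^2 I_D$, so the ansatz solves~\eqref{matrixeq}.

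Two minor points require care. First, $[K(2I_D-\lambda K)]^{-1}$ must exist: diagonalizing $K$ with eigenvalues $k_i$, the matrix $K(2I_D-\lambda K)$ has eigenvalues $k_i(2-\lambda k_i)$, and the standing assumption $0<\lambda<2/k^*$ forces $k_i>0$ and $2-\lambda k_i>0$ for all $i$, so every eigenvalue is strictly positive; the matrix is therefore invertible, and $\Sigma$ comes out positive definite, as a covariance must be. Second, uniqueness of the solution of~\eqref{matrixeq} is needed so that exhibiting one solution finishes the proof; this was already established in the proof of Theorem~\ref{thm: SGD}, where the linear map $\Sigma\mapsto \Sigma K+K\Sigma-\lambda K\Sigma K$ on symmetric matrices is shown to be invertible under $0<\lambda<2/k^*$.

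There is no genuinely hard step here. The only thing to watch is that one must not form $[K(2I_D-\lambda K)]^{-1}$ before invoking the learning-rate bound $0<\lambda<2/k^*$ that guarantees its invertibility, and one should cite the uniqueness already proved for Theorem~\ref{thm: SGD} rather than re-deriving it.
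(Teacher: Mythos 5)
Your primary argument—substituting $a=0$ into Theorem~\ref{thm: typical noise}—is exactly the paper's own (one-line) proof, and your supplementary direct verification against Eq.~\eqref{matrixeq} and the invertibility check are correct. The proposal is sound and takes essentially the same route as the paper.
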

Interestingly, when $\lambda\ll 1$, the minibatch noise results in isotropic fluctuations, independent of the underlying geometry; the discrete time steps, however, causes fluctuations in the direction of large eigenvalues of the Hessian.
% \subsection{Adaptive Gradient Method}
% This section deals with the stationary distribution of the adaptive gradient method, a commonly used optimization algorithm in deep learning. In a local minimum, the quadratic potential takes the form
% \begin{equation}
%     \begin{cases}
%         \mathbf{g}_t = K\mathbf{w}_{t-1} + \eta_{t}\\
        
%         \mathbf{m}_t = \mu \mathbf{m}_{t-1} +  (1-\mu)\mathbf{g}_t\\
%         \mathbf{n}_t = \beta \mathbf{n}_{t-1} +  (1-\beta)\mathbf{g}_t^2\\

%         \mathbf{w}_t = \mathbf{w}_{t-1} - \lambda \mathbf{g}_t/\sqrt{\mathbf{n}_t},
%     \end{cases}
% \end{equation}
% where $\mathbf{g}_t/\sqrt{\mathbf{n}_t} := \frac{g_{t,i}}{\sqrt{n_{t,i}}}$ is an element-wise operation. However, this system cannot be solved analytically without making approximations. Therefore, we assume that $K = K_1 + \epsilon K_2$, where $K_1$ is a diagonal matrix, $K_2$ is an unspecified symmetric matrix and $\epsilon$ is a perturbative parameter. In this section, we calculate stationary distribution to linear order in $\epsilon$.

\vspace{-2mm}
\subsection{Approximation Error of SGD}
\vspace{-1mm}
We note that one important quantity for measuring the approximation error of SGD is ${\rm Tr} [K\Sigma]$, because the expectation of a quadratic loss is\vspace{-2mm}
\begin{equation}
     L_{\rm train}:=\mathbb{E}\left[\frac{1}{2}\mathbf{w}^{\rm T}K\mathbf{w} \right]=\frac{1}{2}{\rm Tr} [K\Sigma],\vspace{-2mm}
\end{equation}
where the expectation is taken over the stationary distribution of $\mathbf{w}$.

\begin{theorem}\label{thm: SGDM train error}$($Approximation error for discrete SGD with momentum$)$ Let the noise covariance $C$ and Hessian $K$ be any positive definite matrix. Then the training error for SGD with momentum is\vspace{-2mm}
\begin{equation}
   L_{\rm train} = \frac{\lambda}{4(1-\mu)} {\rm Tr} \left[\left(I_D-\frac{\lambda}{2(1+ \mu)}K\right)^{-1} C \right].\vspace{-2mm}
\end{equation}
\end{theorem}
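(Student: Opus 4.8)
The plan is to collapse the matrix equation \eqref{matrixeqmomentum} into a scalar (trace) identity by multiplying it on the left by a carefully chosen function $M=f(K)$ of the Hessian and exploiting the cyclic invariance of the trace. Since $L_{\rm train}=\tfrac12{\rm Tr}[K\Sigma]$, the goal is to pick $M$ so that, after forming ${\rm Tr}[M\cdot(\text{LHS of \eqref{matrixeqmomentum}})]={\rm Tr}[M\lambda^2 C]$, every $\Sigma$-dependent term fuses into a single multiple of ${\rm Tr}[K\Sigma]$, which can then be read off directly.

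First I would take the trace of $M$ times \eqref{matrixeqmomentum}. Because $K$, $\Sigma$, $C$ are symmetric and $M$ commutes with $K$, cyclicity gives ${\rm Tr}[MK\Sigma K]={\rm Tr}[K^2M\Sigma]$, ${\rm Tr}[M(K^2\Sigma+\Sigma K^2)]=2{\rm Tr}[K^2M\Sigma]$, and ${\rm Tr}[M(K\Sigma+\Sigma K)]=2{\rm Tr}[KM\Sigma]$. The two $K^2M\Sigma$ contributions combine with coefficient $\left(-\tfrac{1+\mu^2}{1-\mu^2}+\tfrac{2\mu}{1-\mu^2}\right)\lambda^2=-\tfrac{(1-\mu)^2}{1-\mu^2}\lambda^2=-\tfrac{1-\mu}{1+\mu}\lambda^2$, so the whole left-hand side becomes $(1-\mu)\lambda\,{\rm Tr}\!\left[\left(2I_D-\tfrac{\lambda}{1+\mu}K\right)KM\Sigma\right]$, while the right-hand side is $\lambda^2{\rm Tr}[MC]$.

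The key step is then to choose $M=\left(2I_D-\tfrac{\lambda}{1+\mu}K\right)^{-1}$. This matrix commutes with $K$ and, crucially, is well-defined and positive definite in the regime of interest: convergence of SGDM requires $\lambda k^{*}<2(1+\mu)$ (this is precisely what makes the stationary $\Sigma$ finite), so all eigenvalues of $2I_D-\tfrac{\lambda}{1+\mu}K$ are positive. With this $M$, the product $\left(2I_D-\tfrac{\lambda}{1+\mu}K\right)KM$ simplifies to $K$ (all factors are functions of $K$ and hence commute), so the left-hand side collapses to $(1-\mu)\lambda\,{\rm Tr}[K\Sigma]$. Solving for ${\rm Tr}[K\Sigma]$, dividing by two, and rewriting $\left(2I_D-\tfrac{\lambda}{1+\mu}K\right)^{-1}=\tfrac12\left(I_D-\tfrac{\lambda}{2(1+\mu)}K\right)^{-1}$ yields exactly the claimed formula.

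The only genuine subtlety—beyond bookkeeping of the $\mu$-dependent coefficients—is identifying the correct multiplier $M$; once it is in hand, everything reduces to routine trace manipulation. As consistency checks I would note that $\mu=0$ recovers the SGD approximation error listed in Table~\ref{tab:summary}, and that a small-$\lambda$ expansion reproduces the leading $\tfrac{\lambda}{4(1-\mu)}{\rm Tr}[C]$ term of the continuous-time prediction.
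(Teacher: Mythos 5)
Your proposal is correct and is essentially the paper's own argument: the paper likewise multiplies the stationary matrix equation on the left by $R=\bigl(2I_D-\tfrac{\lambda}{1+\mu}K\bigr)^{-1}$ and takes the trace, so your choice of multiplier $M$ is exactly theirs. The only cosmetic difference is that the paper first regroups the equation into a ``commuting'' part plus commutator terms whose traces against $R$ vanish, whereas you collapse everything directly via cyclicity and $[M,K]=0$ --- the same underlying mechanism, arguably stated more cleanly in your version.
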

\vspace{-2mm}
This result suggests that a larger eigenvalue in the Hessian causes larger training error. Also, compared with the continuous result: $L_{\rm train} = \frac{\lambda }{4(1-\mu)} {\rm Tr}[C]$, the discrete theory results in a larger approximation error.

\vspace{-2mm}
\subsection{Necessary Stability Condition}
\vspace{-1mm}
The main result in Theorem~\ref{theo: main theorem with momentum} also suggests a condition for the convergence of SGD. In order for a stationary distribution to exist at convergence, the covariance $\Sigma$ needs to exist and be positive definite, and this is only possible when $\left(2I_D - \frac{\lambda}{1+\mu} K \right)$ is positive definite. This condition reflects the fact that discrete-time SGD becomes \textit{ill-conditioned} as $\lambda$ increases, and so the continuous-time approximation becomes less valid. Also, an important implication is that using momentum may mitigate the ill-conditioning of the large learning rate, but only up to a factor of $1+\mu<2$, before the momentum causes another divergence problem due to the term $\frac{1}{1-\mu}$. Therefore, when momentum is used, the necessary condition for convergence becomes $\lambda k^* \leq 2(1+\mu)<4$. We also comment that this is only a necessary condition for stability; the sufficient condition of stability is highly complicated and we leave this to a future work. %\footnote{Some works do exist for restricted settings \citep{gitman2019understanding}.}.

\vspace{-2mm}
\subsection{Regularization Effect of a Finite Learning Rate}
\vspace{-1mm}
%Let $L(\mathbf{w})$ denote the loss function. 
For the continuous dynamics, the stationary distribution is known to obey the Boltzmann distribution, $P(\mathbf{w}) \sim \exp[-L(\mathbf{w})/T]$ for some scalar $T$ determined by the noise strength \citep{Landau1980, Chiyuan_SGD}. This implies that, close to a local minimum $\mathbf{w}_*$, the stationary distribution is approximated by\vspace{-1mm}
\begin{equation}
    P(\mathbf{w}| \mathbf{w}^*) \sim \exp[-(\mathbf{w} -\mathbf{w}^*)^{\rm T} K (\mathbf{w} -\mathbf{w}^*)/T].\vspace{-1mm}
\end{equation}
%where $K$ is the Hessian matrix of $L(\mathbf{w}_*)$. 
Comparing with the standard continuous-time solution \eqref{cont.solution}, we see that this corresponds to an isotropic noise, namely $C=2T I_D$, and $L_{\rm eff}:= -T \log P(\mathbf{w}| \mathbf{w}^*)$ may be defined as an effective loss function in analogy with an effective free energy in theoretical physics.

For discrete-time SGD, however, the stationary distribution has an additional term to leading order:
\vspace{-1mm}
{\scriptsize
\begin{align}
    P_{\text{d}}(\mathbf{w} |  \mathbf{w}^*) %\nonumber\\
    %&\sim \exp\left[-\frac{1}{T}(\mathbf{w} -\mathbf{w}_*)^{\rm T} K\left(I_D- \frac{\lambda}{2} K\right)^{-1} (\mathbf{w} -\mathbf{w}_*)\right]\nonumber\\
    = \exp\left[-\frac{1}{T}(\mathbf{w} -\mathbf{w}^*)^{\rm T} \left(K + \frac{\lambda}{2} K^2\right) (\mathbf{w} -\mathbf{w}^*) + O(\lambda^2)\right]\nonumber.
\end{align}}
\vspace{-1mm}
This implies a different form of the effective loss function:\vspace{-6mm}

{\small
\begin{equation}
     L_{\rm deff} 
     %&= -T \log P_{\rm d}(\mathbf{w}| \mathbf{w}_*) \nonumber\\
     \sim (\mathbf{w} -\mathbf{w}^*)^{\rm T} K (\mathbf{w} -\mathbf{w}^*) + \frac{\lambda}{2} (\mathbf{w} -\mathbf{w}^*)^{\rm T} K^2 (\mathbf{w} -\mathbf{w}^*),\nonumber \vspace{-2mm}
\end{equation}}%
where the first term is the same as the continuous-time loss function, while the second term emerges as a discrete-time contribution due to a large learning rate. In particular, it encourages $\mathbf{w}$ to have a smaller norm around the minimum $\mathbf{w}_*$ in the kernel $K^2$. Therefore, to first order, the effect of the large learning rate can be understood as an effective weight decay term that encourages a smaller norm.

\subsection{State-Dependent Noise}\label{sec: state-dept noise}
In reality, the minibatch noise depends on the current value of $\mathbf{w}$, and the noise is, therefore, not constant \citep{Simsekli2019tailindex,simsekli2020hausdorff,hodgkinson2020multiplicative,meng2020dynamic,ziyin2021minibatch,mori2021logarithmic}. This means that the noise covariance is a function of the parameters covariance, namely $C=C(\Sigma)$. The general Theorem~\ref{theo: main theorem with momentum} is still applicable to this setting, and $\Sigma$ can be solved by setting $C=C(\Sigma)$, provided that $\Sigma$ exists. One example of state-dependent noise is given in section~\ref{sec: second order} where the noise covariance is set to $C\approx K\Sigma K$. After the publication of this work, we notice that \citet{ziyin2021minibatch} has applied our formalism to study the exact shape and strength of SGD noise when minibatch sampling is used, and we refer the readers to this work for a more detailed study of the state-dependent noise in SGD.

%\vspace{-2mm}
\section{Experiments}\label{sec: experiments}
\vspace{-1mm}
%In this section, we perform experiments to confirm our theory. %We start with an isotropic Gaussian noise, and then investigate the minibatch noise. We test the general case when the momentum is present as well.
%\vspace{-1mm}
%\subsection{Gaussian Noise}
%\vspace{-1mm}
\begin{figure}[bt!]
    \centering
    \begin{subfigure}{0.49\columnwidth}
    \includegraphics[width=\linewidth]{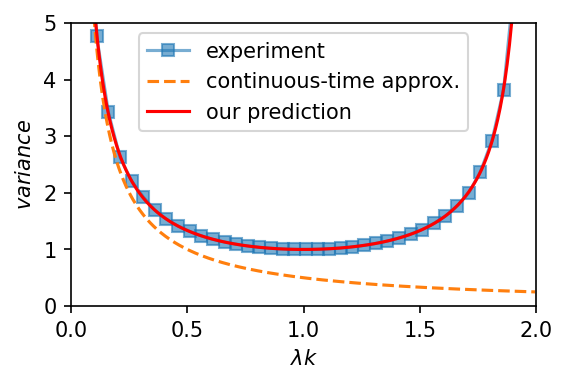}
    \vspace{-2em}
    \caption{White noise}
    \end{subfigure}
    \hfill
    \begin{subfigure}{0.49\columnwidth}
    \includegraphics[width=\linewidth]{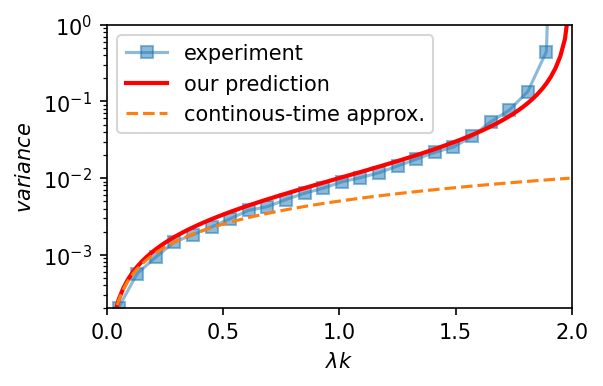}
    \vspace{-1.5em}
    \caption{Minibatch noise}
    \end{subfigure}
    \vspace{-1.0em}
    \caption{Comparison for a 1d example for predicting $\Sigma$. We see that the continuous prediction is only acceptable for $\lambda k<1$, while the discrete theory holds well for all allowed $\lambda$. (a) White noise with strength $\sigma^2=1$. (b) Minibatch noise.}
    \vspace{-1em}
    \label{fig:1d noise}
\end{figure}
\textbf{Gaussian Noise}. We first consider the case when $w\in \mathbb{R}$ is one-dimensional. The loss function is $L(w) = \frac{1}{2}kw^2$ with $k=1$. In Figure~\ref{fig:1d noise}(a), we plot the variance of $w$ after $1000$ training steps from $10^4$ independent runs. We compare the prediction of Corollary~\ref{corollary: momentum commuting} with that of the continuous-time approximation in \citet{Mandt2017}. We see that the proposed theory agrees excellently with the experiment, whereas the standard continuous-time approximation fails as $\lambda$ increases. Moreover, the continuous-time approximation fails to predict the divergence of the variance at $\lambda k\to 2$, whereas the discrete theory captures this very well.
Now, we consider a multidimensional case. We set the loss function to be
 $L(\mathbf{w}) = \frac{1}{2} \mathbf{w}^{\rm T} K \mathbf{w}$.
For visualization, we choose $D=2$, and we set the eigenvalues of the Hessian matrix to be $1$ and $0.1$, and plot the fluctuation along the eigenvectors of this Hessian matrix. See Figures~\ref{fig:2d noise}(a)-(c). As before, we compare the discrete-time results with the theoretical predictions of \citet{Mandt2017}. After diagonalization, the continuous-time dynamics predicts $\Sigma=\rm{diag}(\lambda/2, \lambda/0.2)$, whereas the discrete theory predicts $\Sigma=\rm{diag}\left(\frac{\lambda}{2-\lambda}, \frac{\lambda}{0.1(2-0.1\lambda)}\right)$. The proposed theory exhibits no noticeable deviation from the experiment and successfully predicts a distortion along the direction with a large eigenvalue in the Hessian. In comparison, the continuous-time approximation always underestimates the fluctuation in the learning, and the discrepancy is larger as the learning rate gets larger; the prediction of the continuous-time theory can deviate arbitrarily far from the experiment as $\lambda$ gets close to the divergence value.

\begin{figure}[bt!]
    \centering
    \begin{subfigure}{0.32\columnwidth}
    \includegraphics[width=\linewidth]{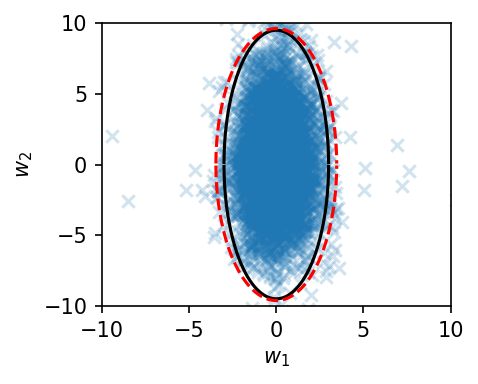}
    \caption{$\lambda=0.5$}
    \end{subfigure}
    \hfill
    \begin{subfigure}{0.32\columnwidth}
    \includegraphics[width=\linewidth]{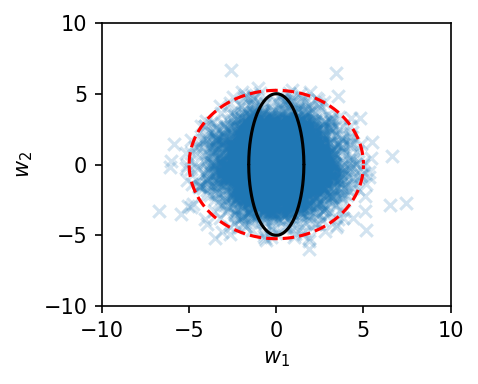}
    \caption{$\lambda=1.8$}
    \end{subfigure}
    \hfill
    \begin{subfigure}{0.32\columnwidth}
    \includegraphics[width=\linewidth]{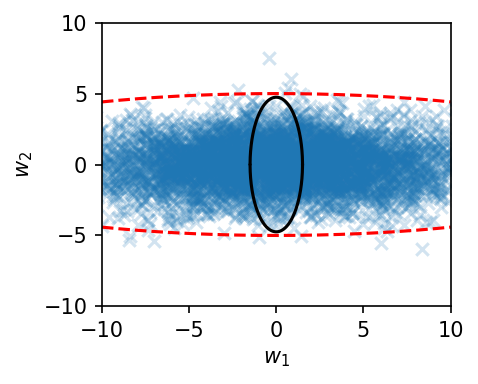}
    \caption{$\lambda=1.99$}
    \end{subfigure}
    %\vspace{-1.0em}
    %\caption{Comparison between our prediction and that of \cite{Mandt2017} for white noise at the predicted $3\sigma$ confidence interval, i.e., $99\%$ of data points lie within the boundary of the theory. We see that our prediction agrees well with the experimental distribution across all levels of the learning rate, whereas the prediction by \cite{Mandt2017} only applies when $\lambda$ is small, as is required by their continuous-time approximation.}
    %\label{fig:2d white noise}
%\end{figure}

%\begin{figure}[bt!]
    \centering
    \begin{subfigure}{0.32\columnwidth}
    \includegraphics[width=\linewidth]{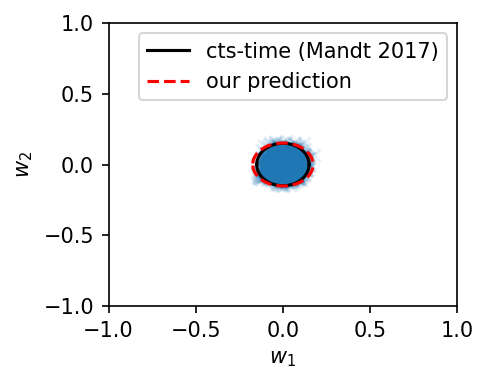}
    \caption{$\lambda=0.5$}
    \end{subfigure}
    \begin{subfigure}{0.32\columnwidth}
    \includegraphics[width=\linewidth]{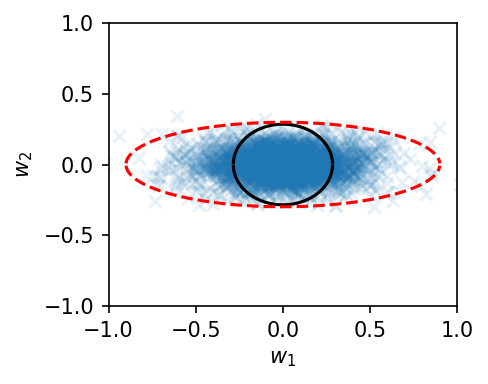}
    \caption{$\lambda=1.8$}
    \end{subfigure}
    \begin{subfigure}{0.32\columnwidth}
    \includegraphics[width=\linewidth]{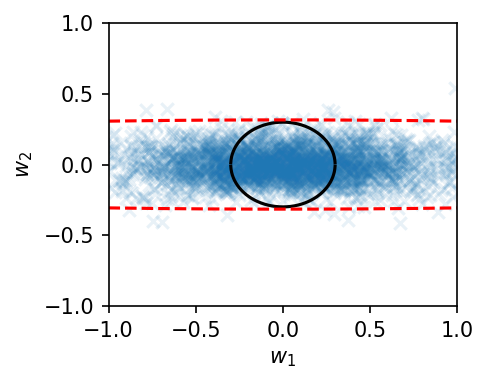}
    \caption{$\lambda=1.99$}
    \end{subfigure}
    \vspace{-1.0em}
    \caption{Comparison between our prediction and that of \citet{Mandt2017}  at $3\sigma$ confidence interval, i.e., $99\%$ of data points lie within the boundary of the theory. We see that our prediction agrees well with the experiments across all levels of the learning rate, whereas the prediction by \citet{Mandt2017} applies only at small $\lambda$. (a)-(c) White noise. (d)-(f) Minibatch noise. }
    \label{fig:2d noise}
    \vspace{-6mm}
\end{figure}

%\vspace{-2mm}
%\subsection{Minibatch Noise}
%\vspace{-1mm}
\textbf{Minibatch Noise}. For minibatch noise,
we solve a linear regression task with the loss function 
%\begin{align}
    $L(w)=\frac{1}{N}\sum_{i}^N(\mathbf{w}^{\rm T}x_i - y_i)^2$,
%\end{align}
where $N=1000$ is the number of data points; for the 1d case, the data points $x_i$ are sampled independently from a normal distribution $\mathcal{N}(0, 1)$; $y_i = \mathbf{w}^*x_i + \epsilon_i$ with a constant but fixed $\mathbf{w}^*$, $\epsilon_i$ are noises, also sampled from a normal distribution. For sampling of minibatches, we set the batch size $S=100$. 
%As before, we first compare the predicted variance of $w$ for a one-dimensional problem, and then compare the predicted distribution on a two-dimensional problem. 
The theoretical noise covariance matrix is approximated by $C\approx  K/ S$. See Figure~\ref{fig:1d noise}(b) for the 1d comparison. We see that the proposed theory agrees much better with the experiment than the continuous theory, both in trend and in magnitude. We also compare the predicted distribution for $D=2$. Here, the data points $x_i$ are sampled from $\mathcal{N}(0, \rm{diag}(1, 0.1))$, which makes the expected Hessian equal to $\rm{diag}(1, 0.1)$. 
%We set $\mu=0.9$.
See Figures~\ref{fig:2d noise}(d)-(f) for the illustration. Again, an overall agreement with the experiment is much better for the proposed theory. We notice that the prediction by \citet{Mandt2017} and the discrete theory agree well with each other when $\lambda$ is small. %, suggesting that in the continuous-time limit, the minibatch noise causes an isotropic fluctuation in the learned parameters. 
Interestingly, the proposed theory slightly overestimates the variance of the parameters. This suggests the limitation of the commonly used approximation, $C\sim K$, of the minibatch noise. It is possible to treat the minibatch noise in discrete-time regime rigorously in the proposed framework \citep{ziyin2021minibatch,mori2021logarithmic}.

%\begin{figure}
%    \centering
%    \includegraphics[width=0.45\linewidth]{plots/com%parison_minibatch.png}
%    \caption{Comparisons between different continuous time approximation methods and experiments for an 1-dimensional example with minibatch noise with strength $\sigma^2=1$.}
%    \label{fig:1d minibatch noise}
%\end{figure}

%\vspace{-2mm}
%\subsection{SGD with Momentum}
%\vspace{-1mm}
\textbf{SGD with Momentum}. For white noise, we set $L(w)= kw^2/2$ as before. In Figure~\ref{fig:1d white noise with momentum}(a), we plot the case with $\lambda k >2$, where the dynamics will diverge if no momentum is present. The experiment does show this divergence at the value of $\mu\to \lambda k/2 -1$ implied by the necessary stability condition, in contrast to the continuous-time theory that predicts no divergence. 
%In Figure~\ref{fig:1d white noise with momentum}(b), we plot the case with $\lambda k =1$; this is the case in which there is no divergence for $\mu<1$. Here, we see that the continuous-time theory predicts with an error that does not diminish even if $\mu$ is close to $1$. 
In Figure~\ref{fig:1d white noise with momentum}(b), we show the experiments with minibatch noise with the same $\lambda$. The loss is the same as that of the minibatch noise. The predicted theory agrees better than the continuous-time approximation. On the other hand, the agreement becomes worse as the fluctuation in $w$ becomes large. This suggests the limitation %\footnote{We note that a slightly better approximation is $C=K\Sigma K$; see the discussion in Section~\ref{sec:ngd}.} 
of the commonly used approximation of minibatch noise, i.e., $C\sim H(w)=K$. More experimental results with a smaller learning rate are included in Appendix~\ref{app: SGDM exp}.

\begin{figure}[bt!]
    \centering
    \begin{subfigure}{0.49\columnwidth}
    \includegraphics[width=\linewidth]{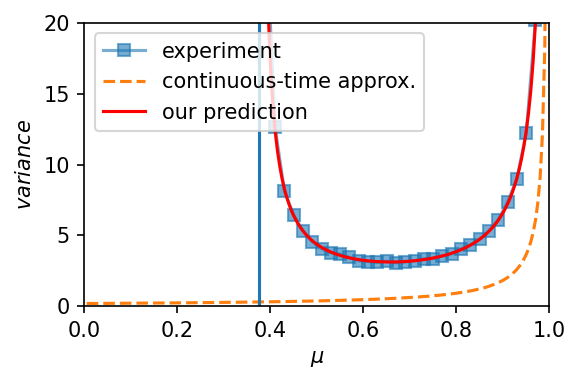}
    \vspace{-2em}
    \caption{White noise}
    \end{subfigure}
    \hfill
   % \begin{subfigure}{0.49\columnwidth}
    %\includegraphics[width=\linewidth]{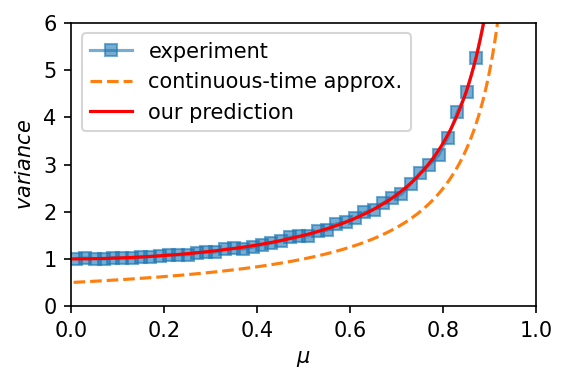}
    %\vspace{-2em}
    %\caption{$\lambda k=1$}
    %\end{subfigure}
    %
    \begin{subfigure}{0.49\columnwidth}
    \includegraphics[width=\linewidth]{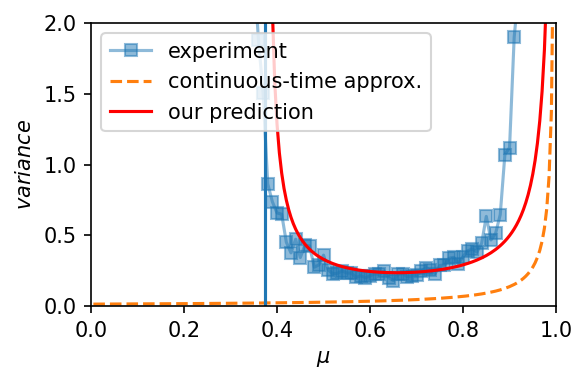}
    \vspace{-2em}
    \caption{Minibatch noise}
    \end{subfigure}
    %\hfill
    %\begin{subfigure}{0.49\columnwidth}
    %\includegraphics[width=\linewidth]{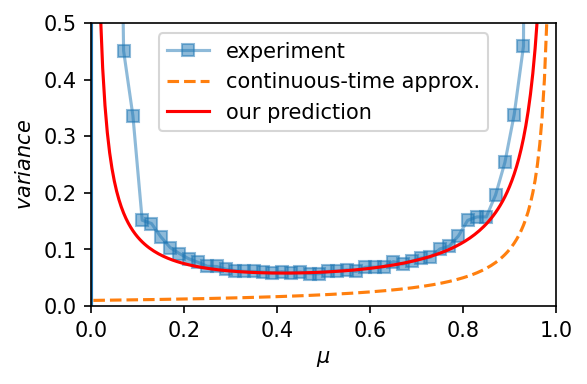}
    %\vspace{-2em}
    %\caption{$\lambda k=2$}
    %\end{subfigure}
    \vspace{-1.0em}
    \caption{Comparison between the continuous-time theory and the discrete-time theory with momentum. $\lambda k =2.75$ for both experiments. (a) White noise. The vertical line shows $\mu= \frac{\lambda k -2}{2}$, where the variance is predicted to diverge. %(b) $\lambda k =1$. In this case, the fluctuation does not diverge when $\mu<1$. However, the error of the continuous-time approximation does not diminish even if $\mu$ gets large.
    (b) Minibatch noise.} %Even in the case of minibatch noise, the proposed theory agrees much better with the experiments. }
    \label{fig:1d white noise with momentum}
    \vspace{-1em}
\end{figure}

\vspace{-3mm}
\section{Applications}\label{sec: applications}
\vspace{-1mm}
In this section, we apply our exact solution to some important problems studied in the recent literature, such as the derivation of the Bayesian optimal learning rate in section~\ref{sec: optimal learning rate}, and escape from a sharp minimum\footnote{In the existing machine learning literature, there are two definitions for the escape rate. The first was introduced by \citet{Zhu2019}. We call it the ``escape rate of the first kind" (in physics, this rate is more properly called the ``thermalization rate" rather than the ``escape rate" ). The second type is the familiar ``Kramers escape rate" \citep{Kramers1940}.}. %, which , in a machine-learning context, was first studied by \cite{Xie2020}.} in sections~\ref{subsec: escape} and \ref{sec: escape 2}. 
Previously, these problems have been solved with the continuous-time approximation in a quadratic loss. %Therefore the results incur an error of order $(\lambda K)^2$, and this error may become arbitrarily large when $\lambda k^*=\mathcal{O}(1)$. 
It is therefore of interest to investigate how the exact solution corrects the established results in the large-$\lambda$ regime. The detailed derivations are included in Appendix~\ref{app: applications}.

\vspace{-2mm}
\subsection{Optimal Bayesian Learning Rate}\label{sec: optimal learning rate}
\vspace{-1mm}
We follow the setting of \citet{Mandt2017} for analyzing SGD as an approximate Bayesian inference algorithm. We assume a probabilistic model $p(\mathbf{w},\mathbf{x})$ with $N$-dimensional data $\mathbf{x}$. Our goal is to approximate the posterior\vspace{-2mm}
\begin{equation}
    p(\mathbf{w}|\mathbf{x})=\exp[\ln p(\mathbf{w},\mathbf{x})-\ln p(\mathbf{x})].
    \vspace{-1mm}
\end{equation}
The loss function is defined as
%\begin{align}
    $L(\mathbf{w}):=\frac{1}{N}\sum_{n=1}^{N}l_n (\mathbf{w})$,
%\end{align}
where
%\begin{align}
    $l_n (\mathbf{w}):=-\ln p(x_n |\mathbf{w})-\frac{1}{N}\ln p(\mathbf{w})$.
%\end{align}
The posterior is approximately Gaussian:\vspace{-2mm}
\begin{equation}
    f(\mathbf{w})\propto \exp\left\{-\frac{N}{2}\mathbf{w}^{\rm T}K\mathbf{w}\right\}.\label{posterior}\vspace{-2mm}
\end{equation}
\begin{theorem}\label{thm: Bays}%$($Optimal learning rate for approximate Bayesian inference.$)$
If the noise covariance is $C=\frac{N-S}{NS}K$, the optimal learning rate for minimizing the KL divergence $D_{\rm KL}(q||f)$ between the SGD stationary distribution $q(\mathbf{w})$ in Theorem~\ref{thm: SGD} and the posterior \eqref{posterior} is the solution to\vspace{-1mm}
\begin{equation}
    \frac{N-2S}{S}\sum_{i=1}^{D}\frac{k_i}{2-\lambda k_i}+\frac{N-S}{S}\lambda\sum_{i=1}^{D}\frac{k_i^2}{(2-\lambda k_i)^2}=\frac{D}{\lambda}, \label{eq: DKL=0}\vspace{-1mm}
\end{equation}
where $k_i$ is the $i$-th eigenvalue of the Hessian $K$.
\end{theorem}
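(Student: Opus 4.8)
The plan is to reduce the statement to a one‑variable calculus problem by diagonalizing in the common eigenbasis of $K$ and $\Sigma$. First I would note that the two distributions in question are centered Gaussians: the approximate posterior $f$ in \eqref{posterior} is $\mathcal{N}\!\left(0,(NK)^{-1}\right)$, and, since the prescribed noise covariance $C=\frac{N-S}{NS}K$ commutes with $K$, Theorem~\ref{thm: SGD} — in its Gaussian‑noise form, cf.\ Theorem~\ref{thm: typical noise} — gives $q=\mathcal{N}(0,\Sigma)$ with $\Sigma$ diagonal in the eigenbasis of $K$. Writing $a:=\frac{N-S}{NS}$ and letting $k_i$ denote the eigenvalues of $K$, Corollary~\ref{cor: typical noise 1} (the $\sigma^2=0$ case) yields $\sigma_i=\frac{\lambda a}{2-\lambda k_i}$, which is well defined and positive precisely on the admissible interval $0<\lambda<2/k^{*}$.

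Next I would substitute into the closed form for the KL divergence between two zero‑mean multivariate Gaussians,
\[
  D_{\rm KL}(q\|f)=\tfrac12\Big[{\rm Tr}(NK\Sigma)-D-\ln\det(NK)-\ln\det\Sigma\Big],
\]
and diagonalize, obtaining
\[
  2D_{\rm KL}=\sum_{i=1}^{D}\left[\,Na\lambda\,\frac{k_i}{2-\lambda k_i}-1+\ln\frac{2-\lambda k_i}{Na\lambda k_i}\,\right],
\]
which is now an explicit elementary function of the scalar $\lambda$. Differentiating term by term, using $\frac{d}{d\lambda}\frac{k_i}{2-\lambda k_i}=\frac{k_i^2}{(2-\lambda k_i)^2}$ and $\frac{d}{d\lambda}\ln\frac{2-\lambda k_i}{Na\lambda k_i}=-\frac{k_i}{2-\lambda k_i}-\frac1\lambda$, and setting $\frac{d}{d\lambda}D_{\rm KL}=0$, I get
\[
  (Na-1)\sum_{i=1}^{D}\frac{k_i}{2-\lambda k_i}+Na\lambda\sum_{i=1}^{D}\frac{k_i^2}{(2-\lambda k_i)^2}=\frac{D}{\lambda}.
\]
Substituting $Na=\frac{N-S}{S}$, hence $Na-1=\frac{N-2S}{S}$, reproduces \eqref{eq: DKL=0}.

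To close the argument I would check that this critical point is the actual minimizer: as $\lambda\downarrow0$ the $-\ln\lambda$ contributions send $D_{\rm KL}\to+\infty$, and as $\lambda\uparrow2/k^{*}$ the mode with eigenvalue $k^{*}$ forces $\frac{k_i}{2-\lambda k_i}\to+\infty$, so $D_{\rm KL}$ is coercive on $(0,2/k^{*})$ and therefore attains its minimum at an interior critical point, i.e.\ at a solution of \eqref{eq: DKL=0} (a short second‑derivative check shows this critical point is unique when $N>2S$). The only place where care is genuinely required is the differentiation: $\Sigma$ itself depends on $\lambda$, so one must differentiate both ${\rm Tr}(NK\Sigma)$ and $\ln\det\Sigma=\sum_i\ln\frac{\lambda a}{2-\lambda k_i}$ — it is precisely the $\ln\lambda$ piece of $\ln\det\Sigma$ that produces the $D/\lambda$ on the right‑hand side and shifts the coefficient of the first sum from $Na$ to $Na-1$; treating $\Sigma$ as $\lambda$‑independent, which is tempting, would give the wrong equation.
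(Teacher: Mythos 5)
Your proposal is correct and follows essentially the same route as the paper's own proof: compute the Gaussian KL divergence, substitute the stationary covariance $\Sigma=\lambda\frac{N-S}{NS}(2I_D-\lambda K)^{-1}$ from Theorem~\ref{thm: typical noise}/Corollary~\ref{cor: typical noise 1}, differentiate in $\lambda$ (correctly accounting for the $\lambda$-dependence of $\ln\det\Sigma$, which is where the $D/\lambda$ and the shift $Na\to Na-1$ come from), and set the derivative to zero. Your added coercivity check that the critical point is an interior minimizer is a small bonus the paper omits, but the core argument is identical.
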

\vspace{-2mm}

This relation constitutes a general solution to this problem, which can be solved numerically. The result by \citet{Mandt2017} can be seen as a solution to the above equation after ignoring non-linear terms in $\lambda$, which gives $\lambda^{*}_{c}=2\frac{S}{N}\frac{D}{\text{Tr}[K]}$ under the assumptions that $S\ll N$ and $\lambda k\ll 1$. With increasing $\lambda k$, one requires increasingly higher-order corrections from Eq.~\eqref{eq: DKL=0}.

\vspace{-2mm}
\subsection{Escape Rate of the First Kind }\label{subsec: escape}
\vspace{-1mm}
Now, we investigate the effect of discreteness on the escape rate from a sharp minimum. The first indicator for the escape rate, called the escaping efficiency, is proposed by \citet{Zhu2019} as\vspace{-2mm}
\begin{equation}
    E(t):=\mathbb{E}[L(\mathbf{w}_t)-L(\mathbf{w}_0)],\vspace{-2mm}
\end{equation}
where $\mathbf{w}_0$ is the exact minimum and $t$ is a fixed time. This indicator qualitatively characterizes the ability of escape from the minimum $\mathbf{w}_0$. It is related to the escape probability via the Markov inequality $
%\begin{align}
    P(L(\mathbf{w}_t)-L(\mathbf{w}_0)\ge \delta)\le \frac{E}{\delta}$,
%\end{align}
for $\delta>0$, if $\delta=\Delta L$ is the height of the potential barrier.

\begin{theorem}\label{thm: discrete escape efficiency}$($Escaping efficiency from a sharp minimum$)$ 
Let the algorithm be updated according to Eq.~\eqref{momentumSGD} with $\mu=0$. Then the escaping efficiency is\vspace{-2mm}
\begin{equation}
    E_d= \frac{\lambda}{4} \text{Tr}\left[\left(I_D -\frac{\lambda K}{2}\right)^{-1}\left[I_D - (I_D-\lambda K)^{2t}\right]C\right]. \label{eq: efficiency}\vspace{-2mm}
\end{equation}
\end{theorem}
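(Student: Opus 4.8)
The plan is to reduce the escaping efficiency to a \emph{transient} second-moment computation and then collapse a matrix geometric series. Recall that the parameter was shifted by $\mathbf{w}^*$ in Section~\ref{sec: theory of SGD}, so the exact minimum $\mathbf{w}_0$ sits at the origin and $L(\mathbf{w}_0)=0$. Since $L(\mathbf{w})=\frac12\mathbf{w}^{\rm T}K\mathbf{w}$, the escaping efficiency becomes
\begin{equation}
E_d=\mathbb{E}[L(\mathbf{w}_t)]=\frac12\mathbb{E}[\mathbf{w}_t^{\rm T}K\mathbf{w}_t]=\frac12{\rm Tr}[K\Sigma_t],\qquad \Sigma_t:=\mathbb{E}[\mathbf{w}_t\mathbf{w}_t^{\rm T}],
\end{equation}
where the expectation is over the finite noise sequence $\eta_0,\dots,\eta_{t-1}$. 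A point to watch is that $\Sigma_t$ is the covariance at a \emph{fixed} step $t$ started exactly from the minimum, not the stationary covariance of Theorem~\ref{thm: SGD}, so that result cannot be substituted directly; $\Sigma_t$ must be recomputed.

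Next I would iterate the update starting from the minimum. With $\mu=0$, Eq.~\eqref{momentumSGD} reads $\mathbf{w}_t=A\mathbf{w}_{t-1}-\lambda\eta_{t-1}$ with $A:=I_D-\lambda K$, and unrolling from $\mathbf{w}_0=0$ gives $\mathbf{w}_t=-\lambda\sum_{s=0}^{t-1}A^{t-1-s}\eta_s$. Because the $\eta_s$ are mean-zero, mutually uncorrelated with covariance $C$, and $A=A^{\rm T}$ (a polynomial in the symmetric $K$), this yields $\Sigma_t=\lambda^2\sum_{j=0}^{t-1}A^{j}CA^{j}$. Feeding this into the trace and using that $A$ commutes with $K$, so ${\rm Tr}[KA^jCA^j]={\rm Tr}[KA^{2j}C]$, gives $E_d=\frac{\lambda^2}{2}{\rm Tr}\big[K\big(\sum_{j=0}^{t-1}A^{2j}\big)C\big]$. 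The stability window $0<\lambda<2/k^*$ puts every eigenvalue of $A$ in $(-1,1)$, so $I_D-A^2$ is invertible and the finite matrix geometric series sums to $\sum_{j=0}^{t-1}A^{2j}=(I_D-A^{2t})(I_D-A^2)^{-1}$. Finally $I_D-A^2=2\lambda K-\lambda^2K^2=\lambda K(2I_D-\lambda K)$ cancels the leading factor of $K$ (again using commutativity), and writing $(2I_D-\lambda K)^{-1}=\frac12(I_D-\frac{\lambda K}{2})^{-1}$ produces exactly \eqref{eq: efficiency}.

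I do not expect a genuine obstacle here: the argument is essentially bookkeeping. The only two spots that need care are (i) not conflating the transient $\Sigma_t$ with the stationary $\Sigma$, and (ii) justifying the matrix geometric series and invertibility of $I_D-A^2$ through the spectral bound on $A$ — this is also the step where a generalization to a preconditioning matrix $\Lambda$ would be harder, since $A=I_D-\Lambda K$ need not commute with $K$ and one would instead vectorize $\sum_j A^j C (A^{\rm T})^j$. As a consistency check, letting $t\to\infty$ sends $A^{2t}\to 0$ and gives $E_d\to\frac{\lambda}{2}{\rm Tr}[(2I_D-\lambda K)^{-1}C]=\frac12{\rm Tr}[K\Sigma]$ for the stationary $\Sigma$ of Theorem~\ref{thm: SGD}, as it must.
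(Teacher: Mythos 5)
Your proposal is correct and follows essentially the same route as the paper's proof: unroll the recursion from $\mathbf{w}_0=0$, take the second moment (the cross-terms drop because the noises are mean-zero and uncorrelated), and collapse the finite matrix geometric series via $I_D-(I_D-\lambda K)^2=\lambda K(2I_D-\lambda K)$. The only cosmetic difference is that you package the computation as ${\rm Tr}[K\Sigma_t]/2$ while the paper expands $\mathbb{E}[L(\mathbf{w}_t)]$ directly, and your observation that only uncorrelatedness (not Gaussianity) is needed is a mild sharpening of the paper's argument.
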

The subscript $d$ indicates discrete-time. In comparison,  the escaping efficiency calculated from the continuous-time approximation is given by \citet{Zhu2019}\vspace{-2mm}
\begin{equation}
    E_c =\frac{\lambda}{4}\text{Tr}\left[\left(I_D-e^{-2\lambda K t}\right)C\right],
    \vspace{-2mm}
\end{equation}
% We can derive the same indicator directly from the discrete-time update rule \eqref{eq: eom}, obtaining
% \begin{align}
%     E_d&=\frac{\lambda^2}{2}\sum_{i=0}^{t-1}\text{Tr}\left[CK(I_D -\lambda K)^{2i}\right]\nonumber\\
%     &= \frac{\lambda}{4} \text{Tr}\left[\left(I_D -\frac{\lambda K}{2}\right)^{-1}\left[I_D - (I_D-\lambda K)^{2t}\right]C\right]. \label{eq: efficiency}
% \end{align}
%  The details can be found in the Supplementary Materials. 
where the subscript $c$ indicates continuous-time. The two results can be easily compared in two limiting cases. First, in the short-time limit, the continuous-time theory predicts $E_c=\frac{t\lambda^2}{2}\text{Tr}[KC]$, which coincides with the single-step $t=1$ result given by the discrete theory.
 %, we consider a single step $t=1$. The indicator is given by $E_d=\frac{\lambda^2}{2}\text{Tr}[KC]$, which is equal to the continuous one. 
Second, when $t\gg 1$, The continuous indicator becomes Hessian-independent: $E_c = \frac{\lambda}{4}\text{Tr}[C]$,
 %, which is trivial because the effect of the sharpness of the minimum is diminished.
whereas the discrete result still depends on the curvature: $E_d=\frac{\lambda}{2}\text{Tr}\left[(2I_D -\lambda K)^{-1}C\right]$. The conclusion that a flatter minimum relates to a smaller escaping efficiency still holds. If we take the small-$\lambda$ limit, it recovers the trivial continuous result. In Figure~\ref{fig:first escape rate}, we compare the prediction of Eq.~\eqref{eq: efficiency} with the continuous theory. We set $C=I_D$, $K=I_D$ and compare at two different levels of learning rate. The result is averaged over $50000$ runs. We see that our solution agrees with the experiment perfectly, while the continuous theory significantly underestimates the escape rate and fails at a large learning rate.
 
 \begin{figure}[bt!]
     \centering
     \begin{subfigure}{0.42\columnwidth}
     \includegraphics[width=\columnwidth]{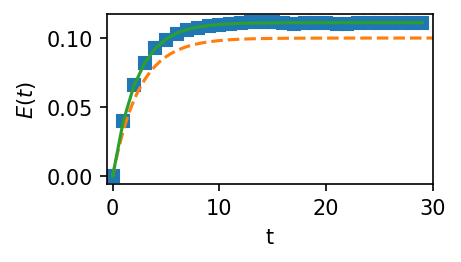}
      \vspace{-2em}
     \caption{$\lambda=0.2$}
     \end{subfigure}
     %\hfill
     \begin{subfigure}{0.42\columnwidth}
     \includegraphics[width=\columnwidth]{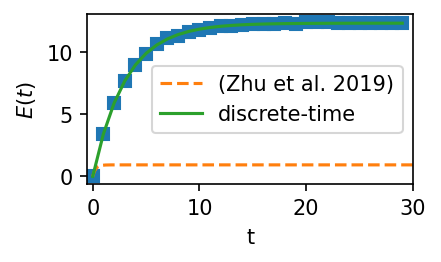}
      \vspace{-2em}
     \caption{$\lambda=1.85$}
     \end{subfigure}
     \vspace{-1em}
     \caption{Theoretical prediction of the escape efficiency $E(t)$ vs. experiment. The continuous-time prediction follows from \citet{Zhu2019}. The discrete-time prediction agrees very well with the experiment, whereas the continuous-time prediction shows significant deviations as $\lambda$ increases and completely fails at $\lambda$ is close to $2$. }
    \label{fig:first escape rate}
     \vspace{-1em}
 \end{figure}
  The following corollary shows that the discrete theory predicts a higher escape probability than the continuous one.
 \begin{corollary}\label{cor: Ed > Ec} $\forall\ 0< \lambda <2/k^{*}$ and $t\ge 0$, $E_d\ge E_c$.
 \end{corollary}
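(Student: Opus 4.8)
The plan is to reduce the matrix inequality to a scalar one and then settle the scalar inequality by an elementary case split. First, because $K$ is symmetric positive definite, I diagonalize $K=\sum_i k_i v_i v_i^{\rm T}$ with $\{v_i\}$ orthonormal; each of $(I_D-\lambda K/2)^{-1}$, $(I_D-\lambda K)^{2t}$, and $e^{-2\lambda K t}$ is a function of $K$ and hence shares this eigenbasis, so with $w_i:=v_i^{\rm T}Cv_i\ge 0$ (nonnegative since $C$ is a covariance matrix, hence positive semidefinite),
\[
E_d=\frac{\lambda}{4}\sum_i\frac{1-(1-\lambda k_i)^{2t}}{1-\lambda k_i/2}\,w_i,\qquad E_c=\frac{\lambda}{4}\sum_i\bigl(1-e^{-2\lambda k_i t}\bigr)w_i.
\]
Since $\lambda/4>0$ and each $w_i\ge 0$, it suffices to prove, for every eigenvalue and with $x:=\lambda k_i\in(0,2)$ (using $\lambda<2/k^{*}$) and $t\in\mathbb{N}^0$, that
\[
\frac{1-(1-x)^{2t}}{1-x/2}\ \ge\ 1-e^{-2xt}.
\]

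Second, I settle this scalar inequality. The case $t=0$ is an equality. For $t\ge 1$ I split on $x$. \emph{Case $x\in(0,1]$:} here $0\le 1-x\le e^{-x}$, so $(1-x)^{2t}\le e^{-2xt}$ and therefore $1-(1-x)^{2t}\ge 1-e^{-2xt}\ge 0$; dividing the left-hand side by $1-x/2\in[1/2,1)$ only enlarges it, which proves the inequality. \emph{Case $x\in(1,2)$:} now $x-1\in(0,1)$ and $2t\ge 2$, so $(1-x)^{2t}=(x-1)^{2t}\le(x-1)^2$; moreover $x-2(x-1)^2=(2x-1)(2-x)>0$ on $(1,2)$, hence $2(1-x)^{2t}\le 2(x-1)^2<x$, and so
\[
\frac{1-(1-x)^{2t}}{1-x/2}=\frac{2-2(1-x)^{2t}}{2-x}>\frac{2-x}{2-x}=1\ge 1-e^{-2xt}.
\]
This establishes the scalar inequality in all cases and hence the corollary.

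The main obstacle is the regime of small $x$ and large $t$: there $(1-x)^{2j}$ and $e^{-2xj}$ are extremely close, and neither a term-by-term comparison of the geometric series $\frac{1-(1-x)^{2t}}{1-x/2}=2x\sum_{j=0}^{t-1}(1-x)^{2j}$ (which is also what re-derives the telescoped form $E_d=\frac{\lambda^2}{2}\sum_{j=0}^{t-1}{\rm Tr}[(I_D-\lambda K)^{2j}KC]$) against $1-e^{-2xt}=(1-e^{-2x})\sum_{j=0}^{t-1}e^{-2xj}$, nor an induction on $t$, is cleanly signed. The case split above avoids this: for $x\le 1$ the ordering $(1-x)^2\le e^{-2x}$ points the right way, while for $x>1$ the prefactor $1/(1-x/2)$ already dominates on its own. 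I expect verifying this dichotomy to be the only nontrivial point; the rest is elementary.
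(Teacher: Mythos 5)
Your proof is correct, and it follows the same overall strategy as the paper's: reduce the trace inequality to a per-eigenvalue scalar inequality and then verify the scalar inequality on $x:=\lambda k\in(0,2)$. Your reduction step is in fact stated more carefully than the paper's (you make explicit that the weights $w_i=v_i^{\rm T}Cv_i$ are nonnegative because $C$ is positive semidefinite, which is what licenses the component-wise comparison). The more substantive difference is in how the scalar inequality is settled. The paper rewrites it as $\left(1-\tfrac{x}{2}\right)e^{-2xt}\ge(1-x)^{2t}-\tfrac{x}{2}$ and then invokes $e^{-u}\ge 1-u$ to claim $e^{-2xt}\ge(1-x)^{2t}$; but raising $e^{-x}\ge 1-x$ to the even power $2t$ only preserves the inequality when $1-x\ge 0$, and indeed the intermediate step $e^{-2xt}\ge(1-x)^{2t}$ is false for $x$ in roughly $(1.28,2)$ (e.g.\ $x=1.5$, $t=1$ gives $e^{-3}\approx 0.05<0.25$). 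The conclusion survives because a later inequality in the paper's chain is loose, but the chain as written does not go through on the whole range $1<x<2$. Your case split --- using $(1-x)^{2t}\le e^{-2xt}$ only on $(0,1]$, and on $(1,2)$ instead showing $2(x-1)^{2t}\le 2(x-1)^2<x$ so that the prefactor $(1-x/2)^{-1}$ alone forces the left-hand side above $1$ --- closes exactly this gap, so your argument is not merely an alternative but a complete version of the proof.
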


 %This corollary suggests that the discrete theory predicts higher escape probability than the continuous one. 

\vspace{-2mm}
We then investigate the effect of anisotropic noise on the escape efficiency as in \citet{Zhu2019}. We consider different structure of noise with the same magnitude $\text{Tr}[C]$. We define an \textit{ill-conditioned} Hessian $K$ as its descending ordered eigenvalues $k_1\ge k_2\cdots \ge k_D>0$ satisfy  $k_{l+1},k_{l+2},\dots ,k_{D}<k_1 D^{-d}$ for some constant $l\ll D$ and $d>1/2$. We assume that $C$ is \textit{aligned with} $K$. Let $u_i$ be the corresponding unit eigenvector of eigenvalue $k_i$. For some coefficient $a>0$, we have $
%\begin{align}
    u_1^{\text{T}}Cu_1\ge ak_1 \text{Tr}[C]/\text{Tr}[K]$.
%\end{align}
This is true if the maximal eigenvalues of $C$ and $K$ are aligned in proportion, namely $c_1 /\text{Tr}[C] \ge a_1 k_1/\text{Tr}[K]$.
\begin{theorem}\label{thm: efficiency ratio}
%$($Influence of anisotropic noise on escaping efficiency.$)$ 
Under the conditions of an ill-conditioned Hessian and a noise covariance that aligns with the Hessian, the ratio between the escaping efficiencies of an anisotropic noise and its isotropic version $\bar{C}:=\frac{\text{Tr}[C]}{D}I_D$ is\vspace{-2mm}
\begin{equation}
    \frac{\text{Tr}[KC]}{\text{Tr}[K\bar{C}]}=\mathcal{O}\left(aD^{2d-1}\right). \label{eq: efficiency ratio}\vspace{-2mm}
\end{equation}
\end{theorem}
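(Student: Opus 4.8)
The plan is to compute the two traces $\mathrm{Tr}[KC]$ and $\mathrm{Tr}[K\bar C]$ directly in the eigenbasis of $K$ and then take their ratio, using the ill-conditioning and alignment hypotheses to identify which eigendirections dominate. First I would write everything in the eigenbasis $\{u_i\}$ of $K$, so that $\mathrm{Tr}[KC] = \sum_{i=1}^{D} k_i\, (u_i^{\rm T} C u_i)$ and, since $\bar C = \frac{\mathrm{Tr}[C]}{D} I_D$ is isotropic, $\mathrm{Tr}[K\bar C] = \frac{\mathrm{Tr}[C]}{D}\,\mathrm{Tr}[K] = \frac{\mathrm{Tr}[C]}{D}\sum_{i=1}^{D} k_i$. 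The denominator is therefore immediately of order $\frac{\mathrm{Tr}[C]}{D}\,\mathrm{Tr}[K]$ up to constants, because $\mathrm{Tr}[K]$ is dominated by the $l\ll D$ large eigenvalues (the tail eigenvalues are each $< k_1 D^{-d}$, so the tail contributes at most $D\cdot k_1 D^{-d} = k_1 D^{1-d}$, which is subleading for $d>1/2$ — in fact for $d>1$; I should double-check the exponent bookkeeping here, see the obstacle below).

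Next I would lower-bound the numerator. Dropping all terms except $i=1$ and invoking the alignment hypothesis $u_1^{\rm T} C u_1 \ge a k_1 \mathrm{Tr}[C]/\mathrm{Tr}[K]$, we get
\begin{equation}
\mathrm{Tr}[KC] \ge k_1\, (u_1^{\rm T} C u_1) \ge a\, \frac{k_1^2}{\mathrm{Tr}[K]}\,\mathrm{Tr}[C].
\end{equation}
Dividing by the denominator,
\begin{equation}
\frac{\mathrm{Tr}[KC]}{\mathrm{Tr}[K\bar C]} \ge a\, \frac{k_1^2}{\mathrm{Tr}[K]}\,\mathrm{Tr}[C] \cdot \frac{D}{\mathrm{Tr}[C]\,\mathrm{Tr}[K]} = a\, D\, \frac{k_1^2}{(\mathrm{Tr}[K])^2}.
\end{equation}
It then remains to estimate $k_1^2/(\mathrm{Tr}[K])^2$. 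Here I would use ill-conditioning once more: $\mathrm{Tr}[K] = \sum_{i\le l} k_i + \sum_{i>l} k_i$, where the first sum is at most $l k_1$ and the tail is at most $D k_1 D^{-d}=k_1 D^{1-d}$; since $l$ is a constant and $d>1/2$ makes $D^{1-d}$ either bounded or slowly growing, one gets $\mathrm{Tr}[K] = \Theta(k_1 D^{\max(0,1-d)})$ (for $d>1$, $\mathrm{Tr}[K]=\Theta(k_1)$). Substituting $\mathrm{Tr}[K] \asymp k_1 D^{\max(0,1-d)}$ gives $k_1^2/(\mathrm{Tr}[K])^2 \asymp D^{-2\max(0,1-d)}$, so the ratio is $\Omega(a D^{1-2\max(0,1-d)}) = \Omega(a D^{2d-1})$ when $d<1$ and $\Omega(aD)$ when $d\ge 1$; in either case $\Omega(aD^{2d-1})$ up to the capping at the isotropic value. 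For the matching upper bound $\mathcal O(aD^{2d-1})$ I would similarly bound $\mathrm{Tr}[KC] \le k_1 \sum_i (u_i^{\rm T} C u_i) = k_1 \mathrm{Tr}[C]$ (plus, more carefully, split the sum at $l$ and bound the tail using $k_i < k_1 D^{-d}$), which produces the same scaling.

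The main obstacle is the careful exponent bookkeeping in $\mathrm{Tr}[K]$ and making the statement $\Theta(aD^{2d-1})$ consistent across the ranges $1/2<d<1$ and $d\ge 1$ — in particular whether the intended claim implicitly assumes the "aligned" constant $a$ already absorbs the ratio $\mathrm{Tr}[C]/\mathrm{Tr}[K]$ normalization, and whether the $l$ large eigenvalues are assumed comparable to $k_1$ (i.e. $k_i = \Theta(k_1)$ for $i\le l$) so that $\mathrm{Tr}[K]\asymp l k_1$ when $d>1$. I would state these as the precise hypotheses being used, then the two-sided bound follows from the elementary estimates above; the $\mathcal O(\cdot)$ in the theorem statement should be read as the order of magnitude (both upper and lower), capped at $D$ since no anisotropic reshuffling of a fixed $\mathrm{Tr}[C]$ can beat concentrating all the noise on the single stiffest direction.
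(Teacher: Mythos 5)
Your proposal follows essentially the same route as the paper's proof: expand $\mathrm{Tr}[KC]=\sum_i k_i\,u_i^{\rm T}Cu_i$ in the eigenbasis of $K$, keep only the $i=1$ term, apply the alignment bound $u_1^{\rm T}Cu_1\ge a k_1 \mathrm{Tr}[C]/\mathrm{Tr}[K]$, use $\mathrm{Tr}[K\bar C]=\frac{\mathrm{Tr}[C]}{D}\mathrm{Tr}[K]$, and then bound $\mathrm{Tr}[K]\le lk_1+(D-l)D^{-d}k_1$ via ill-conditioning to get $aD/[l+(D-l)D^{-d}]^2=\mathcal{O}(aD^{2d-1})$. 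The exponent-bookkeeping concern you raise is real but is shared by the paper itself, which only establishes the lower-bound direction and reads the result as $\mathcal{O}(aD^{2d-1})$ in the regime where the tail term $D^{1-d}$ dominates the constant $l$.
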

\begin{remark}
This result shows that the previous understanding that the anisotropy in noise may help escape from a sharp minimum still holds in a discrete-time regime. Therefore, the qualitative result in \citet{Zhu2019} still holds when a large learning rate is used.
%Both the continuous theory \citep{Zhu2019} and the exact result \eqref{eq: efficiency} give the same result on this ratio, which suggests that this qualitative result is still true even for long-time and large learning rates.
\end{remark}

\vspace{-3mm}
\subsection{Escape from Sharp Minima (Kramers Problem)}\label{sec: escape 2}
\vspace{-1mm}
In physics, the Kramers escape problem \citep{Kramers1940} concerns the approximate mean time for a particle confined in a local minimum of a potential $L(w)$ to escape across the potential barrier. For continuous-time dynamics, the standard approach to calculating this Kramers rate (or time) \citep{Hanggi1990,Kampen2011} is to employ the Fokker-Planck equation for the distribution $P_c(w,t)$ ($c$ for continuous-time)\vspace{-2mm}%, which is equivalent to the stochastic dynamics of the parameter $w$:\vspace{-1mm}
\begin{equation}
    \frac{\partial P_c(w,t)}{\partial t}=-\nabla\cdot J(w,t), \label{eq: FP}\vspace{-2mm}
\end{equation}
where the probability current is defined as $
    J(w,t):=-\lambda P_c(w,t)\nabla L(w) - \mathcal{D}\nabla P_c(w,t)
$. Here $\mathcal{D}:=\frac{1}{2}C$ is the diffusion matrix and $T$ is the effective ``temperature". The mean escape rate is defined as\vspace{-2mm}
\begin{equation}
    \gamma:=\frac{P(w\in V_a)}{\int_{\partial a}J\cdot dS},\label{eq: def of Kramers}\vspace{-2mm}
\end{equation}
where $P(w\in V_a)$ is the probability of a particle being inside the well $a$, and $\int_{\partial a}J\cdot dS$ is the probability flux through the boundary of the well $a$. 
%\cite{Xie2020} derived this rate using continuous diffusion theory. We would like to make several corrections with the discrete results. 
For illustration, see Figure~\ref{fig:escape}(a). In continuous theory, the probability inside the well $a$ can be approximated by $1$ because the distribution lies almost within the well. However, the discrete theory predicts larger fluctuations such that the distribution spreads out with large $\lambda$. By making proper approximations, we improve the result on the Kramers rate for the discrete SGD.

\begin{theorem}\label{thm: Kramers}%$($The Kramers escape rate for the discrete-time SGD.$)$ 
Let $k_a$ and $k_b$ be the local Hessian at the local minimum $a$ and the barrier top $b$, respectively. Suppose $l$  is a midpoint on the most probable escape path between $a$ and $b$ such that $k(\mathbf{w})\approx k_a$ in the path $a\to l$ and $k(\mathbf{w})\approx k_b$ in $l\to b$. The approximate Kramers escape rate from a local minimum $a$ for the discrete-time SGD is\vspace{-2mm}
\begin{multline}
    \gamma\approx\frac{1}{2\pi}|k_b|\sqrt{\frac{2}{2-\lambda k_a}}\textup{erf} \left(\sqrt{\frac{S(2-\lambda k_a)\Delta L}{\lambda k_a}}\right)\\
    \times\exp\left[-\frac{2S\Delta L}{\lambda}\left(\frac{l(1-\lambda k_a /2)}{k_a}+\frac{1-l}{|k_b|}\right)\right],\vspace{-2mm}
\end{multline}
where $\textup{erf}(z)$ is the error function.
%, $k_b$ is the local Hessian of the top of the barrier, and $l$ is a midpoint on the most probable escape path between $a$ and $b$ so that the temperature $T_a$ dominates the path $a\to l$ and $T_b$ dominates the path $l\to b$.
\end{theorem}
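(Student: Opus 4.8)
The plan is to follow the classical ``flux-over-population'' route to the Kramers rate, but with the continuous-time stationary density replaced everywhere by the exact discrete-time density of Theorem~\ref{thm: SGD} and its corollaries. First I would reduce to an effectively one-dimensional problem along the most probable escape path joining $a$ and $b$, and model the loss along that path by two quadratic branches: a confining branch of curvature $k_a>0$ on the segment $a\to l$, and an inverted branch of curvature $-|k_b|$ on $l\to b$, matched at the midpoint $l$ where $L$ reaches the prescribed fraction of the barrier height $\Delta L$. This is the standard piecewise device, and the matching fraction is exactly the parameter $l\in(0,1)$ appearing in the statement.

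The step that carries the discreteness is the quasi-stationary density in the well. By Corollary~\ref{cor: typical noise 1} with minibatch noise $C\approx K/S$ (so $a=1/S$), the density on the confining branch is Gaussian with variance $\Sigma_a=\lambda\,[\,S(2-\lambda k_a)\,]^{-1}$; rewriting it through $L=\tfrac12 k_a w^2$ gives $P_d(w)\propto\exp[-\beta_a L]$ with the \emph{effective} inverse temperature $\beta_a=\tfrac{2S}{\lambda}\bigl(1-\tfrac{\lambda k_a}{2}\bigr)$ in place of the continuous value $2S/\lambda$. I would then obtain the population $P(w\in V_a)$ by integrating this Gaussian over the well up to the exit point $w_b$ with $\tfrac12 k_a w_b^2=\Delta L$; since $\lambda$ need not be small, $\Sigma_a$ is not negligible against $w_b^2$, so this truncated integral produces the error function $\mathrm{erf}\bigl(\sqrt{S(2-\lambda k_a)\Delta L/(\lambda k_a)}\bigr)$ together with the factor $\sqrt{2/(2-\lambda k_a)}$, rather than the plain Gaussian normalization used in the small-$\lambda$ theory.

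For the flux through $\partial a$ I would use the probability-current formalism of Eq.~\eqref{eq: FP}: near the top, take the density to be the quasi-stationary form on the inverted branch, $P_d\propto\exp[-\beta_b(\Delta L-\tfrac12|k_b|(w-w_b)^2)]$, and fix the stationary current $J$ by the usual constancy-of-$J$ argument across the saddle; a Laplace evaluation of the resulting integral contributes the prefactor $\tfrac{1}{2\pi}|k_b|$ and a Boltzmann weight of the barrier evaluated with the piecewise temperature. Assembling the two segments with the matching at $l$ --- the $a$-segment weighted by $\beta_a$ and the $b$-segment by $\beta_b=2S/\lambda$ --- turns the single Arrhenius exponent into the sum $\tfrac{2S\Delta L}{\lambda}\bigl(\tfrac{l(1-\lambda k_a/2)}{k_a}+\tfrac{1-l}{|k_b|}\bigr)$. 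Taking the ratio in Eq.~\eqref{eq: def of Kramers} and collecting $\tfrac{1}{2\pi}$, $|k_b|$, $\sqrt{2/(2-\lambda k_a)}$ and the error function yields the claimed expression; as a sanity check I would confirm it reduces to the classical Kramers rate as $\lambda\to 0$ (the error function tends to $1$ and $\sqrt{2/(2-\lambda k_a)}\to 1$).

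The hard part will be deciding which effective temperature governs each segment and why the inverted branch is treated at the bare noise level, so that $\beta_b$ carries no $(1-\lambda|k_b|/2)$ correction: the discrete-time stationary density of Theorem~\ref{thm: SGD} is only literally valid in a stable quadratic well, so invoking a ``quasi-stationary'' version of it near an unstable saddle is an approximation that must be argued --- this is the ``proper approximations'' the text alludes to. A second subtlety is that discrete-time SGD has no exact Fokker--Planck equation, so the current-based flux computation is itself a continuum surrogate, and one has to check that what it discards is subleading to the $O(\lambda)$ corrections being kept. The remaining work --- the Gaussian and error-function integrals, the Laplace step at the top, and the final bookkeeping of prefactors --- is routine once the segment decomposition and the two effective temperatures are in place.
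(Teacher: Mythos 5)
Your proposal follows essentially the same route as the paper's proof: a flux-over-population computation in which the well population is a truncated Gaussian integral of the discrete-time stationary density (yielding the $\mathrm{erf}$ and $\sqrt{2/(2-\lambda k_a)}$ factors), the flux is obtained from the stationary probability current with a Laplace evaluation at the barrier top (yielding $\tfrac{1}{2\pi}|k_b|$), and only the $a$-segment temperature carries the discrete correction $1-\lambda k_a/2$. The caveats you flag --- the quasi-stationary density near the unstable saddle and the absence of an exact discrete-time Fokker--Planck equation --- are exactly the ones the paper acknowledges in its remark following the proof.
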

\vspace{-2mm}
To compare, the mean escape rate obtained from the continuous-time theory \citep{Xie2020} is\vspace{-2mm}
\begin{equation}
    \gamma_c=\frac{1}{2\pi}|k_b|\exp\left[-\frac{2S\Delta L}{\lambda}\left(\frac{l}{k_a}+\frac{1-l}{|k_b|}\right)\right].\vspace{-4mm}
\end{equation}

\begin{figure}[bt!]
    \centering
    \begin{subfigure}{0.49\columnwidth}
    \includegraphics[width=\linewidth]{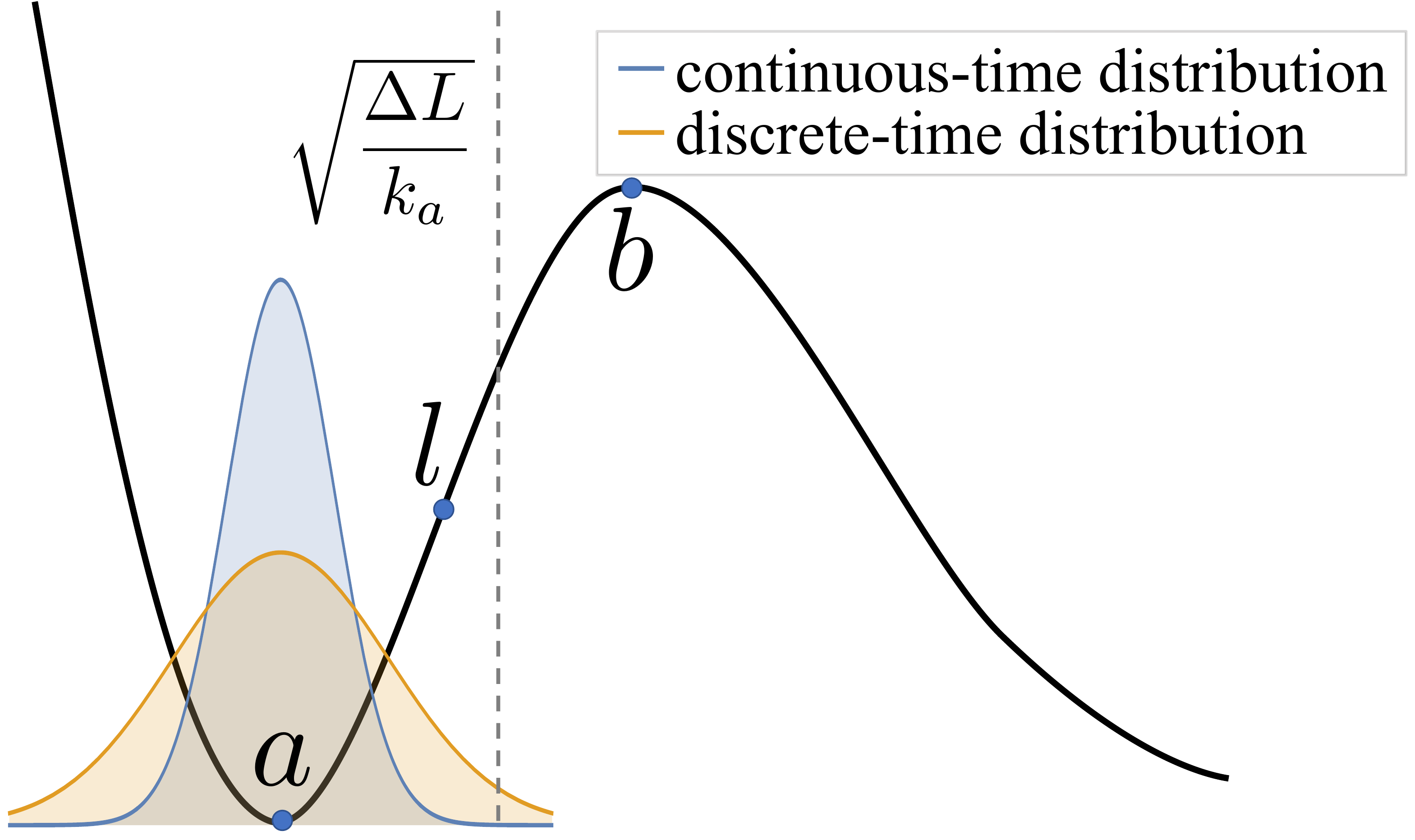}
    \vspace{-1em}
    \caption{Schematic illustration}
    \end{subfigure}
    \begin{subfigure}{0.49\columnwidth}
    \includegraphics[width=\linewidth]{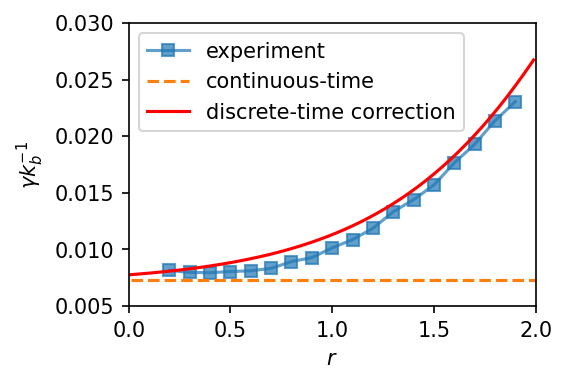}
    \vspace{-2em}
    \caption{Escape rate}
    \end{subfigure}
    \vspace{-0.9em}
    \caption{(a) Schematic illustration of the Kramers problem. The stationary distribution predicted by the discrete-time theory at minimum $a$ is represented by the orange area while the blue area represents the continuous theory prediction. Because the discrete-time distribution spreads out of the valley, we approximate  half of the width of the valley $a$ by $\sqrt{\frac{\Delta L}{k_a}}$ as indicated by the dashed vertical line. (b) $\frac{\gamma}{|k_b|}$ versus $r$. The blue squares are taken from experiments. The orange dashed line shows the prediction of the continuous theory. It agrees with the experiments only for vanishingly small $r$. The red solid curve is our theoretical prediction multiplied by an empirical constant. We see that our prediction is consistent with the experimental data up to a constant coefficient, while the continuous theory underestimates the escape rate. 
    }
    \label{fig:escape}
    \vspace{-1em}
\end{figure}

% \begin{figure}
%     \centering
%     \includegraphics[width=0.35\linewidth]{plots/escape_rate.png}
%     \vspace{-1em}
%     \caption{$\frac{\gamma}{|k_b|}$ versus $r$. The blue squares are taken from experiments. The orange dashed line shows the prediction of the continuous-time theory. It agrees with the experiments only for vanishingly small $r$. The red solid curve is our theoretical prediction multiplied by an empirical constant. Our prediction is consistent with the experimental data up to a constant coefficient.}
%     \label{fig:escape}
% \end{figure}

In Figure~\ref{fig:escape}(b) we plot the quantity $\frac{\gamma}{|k_b|}$ while rescaling the loss function by a factor $r$: $L \to rL$. The continuous theory predicts a constant escape rate by varying $r$, whereas the discrete theory expects a monotonic increase as $r$ becomes large. Such monotonic increase is indeed observed in experiments. The theoretical curve is rescaled by a constant to make comparison easier. Our prediction is qualitatively consistent with the experiment, whereas the continuous theory is only valid in a rather limited range of small $r$. One surprising result here is that, the escape rate of continuous-time dynamics is invariant to the multiplication of the loss function by a constant $r$, while this is not true for discrete-time dynamics, where larger $r$ leads to a larger escape rate from a sharp minimum. %This result means that discrete-time SGD has larger escape rate than its continuous-time analogy.
%We expect that a real discrete-time FP equation leads to a better theoretical prediction about the Kramers rate.

\vspace{-3mm}
\subsection{Second-Order Methods}\label{sec: second order}
\vspace{-1mm}

In this subsection, we show that the proposed theory can also be extended to analyze the stochastic versions of second-order methods. Here, we consider the stochastic version of the damped Newton's method (DNM) \citep{nesterov2018lectures} where $\Lambda = \lambda K^{-1}$ and the natural gradient descent (NGD) algorithm in which the learning rate matrix is defined as $\Lambda := \lambda J(\mathbf{w})^{-1}$ with $J(\mathbf{w}):= \mathbb{E}[\nabla L (\nabla L)^{\rm T}]$ being the Fisher information. The NGD algorithm is known to be \textit{Fisher-efficient} close to a local minimum \citep{amari1998natural, amari2007methods}, namely, it is the fastest possible method to estimate a given statistical quantity, because the Fisher information is the ``natural" metric in the probability space. The following corollaries give the model fluctuations of DNM and NGD under minibatch noise. A detailed discussion is given in Appendix~\ref{sec:ngd}.
%It has therefore attracted great attention \citep{pascanu2013revisiting, amari1998natural}. 
%However, previous literature often takes the continuous-time limit and nothing is known about NGD in the discrete-time regime. We apply our formalism to derive the stationary distribution of NGD in the discrete-time regime. 
%To the best of our knowledge, this is the first work to treat the discrete-time NGD and to derive its stationary distribution. 
\begin{corollary}
    Let the DNM algorithm be updated with noise covariance being $C=\frac{N-S}{NS}K$. The model fluctuation is
\begin{align}
    \Sigma=\frac{1+\mu}{1-\mu}\frac{\lambda}{2(1+\mu)-\lambda}\frac{N-S}{NS}K^{-1}.
\end{align}
\end{corollary}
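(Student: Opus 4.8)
The plan is to obtain this as a direct specialization of Theorem~\ref{thm: preconditioning matrix eq}. The damped Newton's method is exactly the momentum update \eqref{momentumSGD} in which the scalar learning rate is replaced by the preconditioning matrix $\Lambda = \lambda K^{-1}$, so Eq.~\eqref{eq: preconditioning matrix eq} governs its stationary covariance. First I would substitute $\Lambda = \lambda K^{-1}$ and use the key simplification $\Lambda K = K\Lambda = \lambda I_D$. This collapses every matrix product on the left-hand side onto $\Sigma$ itself: $\Lambda K\Sigma K\Lambda = \lambda^2\Sigma$, $\Lambda K\Lambda K\Sigma = \Sigma K\Lambda K\Lambda = \lambda^2\Sigma$, and $\Lambda K\Sigma = \Sigma K\Lambda = \lambda\Sigma$. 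On the right-hand side, $\Lambda C\Lambda = \lambda^2 K^{-1}CK^{-1}$, and inserting $C = \tfrac{N-S}{NS}K$ gives $\lambda^2\tfrac{N-S}{NS}K^{-1}$.

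Next I would collect the scalar coefficients multiplying $\Sigma$. The discrete-time piece contributes $\lambda^2\bigl(-\tfrac{1+\mu^2}{1-\mu^2} + \tfrac{2\mu}{1-\mu^2}\bigr) = -\lambda^2\tfrac{(1-\mu)^2}{1-\mu^2} = -\lambda^2\tfrac{1-\mu}{1+\mu}$, while the continuous-time piece contributes $2(1-\mu)\lambda$. Hence \eqref{eq: preconditioning matrix eq} reduces to the scalar-in-$\Sigma$ relation
\begin{equation}
(1-\mu)\lambda\left(2 - \frac{\lambda}{1+\mu}\right)\Sigma = \lambda^2\,\frac{N-S}{NS}\,K^{-1},\nonumber
\end{equation}
which rearranges immediately to $\Sigma = \tfrac{1+\mu}{1-\mu}\tfrac{\lambda}{2(1+\mu)-\lambda}\tfrac{N-S}{NS}K^{-1}$, as claimed.

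There is essentially no hard calculation here; the only point requiring care is conceptual rather than computational. I would want to (i) justify that the preconditioned update indeed falls under the hypotheses of Theorem~\ref{thm: preconditioning matrix eq} (i.e.\ that $\Lambda = \lambda K^{-1}$ is positive definite, which holds since $K \succ 0$), and (ii) note that the solution is a genuine positive-definite covariance precisely when $2(1+\mu) - \lambda > 0$, i.e.\ $\lambda < 2(1+\mu)$ — the natural analogue of the stability window discussed after Theorem~\ref{theo: main theorem with momentum}, reflecting that DNM rescales all curvature directions uniformly so that a single effective condition controls convergence. Beyond checking these consistency conditions, the statement follows from the substitution above.
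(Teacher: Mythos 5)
Your proposal is correct and follows essentially the same route as the paper: substitute $\Lambda=\lambda K^{-1}$ into Eq.~\eqref{eq: preconditioning matrix eq}, use $\Lambda K = K\Lambda = \lambda I_D$ to collapse the left-hand side to $\lambda\frac{1-\mu}{1+\mu}\left[2(1+\mu)-\lambda\right]\Sigma$, and then insert $C=\frac{N-S}{NS}K$ into $\Lambda C\Lambda=\lambda^2 K^{-1}CK^{-1}$. The only cosmetic difference is that the paper factors this through an intermediate theorem giving $\Sigma=\frac{1+\mu}{1-\mu}\frac{\lambda}{2(1+\mu)-\lambda}K^{-1}CK^{-1}$ for general $C$ before specializing.
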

\begin{corollary} Let the NGD algorithm be updated with noise covariance being $C=\frac{ N-S}{NS}K\Sigma K$.  Then,
    \begin{equation}
    \Sigma = \lambda \frac{(1+\mu) \frac{N-S}{NS} + 1 - \mu}{2(1-\mu^2)}K^{-1}.
\end{equation}
\end{corollary}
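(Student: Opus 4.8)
The plan is to recognize natural gradient descent near a minimum as a special case of the preconditioned momentum recursion of Theorem~\ref{thm: preconditioning matrix eq}, and then solve the resulting matrix equation by exploiting the special structure of the preconditioner. By definition, NGD is the recursion \eqref{momentumSGD} with the scalar $\lambda$ replaced by $\Lambda=\lambda J(\mathbf{w})^{-1}$, where $J(\mathbf{w})=\mathbb{E}[\nabla L\,(\nabla L)^{\rm T}]$ is the Fisher information. Near the minimum the per-sample gradient is $\nabla L(\mathbf{w},x)\approx K\mathbf{w}$, so averaging over the stationary distribution of $\mathbf{w}$ in the same spirit as the state-dependent-noise discussion of Section~\ref{sec: state-dept noise} gives $J(\mathbf{w})\approx K\Sigma K$; correspondingly the minibatch noise covariance is $C=\frac{N-S}{NS}J(\mathbf{w})\approx\frac{N-S}{NS}K\Sigma K$, which is exactly the hypothesis of the corollary. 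Thus $\Lambda=\lambda(K\Sigma K)^{-1}$ and $C$ both depend on the unknown $\Sigma$, and — as with the other state-dependent examples in the paper — a positive-definite stationary covariance, if it exists, must be a fixed point of Eq.~\eqref{eq: preconditioning matrix eq} after these substitutions.

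Next I would simply evaluate the six matrix products appearing in \eqref{eq: preconditioning matrix eq}. Writing $\Lambda=\lambda K^{-1}\Sigma^{-1}K^{-1}$ (legitimate, since $K$ and $\Sigma$ positive definite makes $K\Sigma K$ positive definite and $\Lambda$ a valid preconditioner), and repeatedly cancelling $K^{-1}K$ and $\Sigma^{-1}\Sigma$, one obtains $\Lambda K\Sigma=\Sigma K\Lambda=\lambda K^{-1}$, then $\Lambda K\Sigma K\Lambda=\Lambda K\Lambda K\Sigma=\Sigma K\Lambda K\Lambda=\lambda^2 K^{-1}\Sigma^{-1}K^{-1}$, and finally $\Lambda C\Lambda=\lambda^2\frac{N-S}{NS}K^{-1}\Sigma^{-1}K^{-1}$. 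The crucial observation is that every term of \eqref{eq: preconditioning matrix eq} is then a scalar multiple of either $K^{-1}$ or $K^{-1}\Sigma^{-1}K^{-1}$; combining the three ``discrete-time'' terms with the identity $-\frac{1+\mu^2}{1-\mu^2}+\frac{2\mu}{1-\mu^2}=-\frac{1-\mu}{1+\mu}$ collapses \eqref{eq: preconditioning matrix eq} to the two-term identity
\[
2(1-\mu)\lambda\,K^{-1}=\left(\frac{1-\mu}{1+\mu}+\frac{N-S}{NS}\right)\lambda^2\,K^{-1}\Sigma^{-1}K^{-1}.
\]

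Finally, left-multiplying by $K$ and cancelling one factor of $\lambda$ shows that $\Sigma^{-1}K^{-1}$ is a scalar multiple of the identity, so $\Sigma=c\,K^{-1}$ with $c=\frac{\lambda}{2(1-\mu)}\big(\frac{1-\mu}{1+\mu}+\frac{N-S}{NS}\big)$; rewriting $\frac{1-\mu}{1+\mu}+\frac{N-S}{NS}=\frac{(1+\mu)\frac{N-S}{NS}+1-\mu}{1+\mu}$ and using $2(1-\mu)(1+\mu)=2(1-\mu^2)$ gives precisely $\Sigma=\lambda\frac{(1+\mu)\frac{N-S}{NS}+1-\mu}{2(1-\mu^2)}K^{-1}$. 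I expect the hard part to be not the algebra, which collapses almost by inspection once the substitution $\Lambda=\lambda(K\Sigma K)^{-1}$ is in place, but the conceptual step of inserting the state-dependent preconditioner and noise self-consistently into Theorem~\ref{thm: preconditioning matrix eq}: one must argue that a genuine stationary covariance exists, that the fixed point above is positive definite (which constrains the admissible range of $\lambda$ and $\mu$, in analogy with the stability condition following Theorem~\ref{theo: main theorem with momentum}), and that the approximations $\nabla L\approx K\mathbf{w}$ and $J\approx K\Sigma K$ are indeed the ones intended by the definition of NGD used here.
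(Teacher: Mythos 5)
Your proposal is correct and follows essentially the same route as the paper: substitute $\Lambda=\lambda(K\Sigma K)^{-1}$ and $C=\frac{N-S}{NS}K\Sigma K$ into Eq.~\eqref{eq: preconditioning matrix eq}, reduce all terms to scalar multiples of $K^{-1}$ and $K^{-1}\Sigma^{-1}K^{-1}$, and solve. The only cosmetic difference is that the paper first records the general quadratic matrix equation $(K\Sigma)^2-\frac{\lambda}{2(1+\mu)}K\Sigma-\frac{\lambda}{2(1-\mu)}CK^{-1}=0$ (Theorem~\ref{thm: equation of NGD}) and then specializes $C$, cancelling the invertible factor $K\Sigma$, whereas you perform both substitutions at once — the algebra is identical.
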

\begin{remark}
For NGD, if $S\to N$, then $C\to 0$. In this situation, we have $\Sigma = \frac{\lambda}{2(1+\mu)} K^{-1}$. This means that the algorithm involves nonzero fluctuations even if the noise is absent! Moreover, the divergence caused by $\lambda k^* \to 2$ disappears for NGD. This suggests that, when the noise is due to minibatch sampling, NGD naturally corrects the ill-conditioned problem of discrete-time SGD.
\end{remark}
\begin{remark}
 Even more interestingly, both the DNM and the NGD algorithms induce fluctuations that are the same as those in the continuous-time SGD algorithm with Gaussian noise up to the coefficients, in the sense that the variance is proportional to $K^{-1}$ which is the local geometry of the minimum. Intuitively, this means that DNM and NGD need no correction even in the discrete-time case; moreover, they are likely to generalize better because they better align with the underlying loss function. 
\end{remark}
%, which suggests an interesting fact that both DNM and NGD induce fluctuations similar to those in continuous SGLD up to the coefficients, in the sense that the variance is proportional to $K^{-1}$. Intuitively, this means that DNM and NGD need no correction even in the discrete-time case; moreover, they are likely to generalize better because they better align with the underlying loss function.

It would also be interesting to consider the stationary fluctuation of the adaptive gradient methods such as Adam \cite{journals/corr/KingmaB14_adam}. Adam can be seen as an approximate second-order method with a diagonal preconditioning matrix $\hat{\Lambda} = \lambda /\sqrt{\mathbb{E}[ \text{diag}(\mathbf{g}^2)]}$, where $\mathbf{g}^2$ denotes element-wise square. This can be approximated by a non-diagonal preconditioning matrix $\Lambda = \lambda/ \sqrt{\mathbb{E}[\mathbf{g}\mathbf{g}^{\rm T}]}$. Using $\mathbb{E}[\mathbf{g}\mathbf{g}^{\rm T}]$. Such approximation is in fact not bad; as shown in \citet{zhiyi2021distributional}, the diagonal assumption of $\mathbb{E}[\mathbf{g}\mathbf{g}^{\rm T}]$ leads to surprisingly accurate predictions of the statistical properties of Adam for various modern neural architectures. Under this non-diagonal setting, we can solve the stationary covariance of Adam (without momentum). 
\begin{theorem}\label{thm: Adam} $($Stationary fluctuation of Adam$)$ Let the preconditioning matrix of Adam be $\Lambda = \lambda / \sqrt{\mathbb{E}[\mathbf{g}\mathbf{g}^{\rm T}]}$ and the noise be $C=cK\Sigma K$ with a positive $c$, then\vspace{-2mm}
\begin{equation}
    \Sigma_{\rm Adam}= \frac{\lambda^2 (1+c)}{4}I_D.\vspace{-2mm}
\end{equation}
\end{theorem}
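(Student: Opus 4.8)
The plan is to specialize the preconditioned stationarity equation of Theorem~\ref{thm: preconditioning matrix eq} to $\mu=0$, which gives
\[
-\Lambda K \Sigma K \Lambda + \Lambda K\Sigma + \Sigma K\Lambda = \Lambda C\Lambda ,
\]
and then exploit the self-consistent form of the Adam preconditioner to turn this into something solvable. First I would evaluate $\mathbb{E}[\mathbf{g}\mathbf{g}^{\rm T}]$ at stationarity: since $\mathbf{g}_t = K\mathbf{w}_{t-1} + \eta_{t-1}$ and $\mathbf{w}_{t-1}$ depends only on $\eta_0,\dots,\eta_{t-2}$, it is independent of $\eta_{t-1}$, so the cross term vanishes and $\mathbb{E}[\mathbf{g}\mathbf{g}^{\rm T}] = K\Sigma K + C = (1+c)K\Sigma K =: M^2$, where $M$ is the unique symmetric positive-definite square root (well-defined because $K$ and $\Sigma$ are positive definite). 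Hence $\Lambda = \lambda M^{-1}$, and the preconditioner is now written in terms of the single matrix unknown $M$.

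Next I would carry out the substitutions that collapse the matrix equation. Using $C = cK\Sigma K$, the stationarity equation becomes $\Lambda K\Sigma + \Sigma K\Lambda = (1+c)\,\Lambda K\Sigma K\Lambda$. Since $K\Sigma K = M^2/(1+c)$, the right-hand side simplifies to $\lambda^2 M^{-1}M^2M^{-1} = \lambda^2 I_D$, and writing $\Sigma = (1+c)^{-1}K^{-1}M^2K^{-1}$ gives $\Lambda K\Sigma = \tfrac{\lambda}{1+c}MK^{-1}$ and its transpose $\Sigma K\Lambda = \tfrac{\lambda}{1+c}K^{-1}M$. Thus $M$ must solve the Sylvester equation $MK^{-1} + K^{-1}M = \lambda(1+c)I_D$. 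Because $K^{-1}$ is positive definite, $K^{-1}$ and $-K^{-1}$ share no eigenvalue, so this Sylvester equation has a unique solution; one checks by inspection that $M = \tfrac{\lambda(1+c)}{2}K$ works, hence it is the only solution. Substituting back yields $\Sigma = (1+c)^{-1}K^{-1}M^2K^{-1} = \tfrac{\lambda^2(1+c)}{4}I_D$, which simultaneously establishes existence, since this candidate satisfies every intermediate identity.

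The main obstacle — and what makes the statement non-trivial — is the implicit, self-referential nature of the preconditioner: $\Lambda$ depends on $\Sigma$ through $\mathbb{E}[\mathbf{g}\mathbf{g}^{\rm T}]$, which in turn depends on $\Sigma$, so one is really solving a fixed-point problem rather than a linear matrix equation. The trick that makes it tractable is to promote $M = \sqrt{\mathbb{E}[\mathbf{g}\mathbf{g}^{\rm T}]}$ to the primary variable and eliminate $\Sigma$ entirely via $\Sigma = (1+c)^{-1}K^{-1}M^2K^{-1}$; the stationarity condition then becomes an honest linear Sylvester equation in $M$, whose uniqueness closes the argument with no commutativity ansatz. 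Secondary points to treat carefully are: the adiabatic approximation that freezes $\Lambda$ at its stationary value so that Theorem~\ref{thm: preconditioning matrix eq} applies; the well-definedness of the matrix square roots and inverses (guaranteed by positive-definiteness of $K$ and $\Sigma$); and the fact that Adam's genuine diagonal preconditioner is replaced by the non-diagonal surrogate $\lambda/\sqrt{\mathbb{E}[\mathbf{g}\mathbf{g}^{\rm T}]}$, which is adopted as the definition in the statement and justified in the surrounding discussion.
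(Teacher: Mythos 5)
Your proof is correct and follows essentially the same route as the paper's: substitute $\mathbb{E}[\mathbf{g}\mathbf{g}^{\rm T}]=(1+c)K\Sigma K$ into the $\mu=0$ preconditioned stationarity equation, observe that the right-hand side collapses to $\lambda^2 I_D$, and reduce to the Sylvester equation $MK^{-1}+K^{-1}M=\lambda(1+c)I_D$ (the paper writes the equivalent $\Lambda^{-1}K^{-1}+K^{-1}\Lambda^{-1}=(1+c)I_D$). Your treatment is slightly more careful than the paper's, which simply asserts the equation ``can be solved,'' whereas you justify uniqueness of the Sylvester solution and the vanishing of the cross term in $\mathbb{E}[\mathbf{g}\mathbf{g}^{\rm T}]$.
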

\begin{remark}
Similar to SGD, $\Sigma_{\rm Adam}$ is isotropic. In general, we have $C\sim 1/S$. Two key differences exist: (1) that $\Sigma_{\rm SGD}$ vanishes when $S\gg 1$, while that of Adam converge to a finite constant, like NGD; (2) $\Sigma_{\rm Adam}$ scales as $\lambda^2$, while that of SGD scales as $\lambda$, and so Adam has much smaller fluctuation than SGD.
\end{remark}

\vspace{-3mm}
\section{Concluding Remark}\label{sec: conclusion}
\vspace{-1mm}
In this work, we have analyzed the SGD algorithm in a quadratic potential, which is a good approximation close to any local minimum and a common setting in the recent literature. Compared to the related works, our solution is exact, and relies on fewer assumptions than previous works, and, with the exact solutions, corrections to the previous results that were based on continuous-time approximation are obtained. In fact, we showed that even in the simplest settings, the prediction of the continuous-time solution may deviate significantly and eventually fails for a large learning rate. % and is qualitatively wrong. 
This suggests the fundamental limitation of making the continuous-time approximation in analyzing machine learning problems. %The conventional conception that takes the continuous-time dynamics as the fundamental dynamics seems erroneous, and, in the field of machine learning, 
%The discrete-time dynamics is more fundamental and we believe that it deserves to be dealt with directly. 
Previous works have shown that, SGD leads to a flatter minimum due to the existence of anisotropic noise; this anisotropy is enhanced when the dynamics is discrete-time; this gives stronger mobility to model parameters along the sharper directions in the Hessian, and therefore, makes a flatter minimum more favorable. On the other hand, the distortion that the discrete-time SGD causes, in general, does not match the underlying landscape, indicating that using a large learning rate may cause a larger approximation error and worsened generalization. This tradeoff has been discussed only in a restricted setting in this work, and we hope the discussions here may stimulate further research in this direction.

\vspace{-3mm}
 \section*{Acknowledgment}
 \vspace{-1mm}
 We acknowledge Prof. Chikara Furusawa, Zeke Xie and Takashi Mori for valuable discussions.
 %Kangqiao Liu was supported by the GSGC program of the University of Tokyo. Liu Ziyin thanks the Institute for Physics of Intelligence (IPI) of the University of Tokyo for the financial support. 
 This work was supported by KAKENHI Grant No. JP18H01145 and a Grant-in-Aid for Scientific Research on Innovative Areas “Topological  Materials Science (KAKENHI Grant No. JP15H05855) from the Japan Society for the Promotion of Science.
%\clearpage

%\clearpage

%{\small
\bibliographystyle{apalike}
\bibliography{ref}

\begin{thebibliography}{}

\bibitem[Amari and Nagaoka, 2007]{amari2007methods}
Amari, S. and Nagaoka, H. (2007).
\newblock {\em Methods of Information Geometry}.
\newblock Translations of mathematical monographs. American Mathematical
  Society.

\bibitem[Amari, 1998]{amari1998natural}
Amari, S.-I. (1998).
\newblock Natural gradient works efficiently in learning.
\newblock {\em Neural computation}, 10(2):251--276.

\bibitem[Bottou, 1998]{Bottou1999}
Bottou, L. (1998).
\newblock Online learning and stochastic approximations.
\newblock {\em On-line learning in neural networks}, 17(9):142.

\bibitem[Chaudhari et~al., 2019]{chaudhari2019entropy}
Chaudhari, P., Choromanska, A., Soatto, S., LeCun, Y., Baldassi, C., Borgs, C.,
  Chayes, J., Sagun, L., and Zecchina, R. (2019).
\newblock Entropy-sgd: Biasing gradient descent into wide valleys.
\newblock {\em Journal of Statistical Mechanics: Theory and Experiment},
  2019(12):124018.

\bibitem[Chaudhari and Soatto, 2018]{Chaudhari2018}
Chaudhari, P. and Soatto, S. (2018).
\newblock Stochastic gradient descent performs variational inference, converges
  to limit cycles for deep networks.
\newblock In {\em 2018 Information Theory and Applications Workshop (ITA)},
  pages 1--10. IEEE.

\bibitem[Dieuleveut et~al., 2020]{dieuleveut2018bridging}
Dieuleveut, A., Durmus, A., Bach, F., et~al. (2020).
\newblock Bridging the gap between constant step size stochastic gradient
  descent and markov chains.
\newblock {\em Annals of Statistics}, 48(3):1348--1382.

\bibitem[Duchi et~al., 2011]{Duchi2011}
Duchi, J., Hazan, E., and Singer, Y. (2011).
\newblock Adaptive subgradient methods for online learning and stochastic
  optimization.
\newblock {\em Journal of machine learning research}, 12(7).

\bibitem[Dziugaite and Roy, 2017]{dziugaite2017computing}
Dziugaite, G.~K. and Roy, D.~M. (2017).
\newblock Computing nonvacuous generalization bounds for deep (stochastic)
  neural networks with many more parameters than training data.
\newblock {\em arXiv preprint arXiv:1703.11008}.

\bibitem[Einstein, 1905]{Einstein1905}
Einstein, A. (1905).
\newblock {\"U}ber die von der molekularkinetischen theorie der w{\"a}rme
  geforderte bewegung von in ruhenden fl{\"u}ssigkeiten suspendierten teilchen.
\newblock {\em Annalen der physik}, 4.

\bibitem[Flammarion and Bach, 2015]{Flammarion2015}
Flammarion, N. and Bach, F. (2015).
\newblock From averaging to acceleration, there is only a step-size.
\newblock In {\em Conference on Learning Theory}, pages 658--695. PMLR.

\bibitem[Gitman et~al., 2019]{gitman2019understanding}
Gitman, I., Lang, H., Zhang, P., and Xiao, L. (2019).
\newblock Understanding the role of momentum in stochastic gradient methods.
\newblock In {\em Advances in Neural Information Processing Systems}, pages
  9633--9643.

\bibitem[Goyal et~al., 2017]{goyal2017accurate}
Goyal, P., Doll{\'a}r, P., Girshick, R., Noordhuis, P., Wesolowski, L., Kyrola,
  A., Tulloch, A., Jia, Y., and He, K. (2017).
\newblock Accurate, large minibatch sgd: Training imagenet in 1 hour.
\newblock {\em arXiv preprint arXiv:1706.02677}.

\bibitem[H{\"a}nggi et~al., 1990]{Hanggi1990}
H{\"a}nggi, P., Talkner, P., and Borkovec, M. (1990).
\newblock Reaction-rate theory: fifty years after kramers.
\newblock {\em Reviews of modern physics}, 62(2):251.

\bibitem[Higham and Kim, 2001]{Higham2001}
Higham, N.~J. and Kim, H.-M. (2001).
\newblock Solving a quadratic matrix equation by newton's method with exact
  line searches.
\newblock {\em SIAM Journal on Matrix Analysis and Applications},
  23(2):303--316.

\bibitem[Hochreiter and Schmidhuber, 1997]{Hochreiter1997}
Hochreiter, S. and Schmidhuber, J. (1997).
\newblock Flat minima.
\newblock {\em Neural computation}, 9(1):1--42.

\bibitem[Hodgkinson and Mahoney, 2020]{hodgkinson2020multiplicative}
Hodgkinson, L. and Mahoney, M.~W. (2020).
\newblock Multiplicative noise and heavy tails in stochastic optimization.
\newblock {\em arXiv preprint arXiv:2006.06293}.

\bibitem[Hoffer et~al., 2017]{Hoffer2017}
Hoffer, E., Hubara, I., and Soudry, D. (2017).
\newblock Train longer, generalize better: closing the generalization gap in
  large batch training of neural networks.
\newblock In {\em Proceedings of the 31st International Conference on Neural
  Information Processing Systems}, pages 1729--1739.

\bibitem[Hu et~al., 2019]{hu2018diffusion}
Hu, W., Li, C.~J., Li, L., and Liu, J.-G. (2019).
\newblock On the diffusion approximation of nonconvex stochastic gradient
  descent.
\newblock {\em Annals of Mathematical Sciences and Applications}, 4(1).

\bibitem[Jacot et~al., 2018]{Jacot2018}
Jacot, A., Gabriel, F., and Hongler, C. (2018).
\newblock Neural tangent kernel: convergence and generalization in neural
  networks.
\newblock In {\em Proceedings of the 32nd International Conference on Neural
  Information Processing Systems}, pages 8580--8589.

\bibitem[Jastrz{{e}}bski et~al., 2017]{Jastrzebski2018}
Jastrz{{e}}bski, S., Kenton, Z., Arpit, D., Ballas, N., Fischer, A., Bengio,
  Y., and Storkey, A. (2017).
\newblock Three factors influencing minima in sgd.
\newblock {\em arXiv preprint arXiv:1711.04623}.

\bibitem[Jiang et~al., 2019]{Jiang2020}
Jiang, Y., Neyshabur, B., Mobahi, H., Krishnan, D., and Bengio, S. (2019).
\newblock Fantastic generalization measures and where to find them.
\newblock In {\em International Conference on Learning Representations}.

\bibitem[Keskar et~al., 2017]{Keskar2017}
Keskar, N.~S., Nocedal, J., Tang, P. T.~P., Mudigere, D., and Smelyanskiy, M.
  (2017).
\newblock On large-batch training for deep learning: Generalization gap and
  sharp minima.
\newblock In {\em 5th International Conference on Learning Representations,
  ICLR 2017}.

\bibitem[Kingma and Ba, 2014]{journals/corr/KingmaB14_adam}
Kingma, D.~P. and Ba, J. (2014).
\newblock Adam: A method for stochastic optimization.
\newblock {\em CoRR}, abs/1412.6980.

\bibitem[Kingma and Ba, 2015]{Kingma2017}
Kingma, D.~P. and Ba, J. (2015).
\newblock Adam: A method for stochastic optimization.
\newblock In {\em ICLR (Poster)}.

\bibitem[Kramers, 1940]{Kramers1940}
Kramers, H.~A. (1940).
\newblock Brownian motion in a field of force and the diffusion model of
  chemical reactions.
\newblock {\em Physica}, 7(4):284--304.

\bibitem[Landau and Lifshitz, 1980]{Landau1980}
Landau, L.~D. and Lifshitz, E.~M. (1980).
\newblock {\em Statistical physics, part 1}.
\newblock Elsevier,, third edition edition.

\bibitem[LeCun et~al., 2012]{LeCun2012}
LeCun, Y.~A., Bottou, L., Orr, G.~B., and M{\"u}ller, K.-R. (2012).
\newblock Efficient backprop.
\newblock In {\em Neural networks: Tricks of the trade}, pages 9--48. Springer.

\bibitem[Lee et~al., 2020]{Lee2019}
Lee, J., Xiao, L., Schoenholz, S.~S., Bahri, Y., Novak, R., Sohl-Dickstein, J.,
  and Pennington, J. (2020).
\newblock Wide neural networks of any depth evolve as linear models under
  gradient descent.
\newblock {\em Journal of Statistical Mechanics: Theory and Experiment},
  2020(12):124002.

\bibitem[Lewkowycz et~al., 2020]{Lewkowycz2020}
Lewkowycz, A., Bahri, Y., Dyer, E., Sohl-Dickstein, J., and Gur-Ari, G. (2020).
\newblock The large learning rate phase of deep learning: the catapult
  mechanism.
\newblock {\em arXiv preprint arXiv:2003.02218}.

\bibitem[Li et~al., 2017]{Li2017}
Li, Q., Tai, C., and Weinan, E. (2017).
\newblock Stochastic modified equations and adaptive stochastic gradient
  algorithms.
\newblock In {\em International Conference on Machine Learning}, pages
  2101--2110. PMLR.

\bibitem[Li et~al., 2019]{Li2019}
Li, Y., Wei, C., and Ma, T. (2019).
\newblock Towards explaining the regularization effect of initial large
  learning rate in training neural networks.
\newblock {\em arXiv preprint arXiv:1907.04595}.

\bibitem[Liang et~al., 2019]{liang2019fisherrao}
Liang, T., Poggio, T., Rakhlin, A., and Stokes, J. (2019).
\newblock Fisher-rao metric, geometry, and complexity of neural networks.
\newblock In {\em The 22nd International Conference on Artificial Intelligence
  and Statistics}, pages 888--896. PMLR.

\bibitem[Ma and Yarats, 2018]{ma2018quasihyperbolic}
Ma, J. and Yarats, D. (2018).
\newblock Quasi-hyperbolic momentum and adam for deep learning.
\newblock In {\em International Conference on Learning Representations}.

\bibitem[Ma et~al., 2015]{Ma2015}
Ma, Y.-A., Chen, T., and Fox, E. (2015).
\newblock A complete recipe for stochastic gradient mcmc.
\newblock {\em Advances in Neural Information Processing Systems},
  28:2917--2925.

\bibitem[MacKay, 1992]{MacKay1992}
MacKay, D.~J. (1992).
\newblock A practical bayesian framework for backpropagation networks.
\newblock {\em Neural computation}, 4(3):448--472.

\bibitem[Mandt et~al., 2017]{Mandt2017}
Mandt, S., Hoffman, M.~D., and Blei, D.~M. (2017).
\newblock Stochastic gradient descent as approximate bayesian inference.
\newblock {\em Journal of Machine Learning Research}, 18:1--35.

\bibitem[Meng et~al., 2020]{meng2020dynamic}
Meng, Q., Gong, S., Chen, W., Ma, Z.-M., and Liu, T.-Y. (2020).
\newblock Dynamic of stochastic gradient descent with state-dependent noise.
\newblock {\em arXiv preprint arXiv:2006.13719}.

\bibitem[Mori and Ueda, 2020a]{mori2020improved}
Mori, T. and Ueda, M. (2020a).
\newblock Improved generalization by noise enhancement.
\newblock {\em arXiv preprint arXiv:2009.13094}.

\bibitem[Mori and Ueda, 2020b]{mori2020deeper}
Mori, T. and Ueda, M. (2020b).
\newblock Is deeper better? it depends on locality of relevant features.
\newblock {\em arXiv preprint arXiv:2005.12488}.

\bibitem[Mori et~al., 2021]{mori2021logarithmic}
Mori, T., Ziyin, L., Liu, K., and Ueda, M. (2021).
\newblock Logarithmic landscape and power-law escape rate of sgd.
\newblock {\em arXiv preprint arXiv:2105.09557}.

\bibitem[Nagarajan and Kolter, 2019]{nagarajan2019generalization}
Nagarajan, V. and Kolter, J.~Z. (2019).
\newblock Generalization in deep networks: The role of distance from
  initialization.
\newblock {\em arXiv preprint arXiv:1901.01672}.

\bibitem[Nesterov et~al., 2018]{nesterov2018lectures}
Nesterov, Y. et~al. (2018).
\newblock {\em Lectures on convex optimization}, volume 137.
\newblock Springer.

\bibitem[Neyshabur et~al., 2018a]{neyshabur2018a}
Neyshabur, B., Bhojanapalli, S., and Srebro, N. (2018a).
\newblock A pac-bayesian approach to spectrally-normalized margin bounds for
  neural networks.
\newblock In {\em International Conference on Learning Representations}.

\bibitem[Neyshabur et~al., 2018b]{neyshabur2018the}
Neyshabur, B., Li, Z., Bhojanapalli, S., LeCun, Y., and Srebro, N. (2018b).
\newblock The role of over-parametrization in generalization of neural
  networks.
\newblock In {\em International Conference on Learning Representations}.

\bibitem[Pascanu and Bengio, 2013]{pascanu2013revisiting}
Pascanu, R. and Bengio, Y. (2013).
\newblock Revisiting natural gradient for deep networks.
\newblock {\em arXiv preprint arXiv:1301.3584}.

\bibitem[Rao, 1992]{rao1992information}
Rao, C.~R. (1992).
\newblock Information and the accuracy attainable in the estimation of
  statistical parameters.
\newblock In {\em Breakthroughs in statistics}, pages 235--247. Springer.

\bibitem[Rissanen, 1983]{rissanen1983}
Rissanen, J. (1983).
\newblock A universal prior for integers and estimation by minimum description
  length.
\newblock {\em The Annals of statistics}, pages 416--431.

\bibitem[Simsekli et~al., 2019]{Simsekli2019tailindex}
Simsekli, U., Sagun, L., and Gurbuzbalaban, M. (2019).
\newblock A tail-index analysis of stochastic gradient noise in deep neural
  networks.
\newblock In {\em International Conference on Machine Learning}, pages
  5827--5837. PMLR.

\bibitem[Simsekli et~al., 2020]{simsekli2020hausdorff}
Simsekli, U., Sener, O., Deligiannidis, G., and Erdogdu, M.~A. (2020).
\newblock Hausdorff dimension, heavy tails, and generalization in neural
  networks.
\newblock {\em Advances in Neural Information Processing Systems}, 33.

\bibitem[Smith and Le, 2018]{smith2017bayesian}
Smith, S.~L. and Le, Q.~V. (2018).
\newblock A bayesian perspective on generalization and stochastic gradient
  descent.
\newblock In {\em International Conference on Learning Representations}.

\bibitem[Sutskever et~al., 2013]{Sutskever2013}
Sutskever, I., Martens, J., Dahl, G., and Hinton, G. (2013).
\newblock On the importance of initialization and momentum in deep learning.
\newblock In {\em International conference on machine learning}, pages
  1139--1147. PMLR.

\bibitem[Van~Kampen, 1992]{Kampen2011}
Van~Kampen, N.~G. (1992).
\newblock {\em Stochastic processes in physics and chemistry}, volume~1.
\newblock Elsevier.

\bibitem[Welling and Teh, 2011]{Welling2011}
Welling, M. and Teh, Y.~W. (2011).
\newblock Bayesian learning via stochastic gradient langevin dynamics.
\newblock In {\em Proceedings of the 28th International Conference on
  International Conference on Machine Learning}, ICML'11, page 681–688,
  Madison, WI, USA. Omnipress.

\bibitem[Wu et~al., 2018]{Wu2018}
Wu, L., Ma, C., and E, W. (2018).
\newblock How sgd selects the global minima in over-parameterized learning: A
  dynamical stability perspective.
\newblock In {\em Proceedings of the 32nd International Conference on Neural
  Information Processing Systems}, pages 8289--8298.

\bibitem[Xie et~al., 2021]{Xie2020}
Xie, Z., Sato, I., and Sugiyama, M. (2021).
\newblock A diffusion theory for deep learning dynamics: Stochastic gradient
  descent exponentially favors flat minima.
\newblock In {\em International Conference on Learning Representations}.

\bibitem[Zhang et~al., 2018]{Chiyuan_SGD}
Zhang, C., Liao, Q., Rakhlin, A., Miranda, B., Golowich, N., and Poggio, T.
  (2018).
\newblock Theory of deep learning iib: Optimization properties of sgd.
\newblock {\em arXiv preprint arXiv:1801.02254}.

\bibitem[Zhiyi and Ziyin, 2021]{zhiyi2021distributional}
Zhiyi, Z. and Ziyin, L. (2021).
\newblock On the distributional properties of adaptive gradients.

\bibitem[Zhu et~al., 2019]{Zhu2019}
Zhu, Z., Wu, J., Yu, B., Wu, L., and Ma, J. (2019).
\newblock The anisotropic noise in stochastic gradient descent: Its behavior of
  escaping from sharp minima and regularization effects.
\newblock In {\em International Conference on Machine Learning}, pages
  7654--7663. PMLR.

\bibitem[Ziyin et~al., 2021]{ziyin2021minibatch}
Ziyin, L., Liu, K., Mori, T., and Ueda, M. (2021).
\newblock On minibatch noise: Discrete-time sgd, overparametrization, and
  bayes.
\newblock {\em arXiv preprint arXiv:2102.05375}.

\end{thebibliography}
%}

\clearpage
\appendix
\onecolumn

%\tableofcontents
%\clearpage

\section{Example of Failure of Continuous-Time Theory}\label{app: example}
\begin{figure}[bt!]
    \centering
    \includegraphics[width=0.5\linewidth]{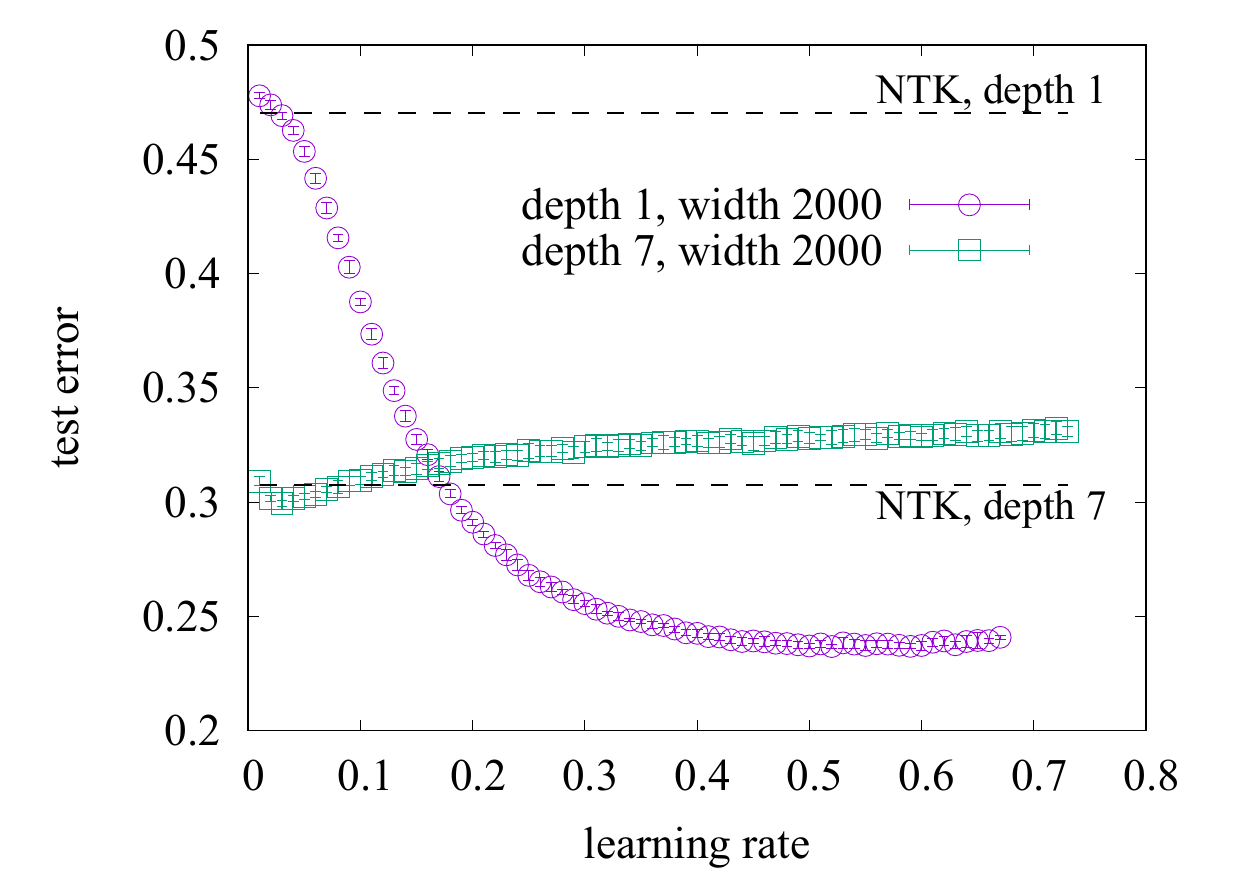}
    \vspace{-.5em}
    \caption{Learning rate dependence of the generalization performance. Nonlinear feedforward neural networks of different depths are trained on a simple task with varying learning rates. We see that, when the learning rate is vanishingly small so that the continuous-time approximation is good, the continuous neural tangent kernel (NTK) provides an accurate characterization of the result of training. However, as the learning rate becomes large, the learning deviates significantly and qualitatively from the NTK prediction,  sometimes for the better, sometimes for the worse. Reproduced from \citet{mori2020deeper}. For other interesting experiments concerning large learning rate, see \citet{Lewkowycz2020}.}
    \label{fig:large_rate_eg app}
\end{figure}
See Figure~\ref{fig:large_rate_eg app} for an example on the generalization performance with different learning rates. For small learning rates, the continuous-time neural tangent kernel (NTK) theory successfully predicts the correct behavior. For a slightly larger $\lambda$, the prediction given by continuous theory deviates significantly from the experiments. 

\vspace{-2mm}
\section{Effect of Overparametrization}\label{app: overparametrization}
\vspace{-1mm}
One particular topic that is of interest in the recent deep learning literature is the role of overparametrization \citep{neyshabur2018the}. Modern neural networks, defying the traditional way of thinking in statistical learning, often perform better when the number of parameters is larger than the number of data points. We comment that our formalism can also be extended straightforwardly to deal with this. In the overparametrized regime, many directions in the loss landscape are degenerate, and have zero curvature; this means that the Hessian matrix in a local minimum is positive semi-definite with many zero eigenvalues. In this situation, the difference between artificially added noise that is usually full-rank and a low-rank noise that is, e.g., proportional to the Hessian becomes important: on the one hand, when the Hessian is low rank, a full-rank noise causes an unconstrained Brownian motion in the null space, the model will thus diverge and one cannot expect to obtain good generalization here; on the other hand, a noise that is proportional to the Hessian only diffuses in the subspace spanned by the Hessian and will not diverge; this is exactly the result obtained in \citet{hodgkinson2020multiplicative} using the formalism of iterated random functions. This implies that the generalization performance induced by minibatch sampling is better than that of an artificially injected Gaussian noise, which has been observed frequently in experiments \citep{Hoffer2017, Zhu2019}.

\clearpage

\section{Additional Experiments for Non-Gaussian Noise}\label{app sec: non gaussian noise}

See Figure~\ref{fig:general noise}. We show that, for example, the theory agrees with the cases when the noise obeys the Student's t-distribution (heavy tail) and the $\chi^2$ distribution (asymmetric); the setting is the same as in the 1d experiments in section~\ref{sec: experiments}. Also, this result remains valid even if $C=C(\Sigma)$ is dependent on the $\Sigma$ itself.
\begin{figure}[hbt!]
    \centering
    \includegraphics[width=0.4\linewidth]{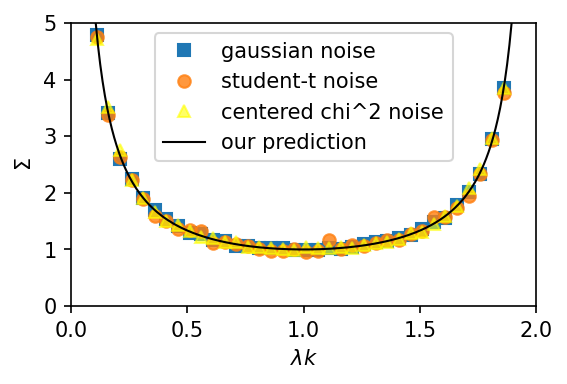}
    \vspace{-1.5em}
    \caption{Comparison of the theoretical prediction with different kinds of noise. We choose the Student's t-distribution with $\nu=4$ as an example of heavy-tail noise with a tail exponent $5$, and a centered $\chi^2$ distribution (by subtracting the mean from a standard $\chi^2$ distribution with degree of freedom $3$). The agreement is excellent, independent of the underlying noise distribution.} %For all three noises, we rescale the noise to make $C$ take the same value. The fact that the three experimental lines are identical to the theoretical line confirms the proposed theory.}
    \vspace{-1em}
    \label{fig:general noise}
\end{figure}

\section{Additional Experiments for SGD with Momentum}\label{app: SGDM exp}
\begin{figure}[hbt!]


    \centering
    \begin{subfigure}{0.4\linewidth}
    \includegraphics[width=\linewidth]{plots/1d-comparison-momentum-2.png}
    \vspace{-2em}
    \caption{White noise}
    \end{subfigure}
    \begin{subfigure}{0.4\linewidth}
    \includegraphics[width=\linewidth]{plots/1d-comparison-momentum-minibatch-2.png}
    \vspace{-2em}
    \caption{Minibatch noise}
    \end{subfigure}
    \caption{Comparison between the continuous-time theory and the discrete-time theory in the presence of momentum. 
    %$\lambda k =2.75$ for both experiments. (a) White noise. The vertical line shows $\mu= \frac{\lambda k -2}{2}$, where the variance is predicted to diverge. 
      (a) White noise with $\lambda k =1$. In this case, the fluctuation does not diverge when $\mu<1$. However, the error of the continuous-time approximation does not diminish even if $\mu$ gets large.
    (b) Minibatch noise with $\lambda k =2$. Even in the presence of minibatch noise, the proposed theory agrees much better with the experiments. }
    \label{fig:1d momentum appx}
\end{figure}
 In Figure~\ref{fig:1d momentum appx}(a), we plot the model fluctuation with white noise with $\lambda k =1$; this is the case in which there is no divergence for $\mu<1$. Here, we see that the continuous-time theory predicts an error that does not diminish even if $\mu$ is close to $1$. In Figure~\ref{fig:1d momentum appx}(b), we show the experiments with minibatch noise for the same linear regression task adopted in section~\ref{sec: experiments}. The predicted discrete-time result agrees better than the continuous-time one. On the other hand, the agreement becomes worse as the fluctuation in $w$ becomes large. This again suggests the limitation of the commonly used approximation of minibatch noise, i.e., $C\sim H(w)=K$.
 
\clearpage
\section{Proofs and Additional Theoretical Considerations}\label{app: all proofs}
\subsection{Proofs in Section~\ref{sec: theory of SGD}}\label{app: theory of SGD}
\subsubsection{Proof of Theorems~\ref{thm: SGD}, \ref{theo: main theorem with momentum} and \ref{thm: preconditioning matrix eq}}\label{app: derivation of thm momentum}
%In principle, the stationary distribution can be found by expanding the sum, but this involves summing over too many variables. Here we take a different approach to find its stationary distribution. 
Because Theorems~\ref{thm: SGD} and \ref{theo: main theorem with momentum} can be derived from Theorem~\ref{thm: preconditioning matrix eq} by assuming a scalar $\lambda$ and $\mu=0$ accordingly, we first prove Theorem~\ref{thm: preconditioning matrix eq}.
\begin{proof}
We assume that the stationary distributions of both $\mathbf{m}$ and $\mathbf{w}$ exist and $\lim_{t\to \infty}\mathbb{E}[\mathbf{w}_t\mathbf{w}_t^{\rm T}] := \Sigma$. The goal is to find $\Sigma$. We assume that $\mathbf{w}_0$ and $\mathbf{m}_0$ are sampled from the stationary distribution. This is valid as long as we are interesting in the asymptotic behavior of $\mathbf{w}_t$. By definition,
\begin{align}
    \Sigma:=\mathbb{E}[\mathbf{w}_t\mathbf{w}_t^{\rm T}] &= 
    \mathbb{E}[(\mathbf{w}_{t-1} - \mu \Lambda \mathbf{m}_{t-1} - \Lambda K\mathbf{w}_{t-1} - \Lambda \eta_{t-1})(\mathbf{w}_{t-1} - \mu \Lambda \mathbf{m}_{t-1} - \Lambda K\mathbf{w}_{t-1} - \Lambda \eta_{t-1})^{\rm T}]\nonumber\\
    &= \mathbb{E}[(I_D-\Lambda K)\mathbf{w}_{t-1}\mathbf{w}_{t-1}^{\rm T} (I_D-K\Lambda )] + \mu^2 \Lambda M \Lambda  + \Lambda C \Lambda  - (A + A^{\rm T}), \label{eq: momentum to be solved}
\end{align}
where $A:=\mu (I_D-\Lambda K )\mathbb{E}\left[\mathbf{w}_{t-1}\mathbf{m}_{t-1}^{\rm T}\right]\Lambda$ and $M:=\mathbb{E}\left[\mathbf{m}_{t-1}\mathbf{m}_{t-1}^{\rm T}\right]$. Notice that $\mathbf{w}_0$ is initialized according to the stationary distribution. Therefore, the distribution does not depend on $t$, namely $\mathbb{E}[\mathbf{w}_t\mathbf{w}_t^{\rm T}] = \mathbb{E}[\mathbf{w}_{t-1}\mathbf{w}_{t-1}^{\rm T}]=\Sigma $. For the covariance matrix of the momentum, similarly,
\begin{align}
    \Lambda M \Lambda &= \mathbb{E}[(\mathbf{w}_{t-1} - \mathbf{w}_{t-2})(\mathbf{w}_{t-1} - \mathbf{w}_{t-2})^{\rm T}] \nonumber\\
    &=  2\Sigma - \mathbb{E}[\mathbf{w}_{t-1}\mathbf{w}_{t-2}^{\rm T}] - \mathbb{E}[\mathbf{w}_{t-2}\mathbf{w}_{t-1}^{\rm T}].
\end{align}
For the final two terms $A + A^{\rm T}$, we have
\begin{align}
    A &= \mu(I_D-\Lambda K )\mathbb{E}\left[  \mathbf{w}_{t-1}\mathbf{m}_{t-1}^{\rm T}\right]\Lambda  \nonumber\\
    &=  \mu (I_D-\Lambda K ) \mathbb{E}[\mathbf{w}_{t-1}(\mathbf{w}_{t-2} - \mathbf{w}_{t-1})^{\rm T}] \nonumber\\
    &= -\mu (I_D-\Lambda K )  \Sigma + \mu (I_D-\Lambda K ) \mathbb{E}[\mathbf{w}_{t-1}\mathbf{w}_{t-2}^{\rm T}],\\
    A^{\rm T}&=-\mu  \Sigma (I_D-K \Lambda )  + \mu \mathbb{E}[\mathbf{w}_{t-2}\mathbf{w}_{t-1}^{\rm T}] (I_D-K \Lambda ). 
\end{align}

Therefore, we are left to solve for $\mathbb{E}[\mathbf{w}_{t-1}\mathbf{w}_{t-2}^{\rm T}]$ and its transpose.  Using the fact that the expectation values are time-independent for the stationary state, we obtain
\begin{align}
    \mathbb{E}[\mathbf{w}_{t-1}\mathbf{w}_{t-2}^{\rm T}] &= \mathbb{E}[\mathbf{w}_{t}\mathbf{w}_{t-1}^{\rm T}] = \mathbb{E}[(\mathbf{w}_{t-1} - \mu \Lambda \mathbf{m}_{t-1} - \Lambda K\mathbf{w}_{t-1} - \Lambda \eta_{t-1})\mathbf{w}_{t-1}^{\rm T}] \nonumber\\
    &= (I_D-\Lambda K )\Sigma  -\mu \Lambda \mathbb{E} [ \mathbf{m}_{t-1} \mathbf{w}_{t-1}^{\rm T}] \nonumber\\
    &= (I_D-\Lambda K )\Sigma + \mu \Sigma - \mu \mathbb{E}[\mathbf{w}_{t-2}\mathbf{w}_{t-1}^{\rm T}],\\
    \mathbb{E}[\mathbf{w}_{t-2}\mathbf{w}_{t-1}^{\rm T}] &= \Sigma  (I_D-K \Lambda )+ \mu \Sigma- \mu \mathbb{E}[\mathbf{w}_{t-1}\mathbf{w}_{t-2}^{\rm T}].
\end{align}
From the above two equations, we have
\begin{align}
    \mathbb{E}[\mathbf{w}_{t-1}\mathbf{w}_{t-2}^{\rm T}] &= \frac{1}{1-\mu^2}\left[(I_D-\Lambda K)\Sigma + \mu \Sigma -\mu \Sigma (I_D - K\Lambda)-\mu^2 \Sigma\right],\\
    \mathbb{E}[\mathbf{w}_{t-2}\mathbf{w}_{t-1}^{\rm T}] &= \frac{1}{1-\mu^2}\left[\Sigma (I_D- K \Lambda)+ \mu \Sigma -\mu (I_D - \Lambda K) \Sigma -\mu^2 \Sigma\right].
\end{align}

Finally, substituting these results back into \eqref{eq: momentum to be solved} yields
\begin{align}
     (1-\mu) (\Lambda K\Sigma + \Sigma K \Lambda) - \frac{1+\mu^2}{1-\mu^2}\Lambda K\Sigma K \Lambda+ \frac{\mu}{1-\mu^2}(\Lambda K\Lambda K\Sigma +\Sigma K\Lambda K\Lambda) = \Lambda C\Lambda. %\label{eq: preconditioning matrix eq}
    %\vspace{-2mm}
\end{align}
\end{proof}

While Theorems~\ref{thm: SGD} and \ref{theo: main theorem with momentum} can be proven via the similar method as above, it is easier to derive them from Theorem~\ref{thm: preconditioning matrix eq}. For Theorem~\ref{theo: main theorem with momentum}, we assume a scalar learning rate $\lambda$.
\begin{proof}
Let $\Lambda = \lambda I_D$. Then from Eq.~\eqref{eq: preconditioning matrix eq}, we have
\begin{align}
    (1-\mu)\lambda (K\Sigma + \Sigma K ) - \frac{1+\mu^2}{1-\mu^2}\lambda^2 K\Sigma K + \frac{\mu}{1-\mu^2}\lambda^2( K^2\Sigma +\Sigma K^2)   = \lambda^2 C. %\label{matrixeqmomentum}
    \end{align}
\end{proof}

Theorem~\ref{thm: SGD} can be derived from Theorem~\ref{theo: main theorem with momentum} by setting $\mu=0$.
\begin{proof}
Let $\lambda$ be a scalar and $\mu=0$. Then from Eq.~\eqref{matrixeqmomentum}, we have
\begin{align}
        \Sigma K +  K\Sigma -\lambda K\Sigma K = \lambda C.
        %\label{matrixeq}
    \end{align}  
\end{proof}

\subsubsection{Proof of Corollary~\ref{corollary: momentum commuting}}\label{appx: lemma proof}
We first prove a lemma about commutation relations.
\begin{lemma} \label{lemma_commute}
    $[\Sigma, K]=0$, if and only if $[C,K]=0$.
\end{lemma}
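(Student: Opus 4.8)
The plan is to diagonalize $K$ and read the matrix equation of Theorem~\ref{thm: SGD} entry by entry. Since $K$ is symmetric positive definite, write $K = U\,\mathrm{diag}(k_1,\dots,k_D)\,U^{\rm T}$ with $U$ orthogonal, and set $\tilde\Sigma := U^{\rm T}\Sigma U$, $\tilde C := U^{\rm T}C U$. Then $\Sigma K + K\Sigma - \lambda K\Sigma K = \lambda C$ becomes, componentwise,
\[
    \tilde\Sigma_{ij}\,\bigl(k_i + k_j - \lambda k_i k_j\bigr) = \lambda\,\tilde C_{ij}\qquad\text{for all }i,j .
\]
The key observation is that $[\Sigma,K]=0$ is equivalent to $\tilde\Sigma_{ij}=0$ whenever $k_i\neq k_j$, and likewise $[C,K]=0\iff \tilde C_{ij}=0$ whenever $k_i\neq k_j$ (within a degenerate eigenspace $K$ acts as a scalar, so no constraint arises there). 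Thus the lemma reduces to comparing the supports of $\tilde\Sigma$ and $\tilde C$ on the off-diagonal index pairs with $k_i\neq k_j$.

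The forward direction is then immediate: if $\tilde\Sigma_{ij}=0$ for every pair with $k_i\neq k_j$, the left-hand side of the displayed identity vanishes there, so $\tilde C_{ij}=0$ as well (using $\lambda>0$), i.e. $[C,K]=0$. Equivalently, without coordinates, if $[\Sigma,K]=0$ then each of $\Sigma K$, $K\Sigma$, $K\Sigma K$ commutes with $K$, hence so does their combination $\lambda C$.

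For the reverse direction the only thing to verify is that the scalar coefficient $k_i + k_j - \lambda k_i k_j$ never vanishes; I expect this to be the one substantive point. This is exactly where the standing hypothesis $0<\lambda<2/k^{*}$ (with $k^{*}$ the largest eigenvalue of $K$) enters: since $k_i,k_j\le k^{*}$ we have $\tfrac1{k_i}+\tfrac1{k_j}\ge \tfrac2{k^{*}}>\lambda$, so $k_i+k_j-\lambda k_i k_j = k_i k_j\bigl(\tfrac1{k_i}+\tfrac1{k_j}-\lambda\bigr)>0$. Consequently, whenever $k_i\neq k_j$ we have $\tilde C_{ij}=0$ by $[C,K]=0$, and the componentwise identity forces $\tilde\Sigma_{ij}=0$; hence $[\Sigma,K]=0$.

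As a coordinate-free alternative, one may introduce the linear operator $\mathcal{A}(X):=XK+KX-\lambda KXK$ on the space of symmetric matrices, note that it is invertible (its eigenvalues in the $K$-eigenbasis are precisely the nonzero numbers $k_i+k_j-\lambda k_i k_j$ just bounded below), and observe that it maps the subspace $W$ of symmetric matrices commuting with $K$ into itself. A dimension count (injectivity plus $\mathcal{A}(W)\subseteq W$) then gives $\mathcal{A}(W)=W$, so $\mathcal{A}(\Sigma)=\lambda C$ lies in $W$ if and only if $\Sigma$ does, which is exactly the claimed equivalence. Either route closes the proof; the bulk of the work is the elementary positivity estimate above, everything else being bookkeeping.
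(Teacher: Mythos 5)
Your proof is correct but takes a genuinely different route from the paper's. You work directly with the stationary matrix equation, diagonalize $K$, and read the equation componentwise as $\tilde\Sigma_{ij}(k_i+k_j-\lambda k_ik_j)=\lambda\tilde C_{ij}$; both directions of the lemma then reduce to the strict positivity of these scalar coefficients under the standing hypothesis $0<\lambda<2/k^{*}$, which you verify. The paper argues dynamically instead: the forward direction follows because the matrix equation exhibits $C$ as a function of $K$ and $\Sigma$, and the reverse direction uses the explicit solution of the two-step recursion, $\Sigma=\lambda^2\lim_{t\to\infty}\sum_{i=0}^{t-1}g_iCg_i$ with each $g_i$ a polynomial in $K$, so that $[C,K]=0$ propagates to $[\Sigma,K]=0$. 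Your route is more self-contained and makes explicit exactly where the stability condition enters (invertibility of the Lyapunov-type operator $\mathcal{A}$ and its preservation of the commutant of $K$), whereas the paper's route also delivers existence of the stationary $\Sigma$ and the Gaussian form of the stationary law as by-products. One caveat: the paper invokes this lemma in Corollary~\ref{corollary: momentum commuting}, i.e.\ in the presence of momentum, and its proof is carried out for the momentum recursion, while you only treat the $\mu=0$ equation of Theorem~\ref{thm: SGD}. Your argument does extend verbatim to Eq.~\eqref{matrixeqmomentum}: there the componentwise coefficient equals $\frac{\mu\lambda^2(k_i-k_j)^2}{1-\mu^2}+(1-\mu)\lambda k_ik_j\left(\frac{1}{k_i}+\frac{1}{k_j}-\frac{\lambda}{1+\mu}\right)$, which is again strictly positive under the stability condition $\lambda k^{*}<2(1+\mu)$, so the same conclusion holds; you should state this explicitly so that the lemma covers all the places where it is used.
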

\begin{proof}
\begin{enumerate}
    \item We first prove that if $[\Sigma, K]=0$, then $[C,K]=0$, which is straightforward. Equation.~\eqref{matrixeqmomentum} implies that $C$ is a analytical function of $\Sigma$ and $K$, i.e., $C=C(K,\Sigma)$. The commutator is
    \begin{align}
        [C,K]=[C(K,\Sigma),K].
    \end{align}
    If $[\Sigma, K]=0$, it directly follows that $[C,K]=0$.
    
    \item Now we prove the if $[C,K]=0$, then $[\Sigma, K]=0$, which is not so straightforward. 
%     The matrix equation \eqref{matrixeqmomentum} satisfied by the model fluctuation has the following form:
% \begin{align}
%     a(K\Sigma + \Sigma K)+ bK\Sigma K+c(K^2 \Sigma +\Sigma K^2)=C,
% \end{align}
% where $a:=\frac{1-\mu}{\lambda}>0$, $b:=-\frac{1+\mu^2}{1-\mu^2}<0$, and $c:=\frac{\mu}{1-\mu^2}>0$. From Eq.~\eqref{matrixeqmomentum} we have
% \begin{align}
%     C=\left[aI_D + (b+c)K\right][\Sigma, K]+c[\Sigma, K]K+K\left[2aI_D+(b+2c)K\right]\Sigma.
% \end{align}
% Taking commutator yields
% \begin{align}
%     [C,K]=K(aI_D+cK)[\Sigma,K]+(aI_D+bK)[\Sigma,K]K+c[\Sigma,K]K^2.
% \end{align}
% If $[C,K]=0$, then we have
% \begin{align}
%     X=-AXK^{-2}-BXK^{-1},\label{commutator proof}
% \end{align}
% where $X:=[\Sigma,K]$, $A:=K\left(\frac{a}{c}I_D+K\right)>0$ and $B:=\frac{1}{c}(aI_D+bK)$. We use proof by contradiction. Suppose that $X\ne 0$. If $X>0$, then lhs of Eq.~\eqref{commutator proof} is positive definite. Notice that Eq.~\eqref{commutator proof} must hold for all range of hyperparameters. Only when $B$ is negative definite for all hyperparameters, this equation is possible to hold. Similar argument if $X$ is negative. Therefore, if $B$ is 
We introduce simplified notations: $X:=(1-\mu)I_D$ and $Y:=I_D -\lambda K$. By iteration, we have
\begin{align}
    \mathbf{w}_t &= (X+Y)\mathbf{w}_{t-1}-X\mathbf{w}_{t-2}+\lambda\eta_{t-1}\nonumber\\
    &\dots\nonumber\\
    &=g_{t-1}\mathbf{w}_1 - Xg_{t-2}\mathbf{w}_0 + \lambda\sum_{i=0}^{t-1}g_i \eta_{t-1-i},
\end{align}
where the coefficient matrices $g_i$ satisfy the following recurrence relation
\begin{align}
    g_t=(X+Y)g_{t-1}-Xg_{t-2},\quad \text{for}\quad t\ge 2,
\end{align}
where the initial terms are given by
\begin{align}
    g_0 = I_D, \quad g_1 = X+Y.
\end{align}
It follows from the relation $\lim_{t\to \infty}g_t=0$ that
\begin{align}
  \lim_{t\to \infty} \mathbf{w}_t &=\lim_{t\to \infty}\lambda\sum_{i=0}^{t-1}g_i \eta_{t-1-i}\sim \mathcal{N}\left( 0,\lambda^2\lim_{t\to \infty}\sum_{i=0}^{t-1}g_i C g_i \right):=\mathcal{N}\left( 0,\Sigma \right).
\end{align}
Because every $g_i$ is a function of $K$, $[C,K]=0$ is equivalent with $[\Sigma,K]=0$.

\end{enumerate}
\end{proof}
% We first prove a lemma about commutation relations.
% \begin{lemma} \label{lemma_commute}
%     $[C,K]=0$ is equivalent to $[\Sigma, K]=0$.
% \end{lemma}
% \begin{proof}
% We introduce simplified notations: $X:=(1-\mu)I_D$ and $Y:=I_D -\lambda K$. By iteration, we have
% \begin{align}
%     \mathbf{w}_t &= (X+Y)\mathbf{w}_{t-1}-X\mathbf{w}_{t-2}+\lambda\eta_{t-1}\nonumber\\
%     &\dots\nonumber\\
%     &=g_{t-1}\mathbf{w}_1 - Xg_{t-2}\mathbf{w}_0 + \lambda\sum_{i=0}^{t-1}g_i \eta_{t-1-i},
% \end{align}
% where the coefficient matrices $g_i$ satisfy the following recurrence relation
% \begin{align}
%     g_t=(X+Y)g_{t-1}-Xg_{t-2},\quad \text{for}\quad t\ge 2,
% \end{align}
% where the initial terms are given by
% \begin{align}
%     g_0 = I_D, \quad g_1 = X+Y.
% \end{align}
% It follows from the relation $\lim_{t\to \infty}g_t=0$ that
% \begin{align}
%   \lim_{t\to \infty} \mathbf{w}_t &=\lim_{t\to \infty}\lambda\sum_{i=0}^{t-1}g_i \eta_{t-1-i}\sim \mathcal{N}\left( 0,\lambda^2\lim_{t\to \infty}\sum_{i=0}^{t-1}g_i C g_i \right):=\mathcal{N}\left( 0,\Sigma \right).
% \end{align}
% Because every $g_i$ is a function of $K$, $[C,K]=0$ is equivalent with $[\Sigma,K]=0$.
% \end{proof}

With this lemma, we prove Corollary~\ref{corollary: momentum commuting}.
\begin{proof}
The matrix equation \eqref{matrixeqmomentum} satisfied by the parameter covariance matrix can be equivalently written in the form containing commutators as
\begin{equation}
    \underbrace{(1-\mu)\lambda K \left(2I_D-\frac{\lambda}{1+ \mu}K\right)\Sigma}_{\text{commuting contribution}}  +
    \underbrace{ (1-\mu)\lambda \left(I_D -\frac{\lambda}{1+\mu}K\right)[\Sigma, K] + \frac{\mu}{1-\mu^2}\lambda^2\left[ K,[K,\Sigma]\right]}_{\text{non-commuting contribution}} = \lambda^2 C,\label{matrixeqmomentum2}
\end{equation}
where the non-commuting contribution is finite if $[C,K]\ne 0$. Otherwise, if $[C,K]=0$, we have $[\Sigma, K]=0$ such that the non-commuting term vanishes and the model fluctuation is
\begin{align}
    \Sigma &= \left[ \frac{\lambda K}{1+\mu}\left( 2I_D - \frac{\lambda K}{1+\mu}\right) \right]^{-1}\frac{\lambda^2 C}{1-\mu^2}\nonumber\\
    &:=[\tilde{\lambda}K(2I_D -\tilde{\lambda}K)]^{-1}\tilde{C}, 
\end{align}
where we introduce the following rescaling:
\begin{align}
    \tilde{\lambda}:=\frac{\lambda}{1+\mu},\quad \tilde{C}:=\frac{1+\mu}{1-\mu}C.
\end{align}
\end{proof}
\begin{remark}
We notice that together with $[C,K]=0$, the form of the matrix equation satisfied by $\Sigma$ is invariant under this rescaling:
\begin{align}
    \tilde{\lambda}(K\Sigma+\Sigma K)-\tilde{\lambda}^2 K\Sigma K= \tilde{\lambda}^2 \tilde{C}.
\end{align}
This suggests that the learning rate can be $1+\mu$ times larger.
\end{remark}

\subsubsection{Proof of Theorem~\ref{thm: typical noise} and Corollaries~\ref{cor: typical noise 1} and \ref{cor: typical noise 2}}\label{app: typical noise}
We first prove Theorem~\ref{thm: typical noise}.
\begin{proof}
According to Theorem~\ref{thm: SGD}, if the algorithm is updated under Gaussian noise with covariance matrix $C$, the stationary distribution of the model parameters $\mathbf{w}$ is $\mathcal{N}(0,\Sigma)$, where $\Sigma$ satisfies Eq.~\eqref{matrixeq}. Due to Lemma~\ref{lemma_commute}, we have $[\Sigma, K]=0$ because $C=\sigma^2 I_D + a K$ commutes with $K$. Referring to Corollary~\ref{corollary: momentum commuting}, the model fluctuation is
\begin{align}
    \Sigma=\lambda(\sigma^2 I_D + a K)[K(2I_D -\lambda K)]^{-1}.
\end{align}
\end{proof}

Corollaries~\ref{cor: typical noise 1} and \ref{cor: typical noise 2} can be easily proven from Theorem~\ref{thm: typical noise}.
\begin{proof}
If $\sigma^2=0$, then $\Sigma=a\lambda (2I_D -\lambda K)^{-1}$. If $a=0$, then $\Sigma=\sigma^2\lambda[K(2I_D -\lambda K)]^{-1}$.
\end{proof}

\subsubsection{Proof of Theorem~\ref{thm: SGDM train error}}\label{app: derivation of thm SGDM train error}
\begin{proof}
In the presence of momentum, we multiply both sides of Eq.~\eqref{matrixeqmomentum2} by $R :=\left(2I_D-\frac{\lambda}{1+ \mu}K\right)^{-1}$ to the left to obtain
\begin{align}
(1-\mu)\lambda K \Sigma  +
    RA_1 + RA_2 = \lambda^2 RC,
\end{align}
where $A_1:=(1-\mu)\lambda \left(I_D -\frac{\lambda}{1+\mu}K\right)[\Sigma, K]$ and $A_2:=\frac{\mu}{1-\mu^2}\lambda^2 \left[ K,[K,\Sigma]\right]$ are terms involving commutators. Taking the trace on both sides yields
\begin{equation}
    (1-\mu)\lambda{\rm Tr} [ K\Sigma] +  {\rm Tr}[R A_1] +  {\rm Tr}[R A_2]   = \lambda^2{\rm Tr} \left[\left(2I_D-\frac{\lambda}{1+ \mu}K\right)^{-1} C \right].
\end{equation}
Because the commuting terms $A_1$ and $A_2$ are anti-symmetric by definition and $R$ is symmetric, the traces ${\rm Tr}[R A_1]$ and ${\rm Tr}[R A_2]$ vanish. Finally, we have
\begin{align}
    L_{\rm train}:=\frac{1}{2}{\rm Tr} [ K\Sigma]=\frac{\lambda}{4(1-\mu)}{\rm Tr} \left[\left(I_D-\frac{\lambda}{2(1+ \mu)}K\right)^{-1} C \right].
\end{align}
\end{proof}

 \clearpage

\subsection{Proofs in Section~\ref{sec: applications}}\label{app: applications}
\subsubsection{Proof of Theorem~\ref{thm: Bays}}\label{app: Bays derivation}
\begin{proof}
The goal of this approximate Bayesian inference task is to find the optimal learning rate which minimizes the KL divergence between the SGD stationary distribution $q(\mathbf{w})$ given in Theorem~\ref{thm: SGD} and the posterior \eqref{posterior}. The KL divergence is
\begin{align}
    D_{\rm KL}(q||f)&=-\mathbb{E}_q (\ln f)+\mathbb{E}_q (\ln q)\nonumber\\
    &=\frac{1}{2}\left[N\text{Tr} [K\Sigma] - \ln|NK| -\ln|\Sigma|-D\right]\nonumber,
\end{align}
where $|\cdot |$ is the determinant and $D$ is the dimension of the parameters $\mathbf{w}$.

Suppose that the noise covariance is $C=\frac{N-S}{NS}K$, which is an approximation of the noise induced by minibatch sampling \cite{Hoffer2017}. According to Theorem~\ref{thm: typical noise}, the covariance of the model is
\begin{align}
    \Sigma = \lambda\frac{N-S}{NS}(2I_D - \lambda K)^{-1}.
\end{align}
Therefore, up to constant terms, the KL divergence is
\begin{align}
    D_{\rm KL}\overset{c}{=}\lambda\frac{N-S}{S}\text{Tr}[(2I_D - \lambda K)^{-1}K]-D\ln \lambda +\ln|2I_D - \lambda K| -D.
\end{align}
Taking the derivative with respect to $\lambda$ yields
\begin{align}
    \frac{\partial}{\partial\lambda}D_{\rm KL}=\frac{N-2S}{S}\text{Tr}[(2I_D - \lambda K)^{-1}K]+\lambda\frac{N-S}{S}\text{Tr}[(2I_D - \lambda K)^{-2}K^2]-\frac{D}{\lambda}.\label{KLderivative}
\end{align}
The optimal $\lambda$ is obtained by solving $\frac{\partial}{\partial\lambda}D_{\rm KL}=0$, namely
\begin{align}
    \frac{N-2S}{S}\text{Tr}[(2I_D - \lambda K)^{-1}K]+\lambda\frac{N-S}{S}\text{Tr}[(2I_D - \lambda K)^{-2}K^2]=\frac{D}{\lambda}.
\end{align}
Equivalently, it can be written into Eq.~\eqref{eq: DKL=0}, because $K$ and $(2I_D - \lambda K)^{-1}$ are simultaneously diagonalizable.
\end{proof}

%\subsection{Derivations of Some Results  in Section~\ref{subsec: escape}}\label{app: 1st escape derivation}
\subsubsection{Proof of Theorem~\ref{thm: discrete escape efficiency}}\label{app: discrete escape efficiency}
We first derive the discrete-time version of the escaping efficiency \eqref{eq: efficiency} presented in Theorem~\ref{thm: discrete escape efficiency}.
\begin{proof}
Because the initial state is the exact minimum, namely $\mathbf{w}_0=0$, the parameters evolve at time $t$ to
\begin{align}
    \mathbf{w}_t=\lambda\sum_{i=0}^{t-1}(I_D-\lambda K)^i \eta_{t-1-i}.
\end{align}
The loss for such parameters is 
\begin{align}
    L(\mathbf{w}_t)=\frac{\lambda^2}{2}\sum_{i=0}^{t-1}\eta_{t-1-i}^{\rm T}K(I_D-\lambda K)^{2i}\eta_{t-1-i}+{\rm cross-terms},
\end{align}
where the cross-terms involve not-equal-time contributions. The expectation value of the loss at time $t$ is
\begin{align}
    E:=\mathbb{E}[L(\mathbf{w}_t)]&=\frac{\lambda^2}{2}\sum_{i=0}^{t-1}\text{Tr}\left[CK(I_D -\lambda K)^{2i}\right]\nonumber\\
    &=\frac{\lambda}{4} \text{Tr}\left[\left(I_D -\frac{\lambda K}{2}\right)^{-1}\left[I_D - (I_D-\lambda K)^{2t}\right]C\right],
\end{align}
where the cross-terms vanish due to the Gaussian property of the noise and in the second line we use the Neumann series that $\sum_{i=0}^{n}A^i=(I_D-A)^{-1}(I_D-A^{n+1})$.

% For a single step $t=1$, we obtain
% \begin{align}
%     E=\frac{\lambda^2}{2}\text{Tr}[KC].
% \end{align}
% In long-time limit, according to the Neumann series, we obtain
% \begin{align}
%     E=\frac{\lambda}{2}\text{Tr}\left[(2I_D -\lambda K)^{-1}C\right].
% \end{align}
\end{proof}

\subsubsection{Proof of Corollary~\ref{cor: Ed > Ec}}\label{app: Ed > Ec}

%Next, we prove that $E_d\ge E_c$ for all $0<\lambda< 2/k^{*}$ and $t\ge 0$. 
\begin{proof}
As a necessary condition, if each component inside the trace of $E_d$ is greater than that of $E_c$, then the trace itself should be so as well. Specifically, we wish to show that
\begin{align}
    \left(1-\frac{\lambda k}{2}\right)^{-1}\left[1-\left(1-\lambda k\right)^{2t}\right]\ge 1-e^{-2\lambda kt},\quad \forall \ 0<\lambda k<2 \ {\rm and}\ t\ge 0.
\end{align}
Equivalently, we wish to show that
\begin{align}
    \left(1-\frac{\lambda k}{2}\right)e^{-2\lambda kt}\ge \left(1-\lambda k\right)^{2t}-\frac{\lambda k}{2}.
\end{align}
Because $e^{-x}\ge 1-x$ for all $x\ge 0$, we have
\begin{align}
    {\rm lhs}&:=\left(1-\frac{\lambda k}{2}\right)e^{-2\lambda kt}\nonumber\\
    & \ge \left(1-\frac{\lambda k}{2}\right)\left(1-\lambda k\right)^{2t}=\left(1-\lambda k\right)^{2t}-\frac{\lambda k}{2}\left(1-\lambda k\right)^{2t}\nonumber\\
    &\ge \left(1-\lambda k\right)^{2t}-\frac{\lambda k}{2}:={\rm rhs}.
\end{align}
\end{proof}

\subsubsection{Proof of Theorem~\ref{thm: efficiency ratio}}\label{app: efficiency ratio}
\begin{proof}
We first elaborate on the condition about the alignment assumption. As in \citet{Zhu2019}, we denote the maximal eigenvalue and the corresponding eigenvector of $C$ as $c_1$ and $v_1$, respectively. We have $u_1^{\rm T}Cu_1\ge u_1^{\rm T}v_1 c_1 v_1^{\rm T} u_1 =c_1 \langle u_1,v_1\rangle^2$. If the maximal eigenvalues of $C$ and $K$ are aligned in proportion, namely $c_1 /\text{Tr}[C] \ge a_1 k_1/\text{Tr}[K]$, and the angle between their eigenvectors is so small that $\langle u_1,v_1\rangle^2\ge a_2$, then we can conclude that $u_1^{\text{T}}Cu_1\ge ak_1\frac{\text{Tr}[C]}{\text{Tr}[K]}$ with $a:=a_1 a_2$.

We then derive the efficiency ratio \eqref{eq: efficiency ratio}. For a single step, it is the same as the continuous-time one \cite{Zhu2019}. Decomposing $\text{Tr}[KC]$, we have
\begin{align}
    \text{Tr}[KC]=\sum_{i=1}^{D}k_i u_i^{\rm T}Cu_i\ge k_1 u_1^{\rm T}Cu_1\ge a k_1^2 \frac{\text{Tr}[C]}{\text{Tr}[K]}.
\end{align}
For the isotropic equivalence of the noise, we have
\begin{align}
    \text{Tr}[K\bar{C}]=\frac{\text{Tr}[C]}{D}\text{Tr}[K].
\end{align}
Therefore, we obtain
\begin{align}
    \frac{\text{Tr}[KC]}{\text{Tr}[K\bar{C}]}\ge aD\frac{k_1^2}{(\text{Tr}[K])^2}\ge aD\frac{k_1^2}{\left[lk_1+(D-l)D^{-d}k_1\right]^2}\approx aD\frac{1}{\left[l+(D-l)D^{-d}\right]^2}=\mathcal{O}(aD^{2d-1}).
\end{align}
Next, for a long-time, the alignment argument should be slightly modified. While the order of eigenvalues of $K$ is the same as that of $(2I_D-\lambda K)^{-1}$ and they share the same set of eigenvectors, the only thing that should be modified in the argument is that the maximal eigenvalues of $C$ and $(2I_D-\lambda K)^{-1}$ are aligned in proportion such that
\begin{align}
    \frac{c_1 }{\text{Tr}[C]} \ge a_3 \frac{(2-\lambda k_1)^{-1}}{\text{Tr}\left[(2I_D-\lambda K)^{-1}\right]},
\end{align}
where $a_3$ is different from $a_1$ in general. Then the final ratio should contain $a':=a_3 a_2$, instead of $a=a_1 a_2$. The remaining derivation is the same as above.
\end{proof}

\subsubsection{Proof of Theorem~\ref{thm: Kramers}}\label{app: Kramers derivation}
\begin{proof}
%According to the discussion in the main text, in order to make a better approximation on the Kramers rate, we need to carefully calculate both $P(w\in V_a)$ and $J$.
First we propose a new approximation on $P(w\in V_a)$. The width of the well $a$ is approximated by $2\sqrt{\frac{\Delta L}{k}}$, where $\Delta L:=L(b)-L(a)$ is the height of the potential barrier and $b$ is the position of the barrier top as shown in Figure~\ref{fig:escape}(a). The probability inside well $a$ is approximated by a finite-range Gaussian integral as
\begin{align}
    P(w\in V_a)&\approx\int_{-\sqrt{\frac{\Delta L}{k}}}^{\sqrt{\frac{\Delta L}{k}}}P(w)dw\nonumber\\
    &=P(a)\sqrt{\frac{2\pi C}{\lambda k (2-\lambda k)}}\textrm{erf}\left(\sqrt{\frac{\lambda(2-\lambda k)\Delta L}{C}}\right),
\end{align}
where $\textrm{erf}(z)$ is the error function. This probability is strictly smaller than $1$, which is consistent with our expectations.

The probability current $J$ can be rewritten as
\begin{align}
    \nabla\left[\exp\left(\frac{L(w)-L(l)}{T}\right)P_c(w)\right]=-J\mathcal{D}^{-1}\exp\left(\frac{L(w)-L(l)}{T}\right),
\end{align}
where $l$ is a midpoint on the most probable escape path between $a$ and $b$ such that $k(\mathbf{w})\approx k_a$ in the path $a\to l$ and $k(\mathbf{w})\approx k_b$ in $l\to b$. In a stationary state, the probability current $J$ is a constant and it can be obtained by integrating both sides of the above equation from $a$ to $b$:
\begin{align}
    \textrm{lhs}=-\exp\left(\frac{L(a)-L(l)}{T_a}\right)P_c(a),
\end{align}
and
\begin{align}
    \textrm{rhs}&=-J\int_{a}^{b}\mathcal{D}^{-1}\exp\left(\frac{L(w)-L(l)}{T}\right)dw\nonumber\\
    &\approx -J\mathcal{D}_{b}^{-1}\int_{-\infty}^{\infty}\exp\left(\frac{L(b)-L(l)+\frac{1}{2}(w-b)^{\rm T}k_b (w-b)}{T_b}\right)dw\nonumber\\
    &=-J\mathcal{D}_{b}^{-1}\exp\left(\frac{L(b)-L(l)}{T_b}\right)\sqrt{\frac{2\pi T_b}{|k_b|}},
\end{align}
where we have approximated the integrand on the right-hand side (rhs) because it is peaked around the point $b$ and $\mathcal{D}_{b}=T_b$. When the noise covariance is $C=\frac{1}{S}k_a$, the two ``temperatures" are given by $T_a=\frac{\lambda}{2S}k_a$ and $T_b=\frac{\lambda}{2S}|k_b|$. 

We propose two corrections to the approximation of the current: (1) we replace the continuous-time distribution $P_c (w)$ by the discrete-time one $P(w)=P(a)\exp\left(-\frac{1}{2}w^{\rm T}\Sigma^{-1}w\right)$; (2) the effective ``temperature" at point $a$ is enlarged because the fluctuation is larger. From the distribution, the ``temperature" should be $T_a=\frac{\lambda}{2S}\frac{k_a}{1-\lambda k_a /2}$. Specifically, the current is now approximated as
\begin{align}
    J\approx P(a)\exp\left(-\frac{1}{2}w^{\rm T}\Sigma^{-1}w\right)\exp\left(\frac{L(a)-L(l)}{T_a}-\frac{L(b)-L(l)}{T_b}\right)\sqrt{\frac{|k_b|}{2\pi T_b}}.
\end{align}
Substituting everything into the definition \eqref{eq: def of Kramers} yields the approximated Kramers rate:
\begin{align}
    \gamma\approx&\frac{1}{2\pi}|k_b|\sqrt{\frac{2}{2-\lambda k_a}}\textup{erf} \left(\sqrt{\frac{S(2-\lambda k_a)\Delta L}{\lambda k_a}}\right)\exp\left[-\frac{2S\Delta L}{\lambda}\left(\frac{l(1-\lambda k_a /2)}{k_a}+\frac{1-l}{|k_b|}\right)\right],
\end{align}
\end{proof}
\begin{remark}
We emphasize that our corrections are not precise because the current is a dynamical quantity. To precisely characterize the Kramers rate, it may be necessary to develop a discrete-time version of the Fokker-Planck equation \eqref{eq: FP}. Hence, our corrections do not guarantee the accuracy of the coefficients in the expressions. 
\end{remark}

%Nevertheless, we will show that the qualitative feature and scale of our theoretical prediction are surprisingly close to the experiment in Figure~\ref{fig:escape}(b), while the continuous-time theory simply fails.

\subsubsection{More on Approximation Error}\label{subsec: general momentum}
In this subsection we derive the matrix equations satisfied by the stationary distribution of a class of SGD with a more general form of momentum called Quasi-Hyperbolic Momentum (QHM) \citep{ma2018quasihyperbolic,gitman2019understanding}. The update rule is given by
\begin{align}
    \begin{cases}
        \mathbf{g}_t = K\mathbf{w}_{t-1} + \eta_{t-1};\\
        \mathbf{m}_t = \mu \mathbf{m}_{t-1} + (1-\mu)\mathbf{g}_t;\\
        \mathbf{w}_t = \mathbf{w}_{t-1} - \lambda \left[(1-\nu)g_t +\nu \mathbf{m}_t\right],
    \end{cases} \label{generalmomentSGD}
\end{align}
where the additional parameter $\nu\in [0,1]$ interpolates between the usual SGD \eqref{momentumSGD} without momentum ($\nu=0$) and a normalized version of SGD with momentum \eqref{momentumSGD} ($\nu=1$). The covariance of the model parameters is given in the following theorem.
\begin{theorem}$($Model parameters covariance matrix of QHM$)$ Let the algorithm be updated according to Eqs.~\eqref{generalmomentSGD}. Then the covariance matrix $\Sigma$ of the model parameters satisfies the following set of matrix equations:
\begin{align}\label{eq: QHM matrix equations}
\begin{cases}
    &\mu(1+\mu)(X+X^{\rm T})+\lambda\left[1-\mu\nu(2+\mu)\right](XK+KX^{\rm T})  +a\Sigma +b(K\Sigma+\Sigma K) +cK\Sigma K=d\lambda^2C;\\
    &X=\mu \alpha K^2\Sigma+\lambda[-1+\mu(1+\mu-\mu\nu)]K\Sigma K -\lambda\mu(1-\nu)\alpha K(KQ+QK)K+(1-\mu^2)Q;\\
    &Q-AQA=\Sigma,
    \end{cases}
\end{align}
where
\begin{align}
    &a:=-2\mu(1+\mu),\quad b:=\lambda\mu^2 (1-\nu),\quad c:=\lambda^2\left[1-\mu^2-2\mu\nu(1-\mu)\right],\quad d:=1+\mu[\mu-2\nu-2\mu\nu(1-\nu)],\nonumber\\
    &Q:=\sum_{i=0}^{\infty}A^i \Sigma A^i,\quad A:=\mu[I_D-\lambda(1-\nu)K], \quad \alpha:=\lambda[1-\nu+\nu(1-\mu)].
\end{align}
\end{theorem}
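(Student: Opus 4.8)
The plan is to follow the same route used to prove Theorem~\ref{thm: preconditioning matrix eq}: assume the stationary laws of $\mathbf{m}$ and $\mathbf{w}$ exist, initialize $(\mathbf{w}_0,\mathbf{m}_0)$ from the stationary distribution so that every second moment is time-shift invariant, and derive a closed linear system for $\Sigma:=\mathbb{E}[\mathbf{w}_t\mathbf{w}_t^{\rm T}]$ by taking outer products of the update rules and using stationarity. Concretely, I would first use $\mathbf{g}_t=K\mathbf{w}_{t-1}+\eta_{t-1}$ to eliminate $\mathbf{g}_t$ from the $\mathbf{m}$- and $\mathbf{w}$-updates of \eqref{generalmomentSGD}, and then eliminate $\mathbf{m}_t$ by solving the momentum recursion for $\nu\mathbf{m}_t$ in terms of two consecutive $\mathbf{w}$'s. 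This collapses \eqref{generalmomentSGD} to a single second-order (ARMA-type) recursion
\[
\mathbf{w}_t=\bigl[(1+\mu)I_D-\lambda(1-\mu\nu)K\bigr]\mathbf{w}_{t-1}-A\,\mathbf{w}_{t-2}-\lambda(1-\mu\nu)\eta_{t-1}+\lambda\mu(1-\nu)\eta_{t-2},
\]
with $A=\mu[I_D-\lambda(1-\nu)K]$ exactly as in the statement; note that $A$ is a degree-one polynomial in $K$, hence symmetric and commuting with $K$ (though not with $C$), and $\|A\|<1$ whenever the iteration is stable.

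\textbf{Closing the hierarchy.} Multiplying this recursion on the right by $\mathbf{w}_{t-1}^{\rm T}$, $\mathbf{w}_{t-2}^{\rm T}$ and $\mathbf{w}_t^{\rm T}$ and taking expectations gives relations among $\Sigma$, the lag-one and lag-two autocovariances $\Gamma_1:=\mathbb{E}[\mathbf{w}_t\mathbf{w}_{t-1}^{\rm T}]$, $\Gamma_2:=\mathbb{E}[\mathbf{w}_t\mathbf{w}_{t-2}^{\rm T}]$ (and their transposes), and a small number of noise cross-terms such as $\mathbb{E}[\eta_{t-2}\mathbf{w}_{t-1}^{\rm T}]$, which are the only nonvanishing ones since $\eta_s$ is independent of $\mathbf{w}_r$ for $r\le s$ while $\eta_{t-2}$ already feeds into $\mathbf{w}_{t-1}$. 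Because the noise term above is colored (an MA(1) mixture of $\eta_{t-1}$ and $\eta_{t-2}$), these cross-terms genuinely survive — this is the one step where the computation is strictly harder than in the $\nu\in\{0,1\}$ cases. Back-substituting to re-express $\Gamma_1,\Gamma_2$ through $\Sigma$ throws up geometric-type series in $A$ whose resummation is precisely $Q:=\sum_{i\ge 0}A^i\Sigma A^i$, equivalently the unique solution of the Lyapunov-type equation $Q-AQA=\Sigma$ (well posed because $\|A\|<1$); $X$ is then the residual $K$- and $Q$-weighted cross-correlation matrix. Substituting everything into the $\mathbf{w}_t\mathbf{w}_t^{\rm T}$ identity and sorting terms by their matrix pattern ($\Sigma$, $K\Sigma+\Sigma K$, $K\Sigma K$, $X+X^{\rm T}$, $XK+KX^{\rm T}$, and so on) yields the first equation of \eqref{eq: QHM matrix equations}, the scalars $a,b,c,d$ being whatever falls out of this collection; the defining formula for $X$ in terms of $Q,\Sigma,K$ is the second equation, and $Q-AQA=\Sigma$ the third.

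\textbf{Sanity checks and the main obstacle.} I would then verify the two degenerate limits: $\nu=0$ turns \eqref{generalmomentSGD} into plain SGD, so the system must collapse to the matrix equation \eqref{matrixeq} of Theorem~\ref{thm: SGD} — in particular the $\mu$-dependence of $a,b,c,d$ and of $X$ must cancel, since $\mathbf{m}$ is then never used — and $\nu=1$ must recover the momentum equation of Theorem~\ref{theo: main theorem with momentum} after the learning-rate rescaling $\lambda\mapsto(1-\mu)\lambda$ forced by the $(1-\mu)$ normalization in \eqref{generalmomentSGD}. The hard part is organizational rather than conceptual: the extra coupling of the QHM interpolation makes the naive moment recursion spawn an apparently unbounded chain of cross-correlations, so the real effort is (i) spotting the two auxiliary variables — $X$ together with the Lyapunov matrix $Q$ — that truncate the chain, and (ii) pushing all the colored-noise cross terms through the algebra carefully enough that the final tally reproduces the stated coefficients exactly. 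A short convergence remark, that $A^i\to 0$ and the defining series for $Q$ converges even though $\Sigma$ and $K$ need not commute, completes the argument.
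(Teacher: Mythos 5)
Your proposal follows essentially the same route as the paper's proof: both rest on time-shift invariance of the stationary second moments, close the hierarchy with the lag-one autocovariance $X=\mathbb{E}[\mathbf{w}_t\mathbf{w}_{t-1}^{\rm T}]$ (your $\Gamma_1$) via a coupled pair of the form $X+AX^{\rm T}=B$ whose iteration produces the Lyapunov series $Q=\sum_{i}A^i\Sigma A^i$ with $Q-AQA=\Sigma$, and your explicit second-order recursion in $\mathbf{w}$ is exactly what the paper obtains implicitly by substituting $\lambda\nu\mathbf{m}_t=-\mathbf{w}_t+[I_D-\lambda(1-\nu)K]\mathbf{w}_{t-1}-\lambda(1-\nu)\eta_{t-1}$ into the moment equations. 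The only substantive difference is bookkeeping --- the paper routes the noise through $M=\mathbb{E}[\mathbf{m}_t\mathbf{m}_t^{\rm T}]$ and the $\mathbf{w}$--$\mathbf{m}$ cross term $G$ rather than through your explicit MA(1) cross-correlations --- and your insistence on tracking the surviving colored-noise term $\mathbb{E}[\eta_{t-2}\mathbf{w}_{t-1}^{\rm T}]\neq 0$ for $\nu\neq 1$ is exactly the delicate step of the computation, so carry it through carefully.
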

\begin{remark}
By setting $\nu=0$ or $\nu=1$, the previous unnormalized result \eqref{matrixeq} or \eqref{matrixeqmomentum} can be recovered with reparametrization of $\lambda\to\lambda/(1-\mu)$ \citep{gitman2019understanding}. Therefore, this result is the most general one in this work.
\end{remark}

\begin{proof}
The proof of this theorem is essentially similar to that in Appendix~\ref{app: derivation of thm momentum} for SGD with momentum, but more complicated.  By definition, we have
\begin{align}
    \mathbb{E}[\mathbf{w}_t\mathbf{w}_t^{\rm T}]:=\Sigma&=\mathbb{E}\left[(I_D-\alpha K)\mathbf{w}_{t-1}\mathbf{w}_{t-1}^{\rm T}(I_D-\alpha K)\right]+\lambda^2 \nu^2 \mu^2 \mathbb{E}[\mathbf{m}_{t-1}\mathbf{m}_{t-1}^{\rm T}]+\lambda^2 \alpha^2 C \nonumber\\
        &\quad -\lambda \nu \mu\mathbb{E}\left[(I_D-\alpha K)\mathbf{w}_{t-1}\mathbf{m}_{t-1}^{\rm T} + \mathbf{m}_{t-1}\mathbf{w}_{t-1}^{\rm T}(I_D-\alpha K)\right]\nonumber\\
        &=(I_D-\alpha K)\Sigma (I_D-\alpha K)+\lambda^2 \nu^2 \mu^2 M +\lambda^2 \alpha^2 C -(G + G^{\rm T}),\label{eq: QHM Sigma to solve}
\end{align}
where $G:=\lambda \nu \mu (I_D-\alpha K)\mathbb{E}[\mathbf{w}_t\mathbf{m}_t^{\rm T}]$, $M:=\mathbb{E}[\mathbf{m}_t\mathbf{m}_t^{\rm T}]$ and $\alpha:=\lambda[1-\nu+\nu(1-\mu)]$. For momentum, the update rule gives 
\begin{align}
    \lambda \nu \mathbf{m}_t=-\mathbf{w}_t+[I_D-\lambda (1-\nu)K]\mathbf{w}_{t-1}-\lambda (1-\nu)\eta_{t-1}.
\end{align}
Therefore, we have
\begin{align}
    \lambda^2 \nu^2 M=2\Sigma +\lambda^2 (1-\nu)^2 K\Sigma K + \lambda^2 (1-\nu)^2 C- \lambda (1-\nu)(\Sigma K + K \Sigma)-(X+X^{\rm T})+\lambda (1-\nu)(XK+ K X^{\rm T}),
\end{align}
where $X:=\mathbb{E}[\mathbf{w}_t\mathbf{w}_{t-1}^{\rm T}]$. Similarly, this $X$ satisfies
\begin{align}
    X&=(I_D-\alpha K)\Sigma -\lambda \nu \mu \mathbb{E}[\mathbf{m}_{t-1}\mathbf{w}_{t-1}^{\rm T}]\nonumber\\
    &=(I_D-\alpha K)\Sigma+\mu \Sigma - \mu [I_D-\lambda (1-\nu) K]X^{\rm T},\\
    X^{\rm T}&=\Sigma(I_D-\alpha K)+\mu \Sigma- \mu X[I_D-\lambda (1-\nu) K].
\end{align}
The relations between $G$ and $X$ are
\begin{align}
    &G=-\mu (I_D-\alpha K)\Sigma + \mu(I_D-\alpha K)X[I_D-\lambda (1-\nu) K],\\
    &G^{\rm T}=-\mu\Sigma(I_D-\alpha K)+ \mu[I_D-\lambda (1-\nu) K]X^{\rm T}(I_D-\alpha K).
\end{align}
Although no simple expression of $X$ can be obtained, it is possible to provide a set of equations satisfied by $\Sigma$. Substituting everything back into Eq.~\eqref{eq: QHM Sigma to solve} yields a matrix equation involving $\Sigma$ and $X$:
\begin{align}
    \mu(1+\mu)(X+X^{\rm T})+\lambda\left[1-\mu\nu(2+\mu)\right](XK+KX^{\rm T})  +a\Sigma +b(K\Sigma+\Sigma K) +cK\Sigma K=d\lambda^2C,
\end{align}
where $a:=-2\mu(1+\mu)$, $b:=\lambda\mu^2 (1-\nu)$, $ c:=\lambda^2\left[1-\mu^2-2\mu\nu(1-\mu)\right]$, $ d:=1+\mu[\mu-2\nu-2\mu\nu(1-\nu)]$.

Then we try to express $X$ in terms of $\Sigma$. Notice that $X$ and $X^{\rm T}$ satisfy a set of equations with the following form:
\begin{equation}
\begin{cases}
    X+AX^{\rm T}=B,\\
    X^{\rm T}+XA=B^{\rm T},
\end{cases}
\end{equation}
where $A:=\mu[I_D-\lambda(1-\nu)K]$ and $B:=(1+\mu)\Sigma-\alpha K\Sigma$. From them we have
\begin{align}
    X-AXA=B-AB^{\rm T}:=D.
\end{align}
Therefore, by iteration, we have
\begin{align}
    X&=D+AXA=D+A(D+AXA)A=D+ADA+A^2 X A^2=\cdots=\sum_{i=0}^{\infty}A^i D A^i\\
    &=\mu \alpha K^2\Sigma+\lambda[-1+\mu(1+\mu-\mu\nu)]K\Sigma K -\lambda\mu(1-\nu)\alpha K(KQ+QK)K+(1-\mu^2)Q,
\end{align}
where we define $Q:=\sum_{i=0}^{\infty}A^i \Sigma A^i$.

Finally, it can be shown by expanding everything that $Q$ satisfies
\begin{align}
    (I_D-A)Q(I_D+A)+(I_D+A)Q(I_D-A)=2\Sigma.
\end{align}
After simplification, we have
\begin{align}
    Q-AQA=\Sigma.
\end{align}
\end{proof}

From Eqs.~\eqref{eq: QHM matrix equations}, the approximation error for QHM can be calculated.
\begin{corollary}
    The training error for QHM is
    \begin{align}
    L_{\rm train}=\frac{\lambda^2}{2}{\rm Tr} [h(K)^{-1}KC],\label{generalTr}
\end{align}
where
\begin{align}
    &h(K):=\frac{1}{d}\big\{aI_D +2bK+cK^2+[\mu(1+\mu)f(K)+\lambda[1-\mu\nu(2+\mu)] g(K)](I_D -A^2)^{-1}\big\},\label{eq:hK}\\
    &f(K):=2(1-\mu^2)K+\lambda\left[-2+\mu[3+\mu(2-3\nu)\right]K^2,\\
    &g(K):=2(1-\mu^2)I_D +2\lambda[-1+\mu(2+\mu-2\mu\nu)]K-2\lambda\mu(1-\nu)\alpha K^2.
\end{align}
% \begin{align}
%     h(K)=&\frac{1}{d}\big\{aI_D +2bK+cK^2+[\mu(1+\mu)f(K)\nonumber\\
%     &+\lambda[1-\mu\nu(2+\mu)] g(K)](I_D -A^2)^{-1}\big\},\label{eq:hK}\\
%     f(K)=&2(1-\mu^2)K+\lambda\left[-2+\mu[3+\mu(2-3\nu)\right]K^2,\\
%     g(K)=&2(1-\mu^2)I_D +2\lambda[-1+\mu(2+\mu-2\mu\nu)]K\nonumber\\
%     &-2\lambda\mu(1-\nu)\alpha K^2.
% \end{align}
\end{corollary}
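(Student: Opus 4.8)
The plan is to imitate the proof of Theorem~\ref{thm: SGDM train error}: left-multiply the first equation of the system \eqref{eq: QHM matrix equations} by an auxiliary symmetric matrix $R$ that is an analytic function of $K$ (so that $R$ commutes with $K$ and with $A$), and take the trace. Because $R=R^{\rm T}$, cyclicity of the trace gives ${\rm Tr}[R(X+X^{\rm T})]=2\,{\rm Tr}[RX]$ and ${\rm Tr}[R(XK+KX^{\rm T})]=2\,{\rm Tr}[RKX]$, while the purely-$\Sigma$ terms collapse to ${\rm Tr}[R(aI_D+2bK+cK^2)\Sigma]$. As in Theorem~\ref{thm: SGDM train error}, this step uses no relation between $C$ and $K$, so the final formula will hold for arbitrary positive definite $C$ and $K$. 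The objective is to reduce the traced equation to ${\rm Tr}[K\Sigma]=\lambda^2\,{\rm Tr}[h(K)^{-1}KC]$ by a judicious choice of $R$, after which $L_{\rm train}=\tfrac12{\rm Tr}[K\Sigma]$ gives the claim.

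Two further reductions are needed. First, eliminate $X$ by substituting the second equation of \eqref{eq: QHM matrix equations}: each of its terms is a product of powers of $K$ with either $\Sigma$ or $Q$, so ${\rm Tr}[RX]$ and ${\rm Tr}[RKX]$ become linear combinations of ${\rm Tr}[(\text{function of }K)\,\Sigma]$ and ${\rm Tr}[(\text{function of }K)\,Q]$. Second, eliminate $Q$ using $Q-AQA=\Sigma$ with $Q=\sum_{i\ge0}A^i\Sigma A^i$; since the spectral radius of $A=\mu[I_D-\lambda(1-\nu)K]$ is strictly below $1$ in the convergent regime, $\sum_{i\ge0}A^{2i}=(I_D-A^2)^{-1}$ converges, and for any function $M$ of $K$ one has ${\rm Tr}[MQ]=\sum_{i\ge0}{\rm Tr}[MA^{2i}\Sigma]={\rm Tr}[M(I_D-A^2)^{-1}\Sigma]$ (using that $M$ commutes with $A$). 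After these substitutions the traced first equation reads ${\rm Tr}\big[R\,\big(d\,h(K)\big)\,\Sigma\big]=d\lambda^2\,{\rm Tr}[RC]$, where $d\,h(K)$ is exactly the coefficient assembled from $aI_D+2bK+cK^2$ together with the $(I_D-A^2)^{-1}$-weighted contributions coming out of the $X$-substitution, which organize into the $f(K)$ and $g(K)$ building blocks of \eqref{eq:hK}. Taking $R=h(K)^{-1}K/d$ (legitimate because $h(K)$ is invertible in the stable regime) turns the left-hand side into ${\rm Tr}[K\Sigma]$, so $L_{\rm train}=\tfrac12{\rm Tr}[K\Sigma]=\tfrac{\lambda^2}{2}{\rm Tr}[h(K)^{-1}KC]$. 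One may phrase the same argument intrinsically: the system defines a linear operator $\mathcal{L}$ with $\mathcal{L}(\Sigma)=d\lambda^2 C$ that is self-adjoint for the Frobenius inner product and preserves the commutative algebra of functions of $K$, acting there as multiplication by $d\,h(K)$; then $\tfrac12{\rm Tr}[K\Sigma]=\tfrac12\langle K,\mathcal{L}^{-1}(d\lambda^2 C)\rangle=\tfrac{d\lambda^2}{2}\langle\mathcal{L}^{-1}(K),C\rangle$ with $\mathcal{L}^{-1}(K)=h(K)^{-1}K/d$.

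The main obstacle is the bookkeeping. Carrying out the $X$- and $Q$-substitutions and then sorting every resulting term by its power of $K$ and by whether it carries a factor $(I_D-A^2)^{-1}$, so as to recover precisely the polynomials $f(K)$, $g(K)$ and the matrix $h(K)$ of \eqref{eq:hK}, is long and error-prone. One must also keep careful track of which matrices are symmetric and which are antisymmetric under transpose — $X$ itself is not symmetric, only $X+X^{\rm T}$ is — so that the cyclic-trace and commuting manipulations are justified, and confirm that $I_D-A^2$ and $h(K)$ are invertible, which follows from the necessary stability condition (cf.\ the discussion after Theorem~\ref{theo: main theorem with momentum}). A useful sanity check: setting $\nu=0$ or $\nu=1$ should collapse $h(K)$ to the coefficients implied by \eqref{matrixeq} or \eqref{matrixeqmomentum} (up to the reparametrization $\lambda\to\lambda/(1-\mu)$), thereby recovering Theorem~\ref{thm: SGDM train error}.
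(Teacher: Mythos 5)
Your proposal follows essentially the same route as the paper's own proof: the paper likewise reduces the QHM system to a single relation $h(K)K\Sigma + R = \lambda^2 KC$, where $R$ collects commutator-type remainders whose trace against symmetric functions of $K$ vanishes, eliminates $Q$ through $\sum_{i\ge 0}A^{2i}=(I_D-A^2)^{-1}$, and reads off $L_{\rm train}=\tfrac{1}{2}{\rm Tr}[K\Sigma]$. The differences are presentational only --- you trace against a test matrix $R(K)$ from the outset rather than first assembling the reduced matrix equation --- and, like the paper, you leave unexecuted the coefficient bookkeeping that produces the exact $f(K)$ and $g(K)$.
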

\begin{remark}
We emphasize that our result \eqref{generalTr} is exact, whereas the result in \citet{gitman2019understanding} is obtained with a low-order approximation.
\end{remark}
\begin{proof}
By using the similar technique in Appendix~\ref{app: derivation of thm SGDM train error}, Eq.~\eqref{eq: QHM matrix equations} results in
\begin{align}
    h(K)K\Sigma + R =\lambda^2 KC,
\end{align}
where $R$ denotes the terms involving commutative factors such as $[\Sigma,K]$, etc, and
\begin{align}
    &h(K):=\frac{1}{d}\big\{aI_D +2bK+cK^2+[\mu(1+\mu)f(K)+\lambda[1-\mu\nu(2+\mu)] g(K)](I_D -A^2)^{-1}\big\},\\
    &f(K):=2(1-\mu^2)K+\lambda\left[-2+\mu[3+\mu(2-3\nu)\right]K^2,\\
    &g(K):=2(1-\mu^2)I_D +2\lambda[-1+\mu(2+\mu-2\mu\nu)]K-2\lambda\mu(1-\nu)\alpha K^2.
\end{align}
By definition, the approximation error is
\begin{align}
     L_{\rm train}=\frac{1}{2}{\rm Tr} [K\Sigma]=\frac{\lambda^2}{2}{\rm Tr} [h(K)^{-1}KC].
\end{align}
\end{proof}

\subsubsection{Parameter Fluctuations of Second-Order Optimization Methods}\label{sec:ngd}
%[Introduction to DNM. Here $\lambda^2$ has been moved out from $C$.]
In this subsection, we deal with the covariance of the stationary distribution obtained by second-order optimization methods.
We first deal with the stationary distribution of Damped Newton's Method (DNM), which is the oldest and most important second-order optimization method, first invented by Newton \citep{nesterov2018lectures}. It is of interest to investigate how the second-order methods behave asymptotically in a stochastic setting.

\begin{theorem}\label{thm: DNM Sigma}$($Model fluctuation of DNM$)$ Let the learning rate matrix be a matrix: $\Lambda:= \lambda K^{-1}$. Then,
\begin{align}\label{eq: DNM Sigma}
    \Sigma=\frac{1+\mu}{1-\mu}\frac{\lambda}{2(1+\mu)-\lambda}K^{-1}CK^{-1}.
\end{align}
\end{theorem}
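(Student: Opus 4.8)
The plan is to derive Theorem~\ref{thm: DNM Sigma} as an immediate specialization of the general preconditioned stationary equation of Theorem~\ref{thm: preconditioning matrix eq}. First I would substitute $\Lambda = \lambda K^{-1}$ into Eq.~\eqref{eq: preconditioning matrix eq}; this is legitimate because $K$ is positive definite and $\lambda>0$, so $\Lambda$ is a positive-definite preconditioning matrix as required. The guiding observation is that this choice makes the whole matrix equation degenerate into a scalar one, since $\Lambda K = K\Lambda = \lambda I_D$, and therefore $\Lambda K\Lambda K = \lambda^2 I_D$.

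Carrying out the substitution term by term, I would record the identities $\Lambda K\Sigma K\Lambda = \lambda^2\Sigma$, $\Lambda K\Lambda K\Sigma = \Sigma K\Lambda K\Lambda = \lambda^2\Sigma$, $\Lambda K\Sigma = \Sigma K\Lambda = \lambda\Sigma$, and $\Lambda C\Lambda = \lambda^2 K^{-1}CK^{-1}$. Feeding these into Eq.~\eqref{eq: preconditioning matrix eq}, every matrix on the left is a scalar multiple of $\Sigma$ and the right-hand side is a scalar multiple of $K^{-1}CK^{-1}$, so what remains is
\[
    \left( -\frac{1+\mu^2}{1-\mu^2}\lambda^2 + \frac{2\mu}{1-\mu^2}\lambda^2 + 2(1-\mu)\lambda \right)\Sigma = \lambda^2 K^{-1}CK^{-1}.
\]
The last step is then to simplify the scalar in parentheses and divide.

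I do not expect a genuine obstacle; the only thing requiring attention is the elementary algebra. Using $-\frac{1+\mu^2}{1-\mu^2}+\frac{2\mu}{1-\mu^2} = \frac{-(1-\mu)^2}{(1-\mu)(1+\mu)} = -\frac{1-\mu}{1+\mu}$, the coefficient factors as $\lambda(1-\mu)\left(2 - \frac{\lambda}{1+\mu}\right) = \frac{\lambda(1-\mu)}{1+\mu}\left(2(1+\mu)-\lambda\right)$, which yields $\Sigma = \frac{1+\mu}{1-\mu}\cdot\frac{\lambda}{2(1+\mu)-\lambda}\,K^{-1}CK^{-1}$, as claimed. I would close with two remarks: (i) $\Sigma$ is well-defined and positive definite precisely when $2(1+\mu)-\lambda>0$, a bound that is independent of $K$ --- reflecting that preconditioning by $K^{-1}$ renders the effective curvature identical (equal to $\lambda$) along every eigendirection; and (ii) the minibatch-noise corollary follows at once by setting $C=\frac{N-S}{NS}K$, so that $K^{-1}CK^{-1}=\frac{N-S}{NS}K^{-1}$.
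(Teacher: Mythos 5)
Your proposal is correct and follows exactly the paper's route: substitute $\Lambda=\lambda K^{-1}$ into Eq.~\eqref{eq: preconditioning matrix eq}, use $\Lambda K=K\Lambda=\lambda I_D$ to collapse everything to a scalar multiple of $\Sigma$, and simplify the coefficient to $\frac{(1-\mu)\lambda}{1+\mu}\left[2(1+\mu)-\lambda\right]$. The paper's own proof is just a terser version of the same computation, so there is nothing to add.
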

\begin{proof}
Due to Theorem~\ref{thm: preconditioning matrix eq}, while $\Lambda:= \lambda K^{-1}$, Eq.~\eqref{eq: preconditioning matrix eq} gives
\begin{align}
    \lambda\frac{1-\mu}{1+\mu}\left[2(1+\mu)-\lambda\right]\Sigma=\lambda^2 K^{-1}CK^{-1}.
\end{align}
Therefore, we have
\begin{align}
    \Sigma=\frac{1+\mu}{1-\mu}\frac{\lambda}{2(1+\mu)-\lambda}K^{-1}CK^{-1}.
\end{align}
\end{proof}
% We define the learning rate as a matrix, $\Lambda:= \lambda K^{-1}$. Equation~\eqref{eq: preconditioning matrix eq} gives the covariance of the stationary distribution as
\begin{corollary*}
    Suppose that the noise is due to minibatch sampling with the noise covariance being $C=\frac{N-S}{NS}K$. The model fluctuation is
\begin{align}
    \Sigma=\frac{1+\mu}{1-\mu}\frac{\lambda}{2(1+\mu)-\lambda}\frac{N-S}{NS}K^{-1}.
\end{align}
\end{corollary*}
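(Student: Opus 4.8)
The statement is a direct specialization of Theorem~\ref{thm: DNM Sigma}, so the plan is short. First I would recall the conclusion of that theorem: for the damped Newton preconditioner $\Lambda=\lambda K^{-1}$ one has
\begin{equation}
  \Sigma = \frac{1+\mu}{1-\mu}\,\frac{\lambda}{2(1+\mu)-\lambda}\,K^{-1}CK^{-1},
\end{equation}
valid whenever the stationary distribution exists, i.e. whenever $2(1+\mu)-\lambda>0$ (note that since $\Lambda K=\lambda I_D$, this stability threshold no longer involves the spectrum of $K$ at all). Then I would substitute the minibatch noise covariance $C=\frac{N-S}{NS}K$ — the finite-population estimate of the gradient-sampling covariance used elsewhere in the paper (cf. Theorem~\ref{thm: Bays}) — into this expression.

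The only computational step is the simplification $K^{-1}CK^{-1}=\frac{N-S}{NS}\,K^{-1}KK^{-1}=\frac{N-S}{NS}K^{-1}$, which uses nothing beyond the fact that $K$ is positive definite, hence invertible, and that the scalar $\frac{N-S}{NS}$ commutes out of the matrix product. This immediately yields
\begin{equation}
  \Sigma = \frac{1+\mu}{1-\mu}\,\frac{\lambda}{2(1+\mu)-\lambda}\,\frac{N-S}{NS}\,K^{-1},
\end{equation}
which is the claimed formula.

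There is essentially no obstacle here; the only thing worth flagging is that the corollary inherits the hypotheses of Theorem~\ref{thm: DNM Sigma} — namely $K$ positive definite (so that $\Lambda=\lambda K^{-1}$ is an admissible positive-definite preconditioner), $\mu\in[0,1)$, and $\lambda<2(1+\mu)$ so that the stationary covariance is positive definite. If one preferred a self-contained argument not invoking Theorem~\ref{thm: DNM Sigma}, the alternative is to insert $\Lambda=\lambda K^{-1}$ and $C=\frac{N-S}{NS}K$ directly into the master equation~\eqref{eq: preconditioning matrix eq} of Theorem~\ref{thm: preconditioning matrix eq}: every factor $\Lambda K$ collapses to $\lambda I_D$, the whole equation reduces to $\lambda\frac{1-\mu}{1+\mu}\bigl[2(1+\mu)-\lambda\bigr]\Sigma=\lambda^{2}\frac{N-S}{NS}K^{-1}$, and solving for $\Sigma$ reproduces the same formula — but this merely repeats the computation already carried out in the proof of Theorem~\ref{thm: DNM Sigma}.
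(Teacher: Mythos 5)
Your proposal is correct and follows exactly the paper's own route: substitute $C=\frac{N-S}{NS}K$ into the general DNM formula $\Sigma=\frac{1+\mu}{1-\mu}\frac{\lambda}{2(1+\mu)-\lambda}K^{-1}CK^{-1}$ of Theorem~\ref{thm: DNM Sigma} and simplify $K^{-1}KK^{-1}=K^{-1}$. The extra remarks on the stability condition $\lambda<2(1+\mu)$ and the alternative derivation directly from Theorem~\ref{thm: preconditioning matrix eq} are consistent with the paper but add nothing beyond its one-line argument.
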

\begin{proof}
Substituting $C=\frac{N-S}{NS}K$ into Eq.~\eqref{eq: DNM Sigma} yields
\begin{align}
    \Sigma=\frac{1+\mu}{1-\mu}\frac{\lambda}{2(1+\mu)-\lambda}\frac{N-S}{NS}K^{-1}.
\end{align}
\end{proof}

From Theorem~\ref{thm: DNM Sigma}, the approximation error for DNM can be calculated.
\begin{corollary}
   The approximation error for DNM is
\begin{equation}
    L_{\rm train}=\begin{cases}
        \frac{1}{2}{\rm Tr}[K\Sigma_{\rm general} ] =  \frac{1+\mu}{1-\mu}\frac{\lambda}{4(1+\mu)-2\lambda} {\rm Tr} [K^{-1}C]; \\
        \frac{1}{2}{\rm Tr} \left[K\Sigma_{\rm minibatch} \right] =\frac{1+\mu}{1-\mu}\frac{D\lambda}{4(1+\mu)-2\lambda}\frac{N-S}{NS}.\\
        %\frac{1}{2}{\rm Tr} \left[K\Sigma_{\textrm better minibatch} \right] =0.\\
    \end{cases}
\end{equation}
\end{corollary}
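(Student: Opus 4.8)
The plan is to obtain both identities by direct substitution of the closed-form covariance from Theorem~\ref{thm: DNM Sigma} (and its minibatch specialization) into the definition of the approximation error, $L_{\rm train} = \frac{1}{2}{\rm Tr}[K\Sigma]$, followed by an application of the cyclic invariance of the trace. There is no real obstacle here: the computation is elementary, and the only thing to be careful about is bookkeeping of the scalar prefactors. Unlike the proofs in Appendix~\ref{app: derivation of thm SGDM train error}, no commutator terms need to be discarded, because here $\Sigma$ is already given in explicit closed form rather than implicitly through a matrix equation.

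For the general case, I would start from $\Sigma = \frac{1+\mu}{1-\mu}\frac{\lambda}{2(1+\mu)-\lambda}K^{-1}CK^{-1}$, multiply on the left by $K$, and take the trace. The scalar factor pulls out, leaving ${\rm Tr}[KK^{-1}CK^{-1}] = {\rm Tr}[CK^{-1}] = {\rm Tr}[K^{-1}C]$, where the last equality is cyclicity of the trace, valid for arbitrary square matrices regardless of whether $K$ and $C$ commute. Combining the factor $1/2$ from the definition of $L_{\rm train}$ with $\frac{\lambda}{2(1+\mu)-\lambda}$ gives $\frac{\lambda}{4(1+\mu)-2\lambda}$, which yields the first line.

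For the minibatch case, the noise covariance $C=\frac{N-S}{NS}K$ has already been substituted in the corollary to give $\Sigma = \frac{1+\mu}{1-\mu}\frac{\lambda}{2(1+\mu)-\lambda}\frac{N-S}{NS}K^{-1}$, so that $K\Sigma$ reduces to a scalar multiple of $I_D$. Then ${\rm Tr}[K\Sigma] = \frac{1+\mu}{1-\mu}\frac{\lambda}{2(1+\mu)-\lambda}\frac{N-S}{NS}\,{\rm Tr}[I_D]$, and ${\rm Tr}[I_D]=D$ produces the stated expression after the same rearrangement of constants; equivalently, one can substitute $C=\frac{N-S}{NS}K$ directly into the first line and use ${\rm Tr}[K^{-1}K]=D$, which serves as an internal consistency check between the two formulas.
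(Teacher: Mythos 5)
Your proposal is correct and follows exactly the route the paper takes: its proof is the one-line remark that one simply substitutes $\Sigma$ from Theorem~\ref{thm: DNM Sigma} (and its minibatch specialization) into $L_{\rm train}=\frac{1}{2}{\rm Tr}[K\Sigma]$. Your additional observations — that cyclicity of the trace handles the non-commuting case and that the two lines are mutually consistent via $C=\frac{N-S}{NS}K$ — are accurate bookkeeping that the paper leaves implicit.
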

\begin{proof}
The proof is simple by substituting $\Sigma$ into the definition $L_{\rm train}=\frac{1}{2}{\rm Tr} [K\Sigma]$.
\end{proof}

Next, we consider the natural gradient descent (NGD) algorithm. In traditional statistics, the efficiency of any statistical estimator is upper bounded by the Cram\'er-Rao's inequality (CR bound) \citep{rao1992information}. An estimator that achieves the equality in the CR bound is said to be \textit{Fisher-efficient}. A Fisher-efficient method is the fastest possible method to estimate a given statistical quantity. When the gradient descent is used, it is shown \citep{amari1998natural, amari2007methods} that if one defines the learning rate as a matrix, $\Lambda := \lambda J(\mathbf{w})^{-1}$, where $J(\mathbf{w}):= \mathbb{E}[\nabla L (\nabla L)^{\rm T}]$ is the Fisher information, then this optimization algorithm becomes Fisher-efficient in the limit of $t\to \infty$. This algorithm is called the \textit{natural gradient descent} because the Fisher information is the ``natural" metric for measuring the distance in the probability space. The NGD algorithm has therefore attracted great attention both theoretically and empirically \citep{pascanu2013revisiting, amari1998natural}. However, previous literature often takes the continuous-time limit and nothing is known about NGD in the discrete-time regime. We apply our formalism to derive the covariance of the stationary distribution of NGD in the discrete-time regime. To the best of our knowledge, this is the first work to treat the discrete-time NGD and to derive its asymptotic model fluctuations. 

\begin{theorem}\label{thm: equation of NGD}$($Model covaraince matrix of NGD$)$ Let the learning rate  matrix be $\Lambda := \lambda J(\mathbf{w})^{-1}$, where  $J(\mathbf{w})=\mathbb{E}[K\mathbf{w}\mathbf{w}^{\rm T}K]= K\Sigma K$ is the Fisher information. Then the model parameter covariance matrix satisfies the following quadratic matrix equation
\begin{align}\label{eq: matrix eq of NGD}
    (K\Sigma)^2-\frac{\lambda}{2(1+\mu)}K\Sigma-\frac{\lambda}{2(1-\mu)}CK^{-1}=0.
\end{align}
\end{theorem}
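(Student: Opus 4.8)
The plan is to obtain Eq.~\eqref{eq: matrix eq of NGD} as a direct specialization of the general preconditioned matrix equation, Eq.~\eqref{eq: preconditioning matrix eq} in Theorem~\ref{thm: preconditioning matrix eq}. The only modeling input needed is that, at stationarity, the Fisher information is the constant matrix $J(\mathbf{w}) = \mathbb{E}[K\mathbf{w}\mathbf{w}^{\rm T}K] = K\Sigma K$, so the preconditioner $\Lambda = \lambda J(\mathbf{w})^{-1} = \lambda(K\Sigma K)^{-1}$ is a well-defined symmetric positive-definite matrix (using that $K,\Sigma\succ 0$, hence invertible). I would treat $\Lambda$ as a fixed preconditioning matrix and plug it into Eq.~\eqref{eq: preconditioning matrix eq}, bearing in mind that the resulting identity is \emph{implicit} in $\Sigma$, since $\Lambda$ itself depends on $\Sigma$; this is exactly the self-consistent, state-dependent-noise situation anticipated in Section~\ref{sec: state-dept noise}.

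First I would evaluate the four matrix products appearing in Eq.~\eqref{eq: preconditioning matrix eq} under $\Lambda = \lambda(K\Sigma K)^{-1} = \lambda K^{-1}\Sigma^{-1}K^{-1}$. Cancelling adjacent factors gives $\Lambda K = \lambda K^{-1}\Sigma^{-1}$ and $K\Lambda = \lambda\Sigma^{-1}K^{-1}$, from which: the discrete-time term $\Lambda K\Sigma K\Lambda = \lambda^2(K\Sigma K)^{-1}$; the two cross terms $\Lambda K\Lambda K\Sigma = \Sigma K\Lambda K\Lambda = \lambda^2(K\Sigma K)^{-1}$; the continuous-time term $\Lambda K\Sigma = \Sigma K\Lambda = \lambda K^{-1}$; and on the right $\Lambda C\Lambda = \lambda^2(K\Sigma K)^{-1}C(K\Sigma K)^{-1}$. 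Substituting and collecting the coefficient of $\lambda^2(K\Sigma K)^{-1}$ yields $-\tfrac{1+\mu^2}{1-\mu^2}+\tfrac{2\mu}{1-\mu^2} = -\tfrac{(1-\mu)^2}{1-\mu^2} = -\tfrac{1-\mu}{1+\mu}$, so after dividing through by $(1-\mu)>0$ the whole equation collapses to
\[
    -\frac{\lambda^2}{1+\mu}(K\Sigma K)^{-1} + 2\lambda K^{-1} = \frac{\lambda^2}{1-\mu}(K\Sigma K)^{-1}C(K\Sigma K)^{-1}.
\]

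To finish, I would clear the inverses by left-multiplying by $K\Sigma K$ and then right-multiplying by $K\Sigma$: the first step turns the equation into $-\tfrac{\lambda^2}{1+\mu}I_D + 2\lambda K\Sigma = \tfrac{\lambda^2}{1-\mu}CK^{-1}\Sigma^{-1}K^{-1}$, and the second into $-\tfrac{\lambda^2}{1+\mu}K\Sigma + 2\lambda(K\Sigma)^2 = \tfrac{\lambda^2}{1-\mu}CK^{-1}$ (the right-hand side simplifies because $K^{-1}\Sigma^{-1}K^{-1}\cdot K\Sigma = K^{-1}$). Dividing by $2\lambda$ gives precisely Eq.~\eqref{eq: matrix eq of NGD}. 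I do not expect a genuine obstacle: the computation is non-commutative matrix bookkeeping with all factors cancelling cleanly. The one point that warrants care is the implicit nature of the statement — $\Sigma$ sits inside $\Lambda$ — so it should be read as a fixed-point characterization, valid under the same standing assumption as Theorem~\ref{thm: preconditioning matrix eq} that a stationary distribution (hence an invertible $\Sigma$) exists; one should also note that $\Lambda\succ 0$ is automatic from $K,\Sigma\succ 0$.
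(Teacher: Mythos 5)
Your proposal is correct and follows essentially the same route as the paper: substitute $\Lambda = \lambda (K\Sigma K)^{-1}$ into Eq.~\eqref{eq: preconditioning matrix eq}, simplify the resulting expression, and clear the inverses by multiplying by $K\Sigma K$ on the left and $K\Sigma$ on the right. The intermediate cancellations you compute all check out, and your remark that the identity is an implicit fixed-point characterization (since $\Lambda$ depends on $\Sigma$) is a fair point that the paper leaves tacit.
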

\begin{proof}
By setting $\Lambda=\lambda (K\Sigma K)^{-1}$ in Eq.~\eqref{eq: preconditioning matrix eq}, we have
\begin{align}
    2(1-\mu)K^{-1} - \frac{1-\mu}{1+\mu}\lambda K^{-1} \Sigma^{-1} K^{-1}=\lambda K^{-1} \Sigma^{-1} K^{-1} C K^{-1} \Sigma^{-1} K^{-1}.
\end{align}
Multiplying by $K\Sigma K$ to the left and $K\Sigma$ to the right yields
\begin{align}
    (K\Sigma)^2-\frac{\lambda}{2(1+\mu)}K\Sigma-\frac{\lambda}{2(1-\mu)}CK^{-1}=0.
\end{align}
\end{proof}
This matrix equation can be solved while $C$ does not  depend on $\Sigma$ explicitly \citep{Higham2001}.
\begin{corollary}
    Suppose that the noise covariance $C$ is a constant matrix that does not  depend on $\Sigma$ explicitly. Then the solution to Eq.~\eqref{eq: matrix eq of NGD} is
\begin{align}
    \Sigma=\frac{1}{2}K^{-1}\left[Q+\frac{\lambda}{2(1+\mu)}I_D\right], \label{eq: NGD Sigma}
\end{align}
where $Q:=\left[\frac{\lambda^2}{4(1+\mu)^2}I_D + \frac{2\lambda}{1-\mu}CK^{-1}\right]^{\frac{1}{2}}$.
\end{corollary}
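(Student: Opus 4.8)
The plan is to treat \eqref{eq: matrix eq of NGD} as an ordinary quadratic matrix equation in the single unknown matrix $X := K\Sigma$ and to solve it by completing the square. Since the corollary assumes $C$ is a fixed matrix not depending on $\Sigma$, this is a genuine fixed-coefficient equation rather than a self-consistency problem. Writing $b := \frac{\lambda}{2(1+\mu)}$ and $R := \frac{\lambda}{2(1-\mu)}CK^{-1}$, equation \eqref{eq: matrix eq of NGD} becomes $X^2 - bX - R = 0$. Because $b$ is a scalar, $\tfrac{b}{2}I_D$ commutes with $X$, so the usual identity $X^2 - bX = \bigl(X - \tfrac{b}{2}I_D\bigr)^2 - \tfrac{b^2}{4}I_D$ survives the noncommutativity and the equation is equivalent to $\bigl(X - \tfrac{b}{2}I_D\bigr)^2 = R + \tfrac{b^2}{4}I_D$. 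A direct computation gives $R + \tfrac{b^2}{4}I_D = \tfrac14\bigl[\tfrac{2\lambda}{1-\mu}CK^{-1} + \tfrac{\lambda^2}{4(1+\mu)^2}I_D\bigr] = \tfrac14 Q^2$, with $Q$ exactly the matrix defined in the statement. Selecting the square-root branch that makes the covariance positive definite yields $X - \tfrac{b}{2}I_D = \tfrac12 Q$, hence $X = \tfrac12(Q + bI_D)$ and $\Sigma = K^{-1}X = \tfrac12 K^{-1}\bigl[Q + \tfrac{\lambda}{2(1+\mu)}I_D\bigr]$; substituting back into \eqref{eq: matrix eq of NGD} confirms it is a solution.

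The step that needs care, and the main obstacle, is making sense of $Q$ and checking that the resulting $\Sigma$ is a bona fide (symmetric, positive definite) covariance matrix, since $CK^{-1}$ is in general not symmetric and a quadratic matrix equation admits several solvents. I would resolve this by conjugating to symmetrized variables: set $Z := K^{1/2}\Sigma K^{1/2}$ and $\widetilde{C} := K^{-1/2}CK^{-1/2}$, so that $K\Sigma = K^{1/2}ZK^{-1/2}$ and $CK^{-1} = K^{1/2}\widetilde{C}K^{-1/2}$. Conjugating $X^2 - bX - R = 0$ by $K^{-1/2}(\cdot)K^{1/2}$ turns it into the \emph{symmetric} equation $Z^2 - bZ - \tfrac{\lambda}{2(1-\mu)}\widetilde{C} = 0$, whose completed-square form $\bigl(Z - \tfrac{b}{2}I_D\bigr)^2 = \tfrac{\lambda}{2(1-\mu)}\widetilde{C} + \tfrac{b^2}{4}I_D$ has a symmetric positive definite right-hand side (because $\widetilde{C}\succeq 0$, being congruent to $C$, and $b>0$) and therefore a unique symmetric positive definite principal square root $\tfrac12 Q'$ with $Q' := \bigl[\tfrac{2\lambda}{1-\mu}\widetilde{C}+b^2I_D\bigr]^{1/2}$. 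This gives a symmetric positive definite $Z = \tfrac12(Q'+bI_D)$, hence $\Sigma = K^{-1/2}ZK^{-1/2}\succ 0$, and translating back via $Q = K^{1/2}Q'K^{-1/2}$ (which indeed satisfies $Q^2 = \tfrac{2\lambda}{1-\mu}CK^{-1}+b^2I_D$, the defining relation in the statement) recovers precisely $\Sigma = \tfrac12 K^{-1}\bigl[Q + \tfrac{\lambda}{2(1+\mu)}I_D\bigr]$. All that remains is routine: bookkeeping the $K^{\pm 1/2}$ similarities and invoking the standard fact (cf.\ \citet{Higham2001}) that among the solvents of $X^2-bX-R=0$ there is exactly one with positive spectrum, which is the physically relevant one selected above.
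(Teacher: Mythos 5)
Your proof is correct and follows the same basic route as the paper: view Eq.~\eqref{eq: matrix eq of NGD} as a quadratic matrix equation in $X=K\Sigma$ and apply the quadratic formula. The paper simply cites the solvent formula of \citet{Higham2001} for $AX^2+BX+C=0$ with $A=I_D$ and $[B,C]=0$ (which applies here because $B=-\tfrac{\lambda}{2(1+\mu)}I_D$ is scalar), whereas you derive it more elementarily by completing the square --- legitimate precisely because the linear coefficient is a scalar multiple of $I_D$. Where you go beyond the paper is the second half: the paper never addresses the fact that $\tfrac{2\lambda}{1-\mu}CK^{-1}+\tfrac{\lambda^2}{4(1+\mu)^2}I_D$ is not symmetric, so the existence and choice of the square root $Q$, and the symmetry and positive definiteness of the resulting $\Sigma$, are left implicit. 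Your conjugation by $K^{1/2}$, which reduces the problem to the principal square root of a symmetric positive definite matrix and shows $\Sigma=K^{-1/2}ZK^{-1/2}\succ 0$, supplies exactly this missing justification and also identifies which solvent of the quadratic is the physically relevant one. The computations ($R+\tfrac{b^2}{4}I_D=\tfrac14 Q^2$, $Q=K^{1/2}Q'K^{-1/2}$, and the back-substitution) all check out.
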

\begin{remark}
This result does not seem quite satisfactory, especially because it does not seem to reduce to any meaningful distribution. This means that, when the noise is arbitrary and not related to the use of minibatch sampling, one is not recommended to use NGD\footnote{Recall that the NGD is derived in an online learning setting, where the noise is by definition proportional to the minibatch noise with $N\to\infty$ and minibatch size $1$ \cite{amari1998natural}.}.
\end{remark}
\begin{proof}
By referring to the conclusion in \citet{Higham2001} that the solution to a quadratic matrix equation of the form $AX^2 + BX +C=0$ with $A=I_D$ and $[B,C]=0$ is $X=-\frac{1}{2}B+\frac{1}{2}(B^2-4C)^{1/2}$, Eq.~\eqref{eq: matrix eq of NGD} can be solved explicitly:
\begin{align}
    \Sigma=\frac{1}{2}K^{-1}\left[Q+\frac{\lambda}{2(1+\mu)}I_D\right],
\end{align}
where $Q:=\left[\frac{\lambda^2}{4(1+\mu)^2}I_D + \frac{2\lambda}{1-\mu}CK^{-1}\right]^{\frac{1}{2}}$.
\end{proof}

%When $C\to 0$, we have $\Sigma = \frac{\lambda}{2(1+\mu)} K^{-1}$; this means that the algorithm has non-zero fluctuation even if the noise is zero! Even more interestingly, the NGD algorithm induces a fluctuation that is the same as the continuous-time SGD algorithm with gaussian noise. Intuitively, this means that the NGD algorithm needs no correction even in the discrete-time case; moreover, NGD algorithm is likely to generalize better because it better aligns with the underlying loss function.
Now we consider the case where the noise is induced by minibatch sampling. Instead of using the conventional Hessian approximation that $C\approx K$, we here consider a better approximation that $C\approx \frac{N-S}{NS} \mathbb{E}[\nabla L \nabla L^{T}]=\frac{ N-S}{NS}K\Sigma K$. The model fluctuation can be calculated.
\begin{corollary*} Let the NGD algorithm be updated with noise covariance being $C=\frac{ N-S}{NS}K\Sigma K$.  Then,
    \begin{equation}
    \Sigma = \lambda \frac{(1+\mu) \frac{N-S}{NS} + 1 - \mu}{2(1-\mu^2)}K^{-1}.
\end{equation}
\end{corollary*}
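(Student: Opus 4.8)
The plan is to specialize the quadratic matrix equation for NGD, Eq.~\eqref{eq: matrix eq of NGD} of Theorem~\ref{thm: equation of NGD}, to the minibatch noise model $C=\frac{N-S}{NS}K\Sigma K$ and to exploit the fact that the resulting equation is closed and solved by a scalar multiple of $K^{-1}$. The key observation that makes everything collapse is that with this choice of $C$ one has $CK^{-1}=\frac{N-S}{NS}K\Sigma KK^{-1}=\frac{N-S}{NS}K\Sigma$, so the last term of Eq.~\eqref{eq: matrix eq of NGD} becomes proportional to $K\Sigma$ itself.

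Concretely, first I would substitute $CK^{-1}=\frac{N-S}{NS}K\Sigma$ into Eq.~\eqref{eq: matrix eq of NGD}, obtaining
\begin{equation*}
  (K\Sigma)^2-\frac{\lambda}{2(1+\mu)}K\Sigma-\frac{\lambda}{2(1-\mu)}\frac{N-S}{NS}K\Sigma=0,
\end{equation*}
and then factor $K\Sigma$ out on the left:
\begin{equation*}
  K\Sigma\left[K\Sigma-\left(\frac{\lambda}{2(1+\mu)}+\frac{\lambda}{2(1-\mu)}\frac{N-S}{NS}\right)I_D\right]=0 .
\end{equation*}
Next I would argue that $K\Sigma$ is invertible: $K$ is positive definite by assumption, and $\Sigma$ must be positive definite for a genuine stationary distribution to exist (this is the standing hypothesis of the whole analysis). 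Hence the bracketed factor must vanish, which forces $K\Sigma=\left(\frac{\lambda}{2(1+\mu)}+\frac{\lambda}{2(1-\mu)}\frac{N-S}{NS}\right)I_D$, i.e. $\Sigma=\left(\frac{\lambda}{2(1+\mu)}+\frac{\lambda}{2(1-\mu)}\frac{N-S}{NS}\right)K^{-1}$. The final step is purely algebraic: placing the two scalar terms over the common denominator $2(1-\mu^2)$ produces the numerator $(1-\mu)+(1+\mu)\frac{N-S}{NS}$, and pulling out $\lambda$ gives exactly the claimed expression.

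The only subtlety—the ``hard part,'' such as it is—is the passage from the factored equation to discarding the spurious branch $K\Sigma=0$; this relies on the positive-definiteness of $\Sigma$ and on the fact that $K\Sigma$ commutes with $I_D$, so the left factorization is legitimate. It is also worth noting that the scalar-multiple-of-$K^{-1}$ ansatz is self-consistent with the state-dependent noise, since $C=\frac{N-S}{NS}K\Sigma K$ is then itself a scalar multiple of $K$; alternatively one may simply invoke the uniqueness of the positive-definite solvent of the quadratic matrix equation (as in \citet{Higham2001}, already used above) to conclude that this is \emph{the} solution.
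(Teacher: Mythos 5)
Your proposal is correct and follows essentially the same route as the paper: substitute $CK^{-1}=\frac{N-S}{NS}K\Sigma$ into Eq.~\eqref{eq: matrix eq of NGD}, combine the two scalar coefficients over the denominator $2(1-\mu^2)$, and cancel a factor of $K\Sigma$ using its invertibility (the paper phrases this as $K\Sigma$ being positive definite). Your extra remarks on discarding the spurious branch and on the self-consistency of the $K^{-1}$ ansatz only make explicit what the paper leaves implicit.
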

% A few remarks are in order.
% \begin{remark}
% If $S\to N$, then $C\to 0$. In this situation, we have $\Sigma = \frac{\lambda}{2(1+\mu)} K^{-1}$. This means that the algorithm involves nonzero fluctuations even if the noise is absent!
% \end{remark}
% \begin{remark}
%   Moreover, the divergence caused by $\lambda k^* \to 2$ disappears here. This suggests that, when the noise is due to minibatch sampling, NGD naturally corrects the ill-conditioned problem of discrete-time SGD.
% \end{remark}
% \begin{remark}
%  Even more interestingly, both the DNM and the NGD algorithms induce fluctuations that are the same as those in the continuous-time SGD algorithm with Gaussian noise up to the coefficients, in the sense that the variance is proportional to $K^{-1}$ which is the local geometry of the minimum. Intuitively, this means that DNM and NGD need no correction even in the discrete-time case; moreover, they are likely to generalize better because they better align with the underlying loss function. 
% \end{remark}
\begin{proof}
Substituting $C=\frac{ N-S}{NS}K\Sigma K$ into Eq.~\eqref{eq: matrix eq of NGD} yields
\begin{align}
    (K\Sigma)^2 - \lambda \frac{(1+\mu) \frac{N-S}{NS} + 1 - \mu}{2(1-\mu^2)} K\Sigma =0.
\end{align}
Because $K\Sigma$ is positive definite, we have
    \begin{equation}
    \Sigma = \lambda \frac{(1+\mu) \frac{N-S}{NS} + 1 - \mu}{2(1-\mu^2)}K^{-1}.
\end{equation}
\end{proof}

From Theorem~\ref{thm: equation of NGD}, the approximation error can be calculated.
\begin{corollary}
    The approximation error for NGD is
    \begin{equation}
   L_{\rm train}= \begin{cases}
        \frac{1}{2}{\rm Tr}[K\Sigma_{\rm general} ] =  \frac{1}{4} {\rm Tr} \left[Q+\frac{\lambda}{2(1+\mu)}I_D\right]; \\
        \frac{1}{2}{\rm Tr} \left[K\Sigma_{\rm minibatch} \right] =
        \lambda \frac{(1+\mu) \frac{N-S}{NS} + 1 - \mu}{4(1-\mu^2)} D.\\
    \end{cases}
\end{equation}
\end{corollary}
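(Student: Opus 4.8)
The plan is to substitute the two explicit expressions for $\Sigma$ derived just above — the one coming from the general quadratic matrix equation for NGD, $\Sigma_{\rm general} = \frac{1}{2}K^{-1}\left[Q + \frac{\lambda}{2(1+\mu)}I_D\right]$ with $Q = \left[\frac{\lambda^2}{4(1+\mu)^2}I_D + \frac{2\lambda}{1-\mu}CK^{-1}\right]^{1/2}$, and the minibatch one, $\Sigma_{\rm minibatch} = \lambda\frac{(1+\mu)\frac{N-S}{NS}+1-\mu}{2(1-\mu^2)}K^{-1}$ — directly into the definition of the approximation error, $L_{\rm train} = \frac{1}{2}{\rm Tr}[K\Sigma]$, and simplify the trace.

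For the general case, observe that the factor $K^{-1}$ sits on the \emph{left} of the bracket in $\Sigma_{\rm general}$, so left-multiplying by $K$ cancels it exactly without invoking any commutation relation: $K\Sigma_{\rm general} = \frac{1}{2}\left[Q + \frac{\lambda}{2(1+\mu)}I_D\right]$. Taking the trace and including the leading $\tfrac12$ from the definition gives $L_{\rm train} = \frac{1}{4}{\rm Tr}\left[Q + \frac{\lambda}{2(1+\mu)}I_D\right]$, the first line. For the minibatch case, $\Sigma_{\rm minibatch}$ is a scalar multiple of $K^{-1}$, hence $K\Sigma_{\rm minibatch}$ is that same scalar times $I_D$; using ${\rm Tr}[I_D]=D$ yields $L_{\rm train} = \lambda\frac{(1+\mu)\frac{N-S}{NS}+1-\mu}{4(1-\mu^2)}D$, the second line.

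There is essentially no obstacle here: the statement is an immediate corollary of the two $\Sigma$ formulas already established, and the only thing worth watching is that in both expressions the $K^{-1}$ factor appears on the left of $\Sigma$, so multiplication by $K$ on the left — exactly the form appearing in ${\rm Tr}[K\Sigma]$ — removes it cleanly, with no ordering ambiguity and no need for $[C,K]=0$. The qualitative content (finiteness of the error, and boundedness of the minibatch expression even as $\lambda k^*\to 2$) is inherited directly from the structure of $\Sigma$.
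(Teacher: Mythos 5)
Your proposal is correct and is essentially identical to the paper's proof, which likewise just substitutes the two expressions for $\Sigma$ into $L_{\rm train}=\frac{1}{2}{\rm Tr}[K\Sigma]$ and simplifies. The extra remark that the $K^{-1}$ factor sits on the left of $\Sigma$ and so cancels cleanly under left-multiplication by $K$ is a harmless (and accurate) elaboration of the same computation.
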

\begin{proof}
The proof is simple by substituting $\Sigma$ into the definition $L_{\rm train}=\frac{1}{2}{\rm Tr} [K\Sigma]$.
\end{proof}

\subsubsection{Proof of Theorem~\ref{thm: Adam}}\label{app: Adam}
\begin{proof}
Using the non-diagonal approximation, the preconditioning matrix at asymptotic time is
\begin{align}
    \Lambda &= \lambda\mathbb{E}[\mathbf{g}\mathbf{g}^{\rm T}]^{-\frac{1}{2}}\nonumber\\
    &= \lambda\mathbb{E}\left[(K\mathbf{w}+\eta)(K\mathbf{w}+\eta)^{\rm T}\right]^{-\frac{1}{2}}\nonumber\\
    &= \lambda(K\Sigma K+C)^{-\frac{1}{2}}\nonumber\\
    &= \frac{\lambda}{\sqrt{1+c}}(K\Sigma K)^{-\frac{1}{2}}.
\end{align}
Substituting it into Eq.~\eqref{eq: preconditioning matrix eq}, we have
\begin{equation}
    \Lambda K \Sigma + \Sigma K \Lambda - \Lambda K \Sigma K \Lambda = c \Lambda K\Sigma K \Lambda,
\end{equation}
which can be rewritten as
\begin{equation}
    \Lambda^{-1} K^{-1} + K^{-1}\Lambda^{-1} = (1+c)I_D.
\end{equation}
It can be solved to give that
\begin{equation}
    \Sigma = \frac{\lambda^2 (1+c)}{4}I_D.
\end{equation}
\end{proof}
\begin{remark}
The approximation error can be obtained easily as
\begin{equation}
    L_{\rm train}=\frac{1}{2}{\rm Tr}[K\Sigma]=\frac{\lambda^2 (1+c)}{8}{\rm Tr}[K].
\end{equation}
\end{remark}
% \section{To Do list}
% cite \cite{dieuleveut2018bridging,Hochreiter1997}
%\begin{enumerate}
%    \item more figures for the application section
%    \item regularization effect of large learning rate
%\end{enumerate}

\end{document}